\documentclass[english,twoside,11pt]{article}
\usepackage{jmlr2e}
\usepackage{amsmath}

\usepackage{sidecap}
\usepackage{subfigure}



\iftrue
\addtolength{\textwidth}{6mm}
\addtolength{\textheight}{6mm}
\addtolength{\topmargin}{-3mm}
\addtolength{\oddsidemargin}{-3mm}
\addtolength{\evensidemargin}{-3mm}
\fi




\newcommand{\E}{\mathbb{E}}
\newcommand{\sigm}{\mathop{\mathrm{sigmoid}}}


\usepackage{babel}

\begin{document}

\firstpageno{1}

\title{What Regularized Auto-Encoders Learn from the Data Generating Distribution}

\author{
\noindent Guillaume Alain and Yoshua Bengio\\
guillaume.alain@umontreal.ca, yoshua.bengio@umontreal.ca\\
\\
Department of Computer Science and Operations Research\\
University of Montreal\\
Montreal, H3C 3J7, Quebec, Canada
}

\editor{  } 

\maketitle

\begin{abstract}%
  What do auto-encoders learn about the underlying data generating distribution?  Recent work suggests that some auto-encoder variants do a good job of capturing the local manifold structure of data.  This paper clarifies some of these previous observations by showing that minimizing a particular form of regularized reconstruction error yields a reconstruction function that locally characterizes the shape of the data generating density. We show that the auto-encoder captures the score (derivative of the log-density with respect to the input). It contradicts previous interpretations of reconstruction error as an energy function. Unlike previous results, the theorems provided here are completely generic and do not depend on the parametrization of the auto-encoder: they show what the auto-encoder would tend to if given enough capacity and examples. These results are for a contractive training criterion we show to be similar to the denoising auto-encoder training criterion with small corruption noise, but with contraction applied on the whole reconstruction function rather than just encoder. Similarly to score matching, one can consider the proposed training criterion as a convenient alternative to maximum likelihood because it does not involve a partition function.  Finally, we show how an approximate Metropolis-Hastings MCMC can be setup to recover samples from the estimated distribution, and this is confirmed in sampling experiments.
\end{abstract}

\section{Introduction}

Machine learning is about capturing aspects of the unknown distribution
from which the observed data are sampled (the {\em data-generating
distribution}). For many learning algorithms and in particular
in {\em manifold learning}, the focus is on identifying
the regions (sets of points) in the space of examples where
this distribution concentrates, i.e., which configurations of the
observed variables are plausible. 

Unsupervised {\em representation-learning} algorithms attempt to
characterize the data-generating distribution through the discovery of a
set of features or latent variables whose variations capture most of the
structure of the data-generating distribution.  In recent years, a number
of unsupervised feature learning algorithms have been proposed that are
based on minimizing some form of {\em reconstruction error}, such as
auto-encoder and sparse coding
variants~\citep{Olshausen-97,Bengio-nips-2006,ranzato-07,
  Jain-Seung-08,ranzato-08,VincentPLarochelleH2008,Koray-08,
  Rifai+al-2011,Dauphin-et-al-NIPS2011,gregor-nips-11}.  An auto-encoder
reconstructs the input through two stages, an encoder function $f$ (which
outputs a learned representation $h=f(x)$ of an example $x$) and a decoder
function $g$, such that $g(f(x))\approx x$ for most $x$ sampled from the
data-generating distribution.  These feature learning algorithms can be
{\em stacked} to form deeper and more abstract representations. {\em Deep
  learning} algorithms learn multiple levels of representation, where the
number of levels is data-dependent.  There are theoretical arguments and
much empirical evidence to suggest that when they are well-trained, deep
learning
algorithms~\citep{Hinton06,Bengio-2009,HonglakL2009,Salakhutdinov2009,Bengio+Delalleau-ALT-2011-short,Bengio-et-al-icml2013}
can perform better than their shallow counterparts, both in terms of
learning features for the purpose of classification tasks and for
generating higher-quality samples.

Here we restrict ourselves to the case of continuous inputs
$x \in \mathbb{R}^d$ with the data-generating distribution being associated with an
unknown {\em target density} function, denoted $p$.
Manifold learning
algorithms assume that $p$ is concentrated in regions of lower 
dimension~\citep{Cayton-2005,Narayanan+Mitter-NIPS2010-short},
i.e., the training examples are by definition located very close to
these high-density manifolds.  In that context, the core objective of
manifold learning algorithms is to identify where the density concentrates.

Some important questions remain concerning many of feature learning
algorithms based on reconstruction error. 
Most importantly, {\em what is their training criterion learning about the input
  density}? Do these algorithms implicitly learn about the whole density or
only some aspect? 
If they capture the essence of the target
density, then can we formalize that link and in particular exploit it
to {\em sample from the model}? 
The answers may help to establish that these algorithms actually learn
{\em implicit density} models, which only define a density indirectly, e.g., 
through the estimation of statistics or through a generative procedure.
These are the questions to which this paper contributes.

The paper is divided in two main sections, along with detailed appendices with
proofs of the theorems.
Section 2 makes a direct link between denoising
auto-encoders~\citep{VincentPLarochelleH2008} and contractive 
auto-encoders~\citep{Rifai+al-2011}, justifying the interest
in the contractive training criterion studied in the rest of the paper.
Section 3 is the main contribution and regards the following question: when minimizing
that criterion, {\em what does an auto-encoder learn about 
the data generating density}? The main answer is that it estimates the {\em score}
(first derivative of the log-density), i.e., the direction in which
density is increasing the most, which also corresponds to the {\em local mean},
which is the expected value in a small ball around the current location.
It also estimates the Hessian (second derivative of the log-density).

Finally, Section 4 shows how having access to an estimator
of the score can be exploited to estimate energy differences,
and thus perform approximate MCMC sampling. This is achieved
using a Metropolis-Hastings MCMC in which the energy differences
between the proposal and the current state are approximated using
the denoising auto-encoder. Experiments on artificial datasets
show that a denoising auto-encoder can recover a good estimator
of the data-generating distribution, when we compare the samples
generated by the model with the training samples, projected into
various 2-D views for visualization.

\section{Contractive and Denoising Auto-Encoders}

Regularized auto-encoders (see~\citet{Bengio-Courville-Vincent-arxiv-2012} for a review and a longer
exposition) capture the structure of the training distribution thanks
to the productive opposition between reconstruction error and a regularizer.
An auto-encoder maps inputs $x$ to an internal representation (or code) $f(x)$
through the encoder function $f$, and then maps back $f(x)$ to the input space through
a decoding function $g$. The composition of $f$ and $g$ is called the reconstruction
function $r$, with $r(x)=g(f(x))$, and a reconstruction loss function $\ell$
penalizes the error made, with $r(x)$ viewed as a prediction of $x$.
When the auto-encoder is regularized, e.g., via a sparsity regularizer,
a contractive regularizer (detailed below), or a denoising form of regularization
(that we find below to be very similar to a contractive regularizer), the
regularizer basically attempts to make $r$ (or $f$) as simple as possible,
i.e., as constant as possible, as unresponsive to $x$ as possible. It means
that $f$ has to throw away some information present in $x$, or at least
represent it with less precision. On the other hand, to make reconstruction
error small on the training set, examples that are neighbors on a high-density manifold must
be represented with sufficiently different values of $f(x)$ or $r(x)$. Otherwise,
it would not be possible to distinguish and hence correctly reconstruct these examples. 
It means that the derivatives of $f(x)$ or $r(x)$ in the $x$-directions along the manifold
must remain large, while the derivatives (of $f$ or $r$) in the $x$-directions orthogonal to
the manifold can be made very small. This is illustrated in Figure~\ref{fig:manifold}.
In the case of principal components
analysis, one constrains the derivative to be exactly 0 in the directions
orthogonal to the chosen projection directions, and around 1 in the
chosen projection directions. In regularized auto-encoders, $f$ is non-linear,
meaning that it is allowed to choose different principal directions (those
that are well represented, i.e., ideally the manifold tangent directions)
at different $x$'s, and this allows a regularized auto-encoder with non-linear
encoder to capture non-linear manifolds.
Figure \ref{fig:1D-autoencoder} illustrates the extreme case when
the regularization is very strong ($r(\cdot)$ wants to be nearly
constant where density is high) in the special case where the
distribution is highly concentrated at three points (three training
examples). It shows the compromise between obtaining the identity
function at the training examples and having a flat $r$ near the
training examples, yielding a vector field $r(x)-x$ that points towards the
high density points.

\begin{figure}
\vspace*{-5mm}
\centering
\includegraphics[scale=1.2]{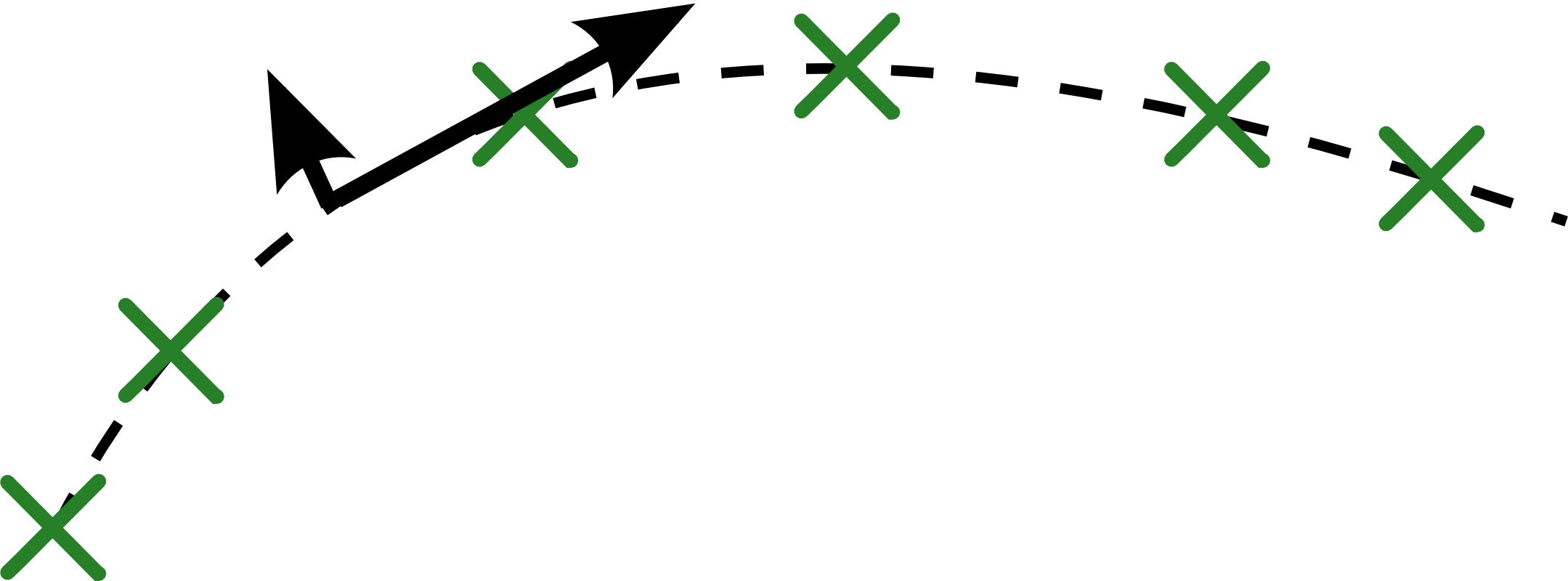}
\vspace{-3mm}
\caption{Regularization forces the auto-encoder to become less sensitive
to the input, but minimizing reconstruction error forces it to remain sensitive
to variations along the manifold of high density. Hence the representation
and reconstruction end up capturing well variations on the manifold while
mostly ignoring variations orthogonal to it.}
\label{fig:manifold}
\vspace{-2mm}
\end{figure}

\begin{figure}
\centering
\vspace*{-2mm}
\includegraphics[scale=0.8]{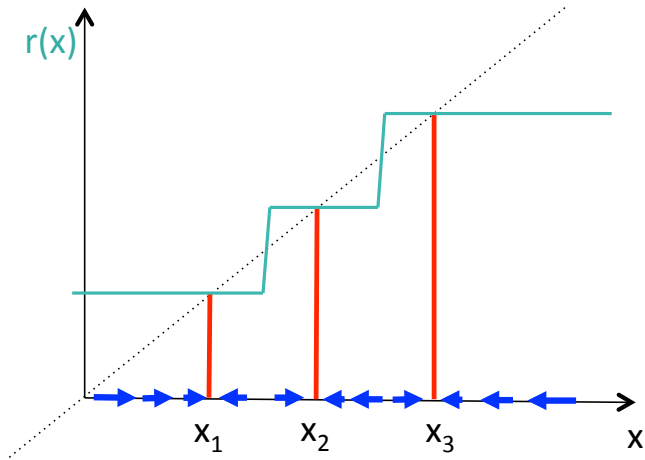}
\vspace*{-2mm}
\caption{\small The reconstruction function $r(x)$ (in turquoise)
which would be learned by a high-capacity auto-encoder
on a 1-dimensional input, i.e., minimizing
reconstruction error {\em at the training examples} $x_i$
(with $r(x_i)$ in red) while
trying to be as constant as possible otherwise.
The figure is used to exagerate and illustrate the effect of the regularizer
(corresponding to a large $\sigma^2$
in the loss function $\mathcal{L}$ later described by (\ref{eqn:RCAE_reconstruction_loss})).
The dotted line is
the identity reconstruction (which might be obtained without the regularizer).
The blue arrows shows the vector field of $r(x)-x$ pointing towards
high density peaks as estimated by the model, and estimating
the score (log-density derivative), as shown in this paper.
} \label{fig:1D-autoencoder}
\vspace*{-3mm}
\end{figure}


Here we show that the denoising auto-encoder~\citep{VincentPLarochelleH2008} with very small Gaussian
corruption and squared error loss is actually a particular kind of
contractive auto-encoder~\citep{Rifai+al-2011}, contracting the whole auto-encoder reconstruction function
rather than just the encoder, whose contraction penalty coefficient is the magnitude
of the perturbation. This was first suggested in~\citep{Rifai+al-2011-small}.


The contractive auto-encoder, or CAE~\citep{Rifai+al-2011}, is a particular
form of regularized auto-encoder which
is trained to minimize the following regularized
reconstruction error:
\begin{equation}
  {\cal L}_{CAE} = \E\left[ \ell(x,r(x))  + \lambda \left\|\frac{\partial f(x)}{\partial x}\right\|^2_F \right]
\label{eq:cae-crit}
\end{equation}
where $r(x)=g(f(x))$ and $||A||^2_F$ is the sum of the squares of the elements of $A$.
Both the squared loss $\ell(x,r)=||x-r||^2$ and the
cross-entropy loss $\ell(x,r)=-x\log r -(1-x)\log(1-r)$ have been
used, but here we focus our analysis on the squared loss because
of the easier mathematical treatment it allows. Note that success in 
minimizing the CAE criterion
strongly depends on the parametrization of $f$ and $g$ and in particular
on the tied weights constraint used, with $f(x)=\sigm(W x + b)$
and $g(h)=\sigm(W^T h + c)$. The above regularizing term forces $f$ (as well
as $g$, because of the tied weights) to be contractive, i.e.,
to have singular values less than 1~\footnote{Note that an auto-encoder
without any regularization would tend to find many leading singular values near 1 in
order to minimize reconstruction error, i.e., preserve input norm
in all the directions of variation present in the data.}. Larger values of $\lambda$
yield more contraction (smaller singular values) where it hurts reconstruction
error the least, i.e., in the local directions where there are only little or no
variations in the data. These typically are the directions orthogonal to the
manifold of high density concentration, as illustrated in Fig.~\ref{fig:1D-autoencoder}.

\vspace{1.5em}

The denoising auto-encoder, or DAE~\citep{VincentPLarochelleH2008},
is trained to minimize the following denoising criterion:
\begin{equation}
 {\cal L}_{DAE} = \E\left[ \ell(x,r(N(x))) \right]
\label{eq:dae-crit}
\end{equation}
where $N(x)$ is a stochastic corruption of $x$ and the expectation is over
the training distribution and the corruption noise source. Here we consider mostly
the squared loss and Gaussian noise corruption, again because it is easier to handle
them mathematically. In many cases, the exact same proofs can be applied to any
kind of additive noise, but Gaussian noise serves as a good frame of reference.

\begin{theorem}
\label{thm:DAE-optimal-solution}
Let $p$ be the probability density function of the data.
If we train a $DAE$ using the expected quadratic loss
and corruption noise $N(x) = x+\epsilon$ with
\[
\epsilon \sim \mathcal{N}\left(0, \sigma^2 I \right),
\]
then the optimal reconstruction function $r^{*}(x)$ will be given by
\begin{equation}
r^*(x) = \frac{ \mathbb{E}_{\epsilon} \left[ p(x-\epsilon) (x - \epsilon) \right]}{\mathbb{E}_{\epsilon} \left[ p(x-\epsilon) \right]} \label{eqn:opt-ratio}
\end{equation}
for values of $x$ where $p(x) \neq 0$.

Moreover, if we consider how the optimal reconstruction function
$r^{*}_{\sigma}(x)$ behaves asymptotically as $\sigma \rightarrow 0$,
we get that
\begin{equation}
r^*_\sigma(x) = x + \sigma^2 \frac{\partial \log p(x)}{\partial x} + o(\sigma^2) \hspace{1em} \textrm{as} \hspace{1em} \sigma \rightarrow 0. \label{eqn:opt-asympt}
\end{equation}
\end{theorem}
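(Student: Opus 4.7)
The plan is to prove the two parts of the theorem separately: first derive the closed-form expression (\ref{eqn:opt-ratio}) by pointwise minimization, then obtain the asymptotic expansion (\ref{eqn:opt-asympt}) by Taylor-expanding in $\epsilon$ and integrating against the Gaussian.

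For the closed-form, I would rewrite the DAE loss as a double integral over the clean data $x$ and the corrupted input $\tilde{x} = x + \epsilon$:
\[
\mathcal{L}_{DAE} = \int\!\!\int p(x)\, q_\sigma(\tilde{x}-x)\, \|x - r(\tilde{x})\|^2 \, dx\, d\tilde{x},
\]
where $q_\sigma$ is the density of $\mathcal{N}(0,\sigma^2 I)$. Since $r$ enters only through a convex quadratic in $x$ for each fixed $\tilde{x}$, the optimum can be obtained by differentiating the inner integral with respect to $r(\tilde{x})$ and setting it to zero. This gives the weighted mean
\[
r^*(\tilde{x}) = \frac{\int p(x)\, q_\sigma(\tilde{x}-x)\, x\, dx}{\int p(x)\, q_\sigma(\tilde{x}-x)\, dx},
\]
and since $q_\sigma$ is symmetric, the substitution $\epsilon = \tilde{x}-x$ rewrites this as the ratio in (\ref{eqn:opt-ratio}). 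The assumption $p(x)\neq 0$ is used to guarantee that the denominator does not vanish (in fact Gaussian smoothing ensures the denominator is strictly positive in a whole neighborhood of any such $x$).

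For the asymptotic statement, I would Taylor-expand $p(x-\epsilon)$ to second order around $x$ and push the expectation through the expansion. Using $\E_\epsilon[\epsilon] = 0$, $\E_\epsilon[\epsilon\epsilon^T] = \sigma^2 I$, and the vanishing of odd Gaussian moments, the denominator becomes $p(x) + \tfrac{\sigma^2}{2}\Delta p(x) + O(\sigma^4)$, while the numerator becomes $x\,p(x) + \sigma^2 \nabla p(x) + \tfrac{\sigma^2}{2}\,x\,\Delta p(x) + O(\sigma^4)$; the crucial $\sigma^2 \nabla p(x)$ term arises from the cross piece $\E_\epsilon[\epsilon\epsilon^T]\nabla p(x)$, while the corresponding cross piece in the denominator vanishes because $\E_\epsilon[\epsilon]=0$. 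Factoring $x$ out of the leading block of the numerator, subtracting $x$, and expanding $1/(1+O(\sigma^2))$ reduces the ratio to $x + \sigma^2 \nabla p(x)/p(x) + o(\sigma^2)$, which is (\ref{eqn:opt-asympt}) once one recognizes $\nabla p / p = \partial \log p/\partial x$.

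The main obstacle is making the Taylor remainders rigorous so that the $O(\cdot)$ symbols can legitimately be carried through the expectation and then through the division. This requires enough regularity on $p$ (say $p$ of class $C^3$ with derivatives whose Gaussian integrals are controlled, so that the third-order remainder has expectation $O(\sigma^3)$, and in fact $O(\sigma^4)$ after the symmetric cancellation of odd moments) together with a local positive lower bound on $p$ to tame $1/\text{denominator}$. Under such mild regularity, all steps are standard, and the $o(\sigma^2)$ error follows from $1/(1+u) = 1 - u + O(u^2)$ applied with $u = O(\sigma^2)$.
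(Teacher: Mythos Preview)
Your proposal is correct and follows essentially the same route as the paper: pointwise minimization of the quadratic in $r(\tilde{x})$ to obtain the ratio, then a Taylor expansion of $p$ combined with the vanishing of odd Gaussian moments and a geometric-series expansion of the reciprocal to extract the $\sigma^2\,\partial\log p/\partial x$ term. The only cosmetic difference is that the paper first splits off the leading $x$ (writing $r^*=x+\E[p\epsilon]/\E[p]$) before Taylor-expanding, whereas you expand numerator and denominator separately and keep the Laplacian terms explicit; both arrive at the same $o(\sigma^2)$ conclusion, and your remarks on the regularity needed to justify the remainder are in fact more careful than the paper's.
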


The proof of this result is found in the Appendix.
We make use of the small $o$ notation throughout this paper and
assume that the reader is familiar with asymptotic notation.
In the context of Theorem \ref{thm:DAE-optimal-solution}, it has to be understood
that all the other quantities except for $\sigma$ are fixed when
we study the effect of $\sigma \rightarrow 0$.
Also, note that the $\sigma$ in the index of $r_\sigma^*$ is to indicate
that $r_\sigma^*$ was chosen based on the value of $\sigma$.
That $\sigma$ should not be mistaken for a parameter to be learned.

Equation (\ref{eqn:opt-ratio}) reveals that the optimal DAE reconstruction function
at every point $x$ is given by a kind of convolution
involving the density function $p$, or weighted average from the points
in the neighbourhood of $x$, depending on how we would like to view it.
A higher noise level $\sigma$ means that a larger neighbourhood of $x$
is taken into account. Note that the total quantity of ``mass'' being
included in the weighted average of the numerator of (\ref{eqn:opt-ratio})
is found again at the denominator.

Gaussian noise is a simple case in the sense that it is additive and symmetrical, so
it avoids the complications that would occur when trying to integrate over the density of pre-images $x'$
such that $N(x')=x$ for a given $x$.
The ratio of those quantities that we have in equation (\ref{eqn:opt-ratio}), however, depends
strongly on the decision that we made to minimize the expected square error.

\vspace{1.5em}

When we look at the asymptotic behavior with equation (\ref{eqn:opt-asympt}),
the first thing to observe is that the leading term in the expansion of $r^*_\sigma(x)$ is $x$,
and then the remainder goes to $0$ as $\sigma \rightarrow 0$. When there is no noise left at all,
it should be clear that the best reconstruction target for any value $x$ would be that $x$ itself.

We get something even more interesting if we look at the second term of equation (\ref{eqn:opt-asympt})
because it gives us an estimator of the score from
\begin{equation}
\frac{\partial\log p(x)}{\partial x} = \left(r_\sigma^*(x)- x \right)/ {\sigma^2} + o(1) \hspace{1em} \textrm{as} \hspace{1em} \sigma \rightarrow 0.
\label{eqn:score-estimator}
\end{equation}
This result is at the core of our paper. It is what allows us to start from a 
trained DAE, and then recover properties of the training density $p(x)$
that can be used to sample from $p(x)$.

Most of the asymptotic properties that we get by considering
the limit as the Gaussian noise level $\sigma$ goes to $0$ could be derived from a
family of noise distribution that approaches a point mass distribution in a
relatively ``nice'' way.

\vspace{1.5em}

An interesting connection with contractive auto-encoders can also be observed
by using a Taylor expansion of the denoising auto-encoder loss and assuming only that
$r_\sigma(x)=x+o(1)$ as $\sigma \rightarrow 0$. In that case we get the following proposition.

\setcounter{theorem}{0}
\begin{proposition}
\label{prp:DAE-connection-RCAE}
Let $p$ be the probability density function of the data.
Consider a $DAE$ using the expected quadratic loss
and corruption noise $N(x) = x+\epsilon$, with $\epsilon \sim \mathcal{N}\left(0, \sigma^2 I \right)$.
If we assume that the non-parametric solutions $r_\sigma(x)$ satistfies
\[
r_\sigma(x)=x+o(1) \hspace{1em} \textrm{as} \hspace{1em} \sigma \rightarrow 0,
\]
then we can rewrite the loss as 
\[
\mathcal{L}_{\textrm{DAE }}(r_\sigma) = \E\left[\|r_\sigma(x) - x\|^2_2 + \sigma^2 \left\|\frac{\partial r_\sigma(x)}{\partial x}\right\|^2_F\right]
+ o(\sigma^2) \hspace{1em} \textrm{as} \hspace{1em} \sigma \rightarrow 0
\]
where the expectation is taken with respect to $X$, whose distribution is given by $p$.
\end{proposition}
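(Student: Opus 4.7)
The plan is a Taylor expansion argument: expand $r_\sigma(x+\epsilon)$ in $\epsilon$ around the clean point $x$, substitute into the squared loss $\|r_\sigma(x+\epsilon)-x\|^2$, and integrate against the isotropic Gaussian noise $\epsilon\sim\mathcal{N}(0,\sigma^2 I)$, exploiting the vanishing of odd Gaussian moments so that only even-order pieces survive. After the inner $\E_\epsilon$ one takes the outer expectation over $x\sim p$ to recover the stated identity.

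Writing $A = r_\sigma(x)-x$, $B = \frac{\partial r_\sigma(x)}{\partial x}\epsilon$, $C$ for the quadratic Hessian term of the Taylor expansion of $r_\sigma$, and $D$ for the cubic-and-higher Taylor remainder, one has $r_\sigma(x+\epsilon)-x = A+B+C+D$. Squaring produces ten pieces. The deterministic $\|A\|^2$ becomes the first summand $\|r_\sigma(x)-x\|^2$ of the claim, and the leading non-trivial contribution from the noise is
\[
\E_\epsilon\|B\|^2 \;=\; \sigma^2\,\mathrm{tr}\!\left(\left(\tfrac{\partial r_\sigma(x)}{\partial x}\right)^{\!\top}\tfrac{\partial r_\sigma(x)}{\partial x}\right) \;=\; \sigma^2\left\|\tfrac{\partial r_\sigma(x)}{\partial x}\right\|_F^2,
\]
which is exactly the Frobenius penalty the proposition claims. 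The odd-in-$\epsilon$ cross terms $\E_\epsilon[A^\top B]$ and $\E_\epsilon[B^\top C]$ vanish outright by Gaussian symmetry. The remaining pieces $\E_\epsilon\|C\|^2$, $\E_\epsilon\|D\|^2$, $\E_\epsilon[B^\top D]$, $\E_\epsilon[A^\top D]$, and $\E_\epsilon[C^\top D]$ are all of order $\sigma^4$ or higher and are absorbed into $o(\sigma^2)$.

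The one delicate term, and the sole reason the hypothesis $r_\sigma(x) = x + o(1)$ is required, is $2A^\top\E_\epsilon[C]$. Since $\E_\epsilon[C]$ equals $\sigma^2/2$ times the vector of componentwise Laplacians of $r_\sigma$, this product is $O(\sigma^2)$ on its own and would pollute the leading Frobenius coefficient; the assumption $A = o(1)$ is precisely what degrades it to $o(\sigma^2)$, matching the remainder in the claim. Beyond this point the argument is routine bookkeeping of asymptotic remainders, modulo the tacit regularity assumption (natural in the non-parametric setting of the paper) that the higher derivatives of $r_\sigma$ remain uniformly bounded in a neighborhood as $\sigma\to 0$, which is what justifies the $O(\sigma^4)$ control on the Taylor tail.
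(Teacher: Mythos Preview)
Your proof is correct and follows essentially the same Taylor-expansion approach as the paper: expand $r_\sigma(x+\epsilon)$ around $x$, use Gaussian moments to kill the odd-in-$\epsilon$ cross terms and extract the $\sigma^2\left\|\partial r_\sigma/\partial x\right\|_F^2$ contribution from $\E_\epsilon\|B\|^2$, then invoke the hypothesis $r_\sigma(x)-x=o(1)$ to push the $A$-times-$\mathcal{O}(\sigma^2)$ cross term into $o(\sigma^2)$. You are simply more explicit than the paper, which bundles your $C$ and $D$ into a single $\mathcal{O}(\epsilon^2)$ remainder and writes the delicate cross term directly as $o(1)\,\mathcal{O}(\sigma^2)$ without isolating the Hessian piece.
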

The proof is in Appendix and uses a simple Taylor expansion around $x$.

Proposition \ref{prp:DAE-connection-RCAE} shows that {\em the DAE with small corruption of variance $\sigma^2$
is similar to a contractive auto-encoder with penalty coefficient $\lambda=\sigma^2$} but
where the contraction is imposed explicitly on the whole reconstruction function
$r(\cdot)=g(f(\cdot))$ rather than on $f(\cdot)$ alone\footnote{In the CAE there is a also
a contractive effect on $g(\cdot)$ as a side
effect of the parametrization with weights tied between $f(\cdot)$ and $g(\cdot)$.}.

This analysis motivates the definition of the \emph{reconstruction contractive auto-encoder} (RCAE),
a variation of the CAE where loss function is instead the squared reconstruction loss
plus contractive penalty on the reconstruction:
\begin{equation}
\label{eqn:RCAE_reconstruction_loss}
{\cal L}_{\textrm{RCAE}} = \E\left[\|r(x) - x\|^2_2 + \sigma^2 \left\|\frac{\partial r(x)}{\partial x}\right\|^2_F\right].
\end{equation}
This is an analytic version of the denoising criterion with small noise $\sigma^2$,
and also corresponds to a contractive auto-encoder with contraction on both $f$ and $g$, i.e., on $r$.

Because of the similarity between DAE and RCAE when taking
$\lambda = \sigma^2$ and because the semantics of $\sigma^2$ is clearer
(as a squared distance in input space), we will denote $\sigma^2$ for the penalty term coefficient
in situations involving RCAE.
For example, in the statement of Theorem \ref{thm:calcvarminloss},
this $\sigma^2$ is just a positive constant;
there is no notion of additive Gaussian noise,
i.e., $\sigma^2$ does not explicitly refer to a variance,
but using the notation $\sigma^2$ makes it easier to
intuitively see the connection to the DAE setting.

The connection between DAE and RCAE established in Proposition \ref{prp:DAE-connection-RCAE}
motivates the following Theorem \ref{thm:calcvarminloss} as an alternative
way to achieve a result similar to that of Theorem \ref{thm:DAE-optimal-solution}.
In this theorem we study the asymptotic behavior of the RCAE solution.

\begin{theorem}
\label{thm:calcvarminloss} 

Let $p$ be a probability density function that is continuously differentiable
once and with support $\mathbb{R}^{d}$ (i.e. $\forall x\in\mathbb{R}^{d}$
we have $p(x)\neq0$). Let $\mathcal{L}_{\sigma}$ be the loss function defined
by
\begin{equation}\label{eqn:loss_r_in_thm}
\mathcal{L}_{\sigma}(r)=\int_{\mathbb{R}^{d}}p(x)\left[\left\Vert r(x)-x\right\Vert _{2}^{2}+{\sigma^2}\left\Vert \frac{\partial r(x)}{\partial x}\right\Vert _{F}^{2}\right]dx
\end{equation}
for $r:\mathbb{R}^{d}\rightarrow\mathbb{R}^{d}$ assumed to be
differentiable twice, and $0 \leq {\sigma^2}\in\mathbb{R}$ used as factor
to the penalty term.

Let $r_{\sigma}^*(x)$ denote the optimal function that minimizes
$\mathcal{L}_{\sigma}$. Then we have that

\begin{equation}
r_{\sigma}^*(x) = x + {\sigma^2}\frac{\partial\log p(x)}{\partial x}+o({\sigma^2})\hspace{1em}\textrm{as}\hspace{1em}{\sigma} \rightarrow 0.\nonumber
\end{equation}

Moreover, we also have the following expression for the derivative

\begin{equation}\label{eq:calcvarminloss-drx}
\frac{\partial r_{\sigma}^*(x)}{\partial x} = I + {\sigma^2}\frac{\partial^2\log p(x)}{\partial x^2} + o({\sigma^2})\hspace{1em}\textrm{as}\hspace{1em}{\sigma} \rightarrow 0.
\end{equation}

Both these asymptotic expansions are to be understood
in a context where we consider $\left\{r_{\sigma}^*(x)\right\}_{{\sigma} \geq 0}$
to be a family of optimal functions minimizing $\mathcal{L}_{\sigma}$
for their corresponding value of ${\sigma}$.
The asymptotic expansions are applicable point-wise in $x$, that is,
with any fixed $x$ we look at the behavior as ${\sigma} \rightarrow 0$.
\end{theorem}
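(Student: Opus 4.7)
The plan is to derive the Euler--Lagrange equation associated with the functional $\mathcal{L}_\sigma$ and then solve it perturbatively in $\sigma^2$.

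First I would apply calculus of variations. Write $r = (r_1, \ldots, r_d)$ and treat each component separately. The integrand of $\mathcal{L}_\sigma$ is $p(x) \sum_k (r_k(x) - x_k)^2 + \sigma^2 p(x) \sum_{k,j} (\partial_j r_k)^2$, which depends on $r_k$ and its first partial derivatives. Taking the first variation with respect to a smooth compactly supported perturbation $\eta_k$ and integrating by parts (the boundary terms vanish) yields, for each $k$, the pointwise equation
\begin{equation*}
p(x)\bigl(r_k(x) - x_k\bigr) \;=\; \sigma^2 \sum_{j} \frac{\partial}{\partial x_j}\!\left( p(x)\, \frac{\partial r_k(x)}{\partial x_j}\right).
\end{equation*}
Since $p > 0$ everywhere, dividing by $p(x)$ and expanding the derivative of the product gives the fundamental PDE
\begin{equation*}
r_k(x) - x_k \;=\; \sigma^2 \Bigl[\,\nabla \log p(x) \cdot \nabla r_k(x) \;+\; \Delta r_k(x)\,\Bigr].
\end{equation*}

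Second, I would extract the asymptotic expansion by bootstrapping. The right-hand side carries an explicit $\sigma^2$ factor, so for any family $\{r_\sigma^*\}$ of solutions with bounded derivatives one immediately has $r_\sigma^*(x) - x = O(\sigma^2)$, and hence $\nabla r_\sigma^* = I + O(\sigma^2)$ and $\Delta r_\sigma^* = O(\sigma^2)$. Plugging these back into the PDE, the term $\nabla \log p(x) \cdot \nabla r_k(x)$ becomes $\partial_k \log p(x) + O(\sigma^2)$, while $\Delta r_k$ is $O(\sigma^2)$, so
\begin{equation*}
r_{\sigma,k}^*(x) - x_k \;=\; \sigma^2 \frac{\partial \log p(x)}{\partial x_k} \;+\; o(\sigma^2),
\end{equation*}
which is the first claim of the theorem.

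Third, for the derivative expansion (equation~\eqref{eq:calcvarminloss-drx}), I would not simply differentiate the remainder in the previous display (which is not generally legitimate). Instead I would differentiate the PDE itself with respect to $x_i$, obtaining an expression for $\partial_i r_k - \delta_{ik}$ as $\sigma^2$ times a combination of derivatives of $r_k$ and of $\log p$. Bootstrapping once more: substituting the already-established zeroth-order $\nabla r = I + O(\sigma^2)$ and $\nabla\nabla r = O(\sigma^2)$ into the differentiated PDE yields
\begin{equation*}
\frac{\partial r_{\sigma,k}^*(x)}{\partial x_i} \;=\; \delta_{ik} \;+\; \sigma^2 \frac{\partial^2 \log p(x)}{\partial x_i\, \partial x_k} \;+\; o(\sigma^2),
\end{equation*}
which is exactly~\eqref{eq:calcvarminloss-drx} in matrix form.

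The main obstacle is the regularity question underlying the bootstrapping: in order for the $O(\sigma^2)$ estimates on $r_\sigma^*$ and its derivatives to feed back consistently into the PDE, one needs the family $\{r_\sigma^*\}$ to be uniformly controlled in a $C^2$ sense as $\sigma \to 0$. The statement of the theorem takes the asymptotics pointwise in $x$ and the existence of a twice-differentiable minimizing family is built into the hypothesis, so the PDE derivation is rigorous; nevertheless, the delicate point is ensuring that the little-$o$ remainder controlling $r_\sigma^*(x) - x - \sigma^2 \nabla \log p(x)$ at the function level really does propagate to a little-$o$ remainder at the derivative level, which is why I prefer to differentiate the Euler--Lagrange equation itself rather than the asymptotic expansion of $r_\sigma^*$.
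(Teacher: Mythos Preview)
Your proposal is correct and follows essentially the same route as the paper: derive the Euler--Lagrange equation $r_k - x_k = \sigma^2[\nabla\log p\cdot\nabla r_k + \Delta r_k]$, then bootstrap by feeding the equation back into itself to peel off the leading $\sigma^2$ term, and handle the Jacobian claim by differentiating the PDE rather than the asymptotic expansion. The paper carries out the bootstrapping a bit more mechanically (literally substituting the right-hand side into itself and then differentiating twice to show the Laplacian term contributes only at order $\sigma^4$), but the logical content is the same; your explicit acknowledgement of the $C^2$-uniformity issue is in fact more careful than the paper's own treatment, which leaves that point implicit.
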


The proof is given in the appendix and uses the Euler-Lagrange
equations from the calculus of variations.

\section{Minimizing the Loss to Recover Local Features of $p(\cdot)$}
\label{seq:core-discussion}

One of the central ideas of this paper is that in a non-parametric setting
(without parametric constraints on $r$), 
we have an asymptotic formula (as the noise level ${\sigma} \rightarrow 0$)
for the optimal reconstruction function for the DAE and RCAE
that allows us to recover the score $\frac{\partial \log p(x)}{\partial x}$.

A DAE is trained with a method that knows nothing about $p$,
except through the use of training samples to minimize a loss function,
so it comes as a surprise that we can compute the score of $p$ at
any point $x$.

In the following subsections we explore the consequences and the
practical aspect of this.

\subsection{Empirical Loss}

In an experimental setting, the expected loss (\ref{eqn:loss_r_in_thm}) is
replaced by the empirical loss

\[
\hat{\mathcal{L}} = \frac{1}{N} \sum_{n=1}^N
   \left(
      \left\Vert r(x^{(n)}) - x^{(n)} \right\Vert _2^2
      + {\sigma^2} \left\Vert
                     \left.
                          \frac{\partial r(x)}{\partial x}
                     \right|_{x = x^{(n)}}
                \right\Vert _F^2
   \right)
   \label{eqn:empirical-loss}
\]

based on a sample $\left\{ x^{(n)} \right\}_{n=1}^N$ drawn from $p(x)$.

Alternatively, the auto-encoder is trained online (by stochastic
gradient updates) with a stream
of examples $x^{(n)}$, which corresponds to performing stochastic
gradient descent on the expected loss (\ref{eqn:loss_r_in_thm}). In both
cases we obtain an auto-encoder that approximately minimizes
the expected loss.

An interesting question is the following: what can we infer
from the data generating density when given an auto-encoder
reconstruction function $r(x)$?

The premise is that this auto-encoder $r(x)$ was
trained to approximately minimize a loss function that has exactly
the form of (\ref{eqn:loss_r_in_thm}) for some ${\sigma^2}>0$. This is assumed to have 
been done through minimizing the empirical loss and the distribution
$p$ was only available indirectly through the samples 
$\left\{ x^{(n)} \right\}_{n=1}^N$. We do not have access to $p$
or to the samples. We have only $r(x)$ and maybe ${\sigma^2}$.

We will now discuss the usefulness of $r(x)$ based on different
conditions such as the model capacity and the value of ${\sigma^2}$.

\subsection{Perfect World Scenario}
\label{sec:perfect-world-scenario}

As a starting point, we will assume that we are in a perfect
situation, i.e., with no constraint on $r$ (non-parametric setting),
an infinite amount of training data, and a perfect minimization. 
We will see what can be done to recover information about $p$ in that ideal case.
Afterwards, we will drop certain assumptions one by one and
discuss the possible paths to getting back some information about $p$.

We use notation $r_{\sigma}(x)$ when we want to
emphasize the fact that the value of $r(x)$ came from minimizing
the loss with a certain fixed ${\sigma}$.

Suppose that $r_{\sigma}(x)$ was trained with an infinite sample
drawn from $p$.
Suppose also that it had infinite (or sufficient) model capacity
and that it is capable of achieving the minimum of the
loss function (\ref{eqn:loss_r_in_thm}) while satisfying the
requirement that $r(x)$ be twice differentiable.
Suppose that we know the value of ${\sigma}$ and that
we are working in a computing environment of arbitrary precision
(i.e. no rounding errors).

As shown by Theorem \ref{thm:DAE-optimal-solution} and Theorem \ref{thm:calcvarminloss}, we would be able
to get numerically the values of
$\frac{\partial \log p(x)}{\partial x}$ at any point $x\in\mathbb{R}^d$
by simply evaluating
\begin{equation}
    \frac{r_{\sigma}(x) - x}{{\sigma^2}} \rightarrow \frac{\partial \log p(x)}{\partial x}
    \hspace{1em} \textrm{as} \hspace{1em} {\sigma} \rightarrow 0. \label{eqn:rx-x-trick}
\end{equation}
In the setup described, we do not get to pick values of
${\sigma}$ so as to take the limit ${\sigma} \rightarrow 0$.
However, it is assumed that ${\sigma}$ is already sufficiently
small that the above quantity is close to
$\frac{\partial \log p(x)}{\partial x}$ for all intents
and purposes.

\subsection{Simple Numerical Example}
\label{sec:simple-numerical-example}

To give an example of this in one dimension,
we will show what happens when we train a non-parametric
model $\hat{r}(x)$ to minimize numerically the loss relative to $p(x)$.
We train both a DAE and an RCAE in this fashion by minimizing
a discretized version of their losses defined by equations
(\ref{eq:dae-crit}) and (\ref{eqn:RCAE_reconstruction_loss}).
The goal here is to show that, for either a DAE or RCAE, the approximation of the score
that we get through equation (\ref{eqn:score-estimator})
gets arbitrarily close to the actual score
$\frac{\partial}{\partial x} \log p(x)$ as $\sigma \rightarrow 0$.

The distribution $p(x)$ studied is shown in Figure \ref{fig:one-dim-reconst2} (left)
and it was created to be simple enough to illustrate the
mechanics. We plot $p(x)$ in Figure \ref{fig:one-dim-reconst2} (left) along
with the score of $p(x)$ (right).

\begin{figure}[ht]
\centering
\mbox{
    \subfigure[$p(x)=\frac{1}{Z}\exp(-E(x))$]{
      \includegraphics[scale=0.35]{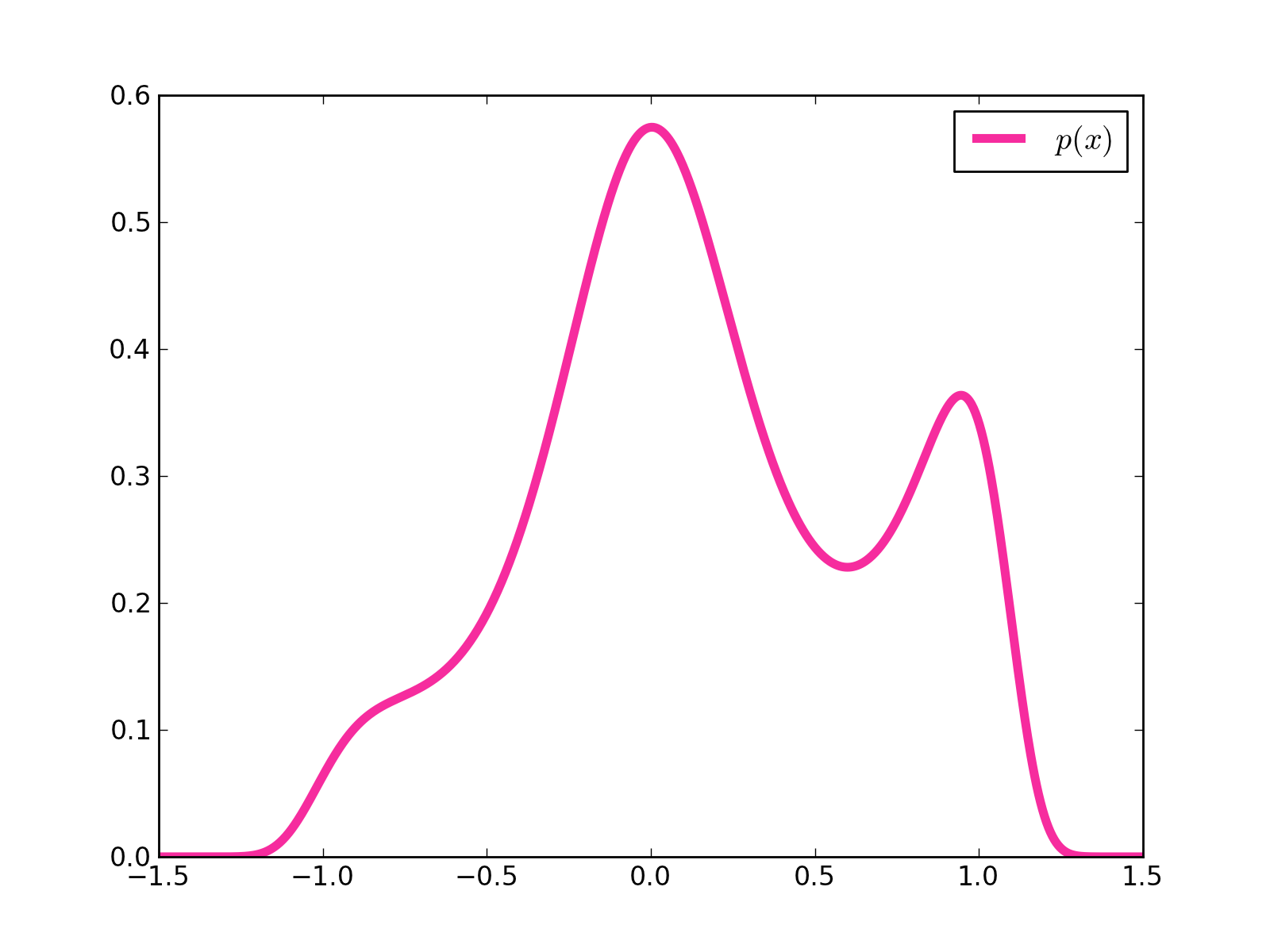}
      \label{fig:one-dim-pdf}
    }
    \quad
    \subfigure[$\frac{\partial}{\partial x} \log p(x) = - \frac{\partial}{\partial x} E(x)$]{
      \includegraphics[scale=0.35]{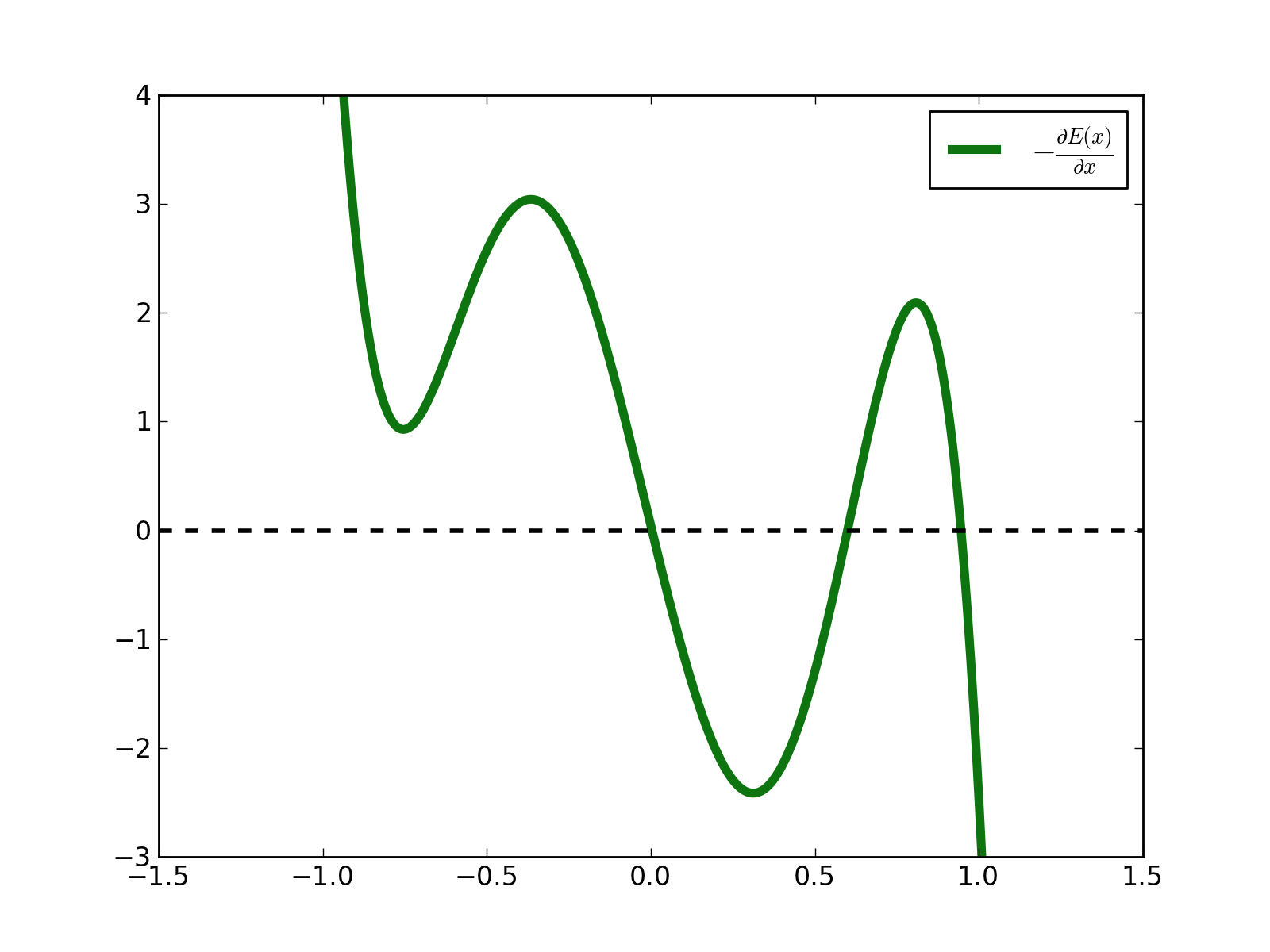}
      \label{fig:one-dim-grad-E}
    }
}
\caption{the density $p(x)$ and its score for a simple one-dimensional example.}
\label{fig:one-dim-reconst2}
\end{figure}

The model $\hat{r}(x)$ is fitted by dividing the interval $\left[ -1.5,1.5 \right]$
into $M=1000$ partition points $x_1,\ldots,x_M$ evenly separated by a distance $\Delta$.
The discretized version of the RCAE loss function is
\begin{equation}
\sum_{i=1}^M p(x_i) \Delta \left( \hat{r}(x_i) - x_i \right)^2
+ {\sigma^2} \sum_{i=1}^{M-1} p(x_i) \Delta \left( \frac{\hat{r}(x_{i+1}) - \hat{r}(x_i)}{\Delta} \right)^2.\label{eqn:loss_r_in_thm_r_numerical_partition}
\end{equation}
Every value $\hat{r}(x_i)$ for $i=1,\ldots,M$ is treated as a free parameter.
Setting to $0$ the derivative with respect to the $\hat{r}(x_i)$ yields
a system of linear equations in $M$ unknowns that we can solve exactly.
From that RCAE solution $\hat{r}$ we get an approximation of the score of $p$
at each point $x_i$. A similar thing can be done for the DAE by using
a discrete version of the exact solution (\ref{eqn:opt-ratio})
from Theorem \ref{thm:DAE-optimal-solution}.
We now have two ways of approximating the score of $p$.

In Figure \ref{fig:score-movie-frames} we compare the approximations
to the actual score of $p$ for decreasingly small values of $\sigma \in \{ 1.00, 0.31, 0.16, 0.06\}$.

\begin{figure}[htb]
\centering
\includegraphics[width=0.45\textwidth]{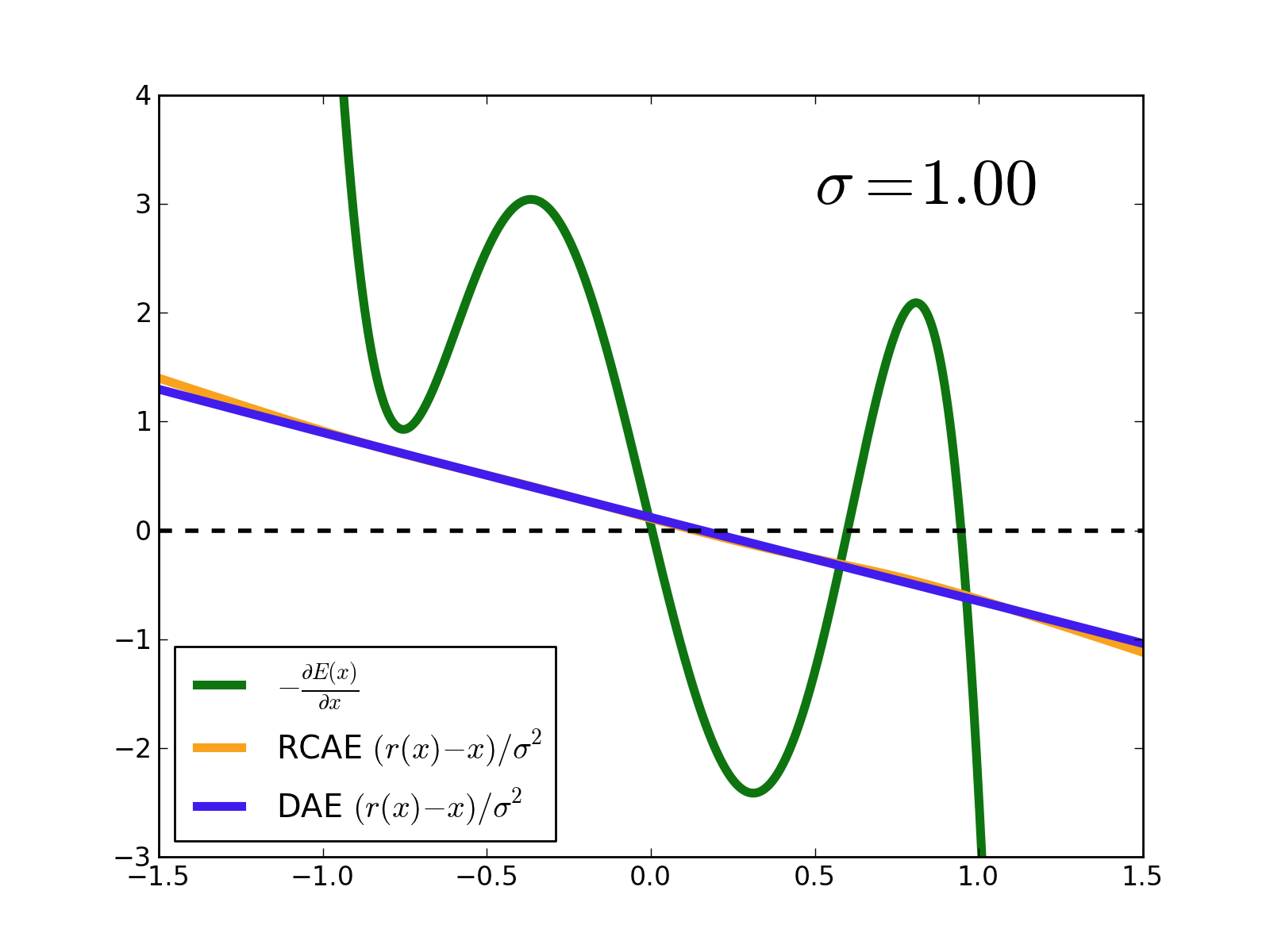}
\includegraphics[width=0.45\textwidth]{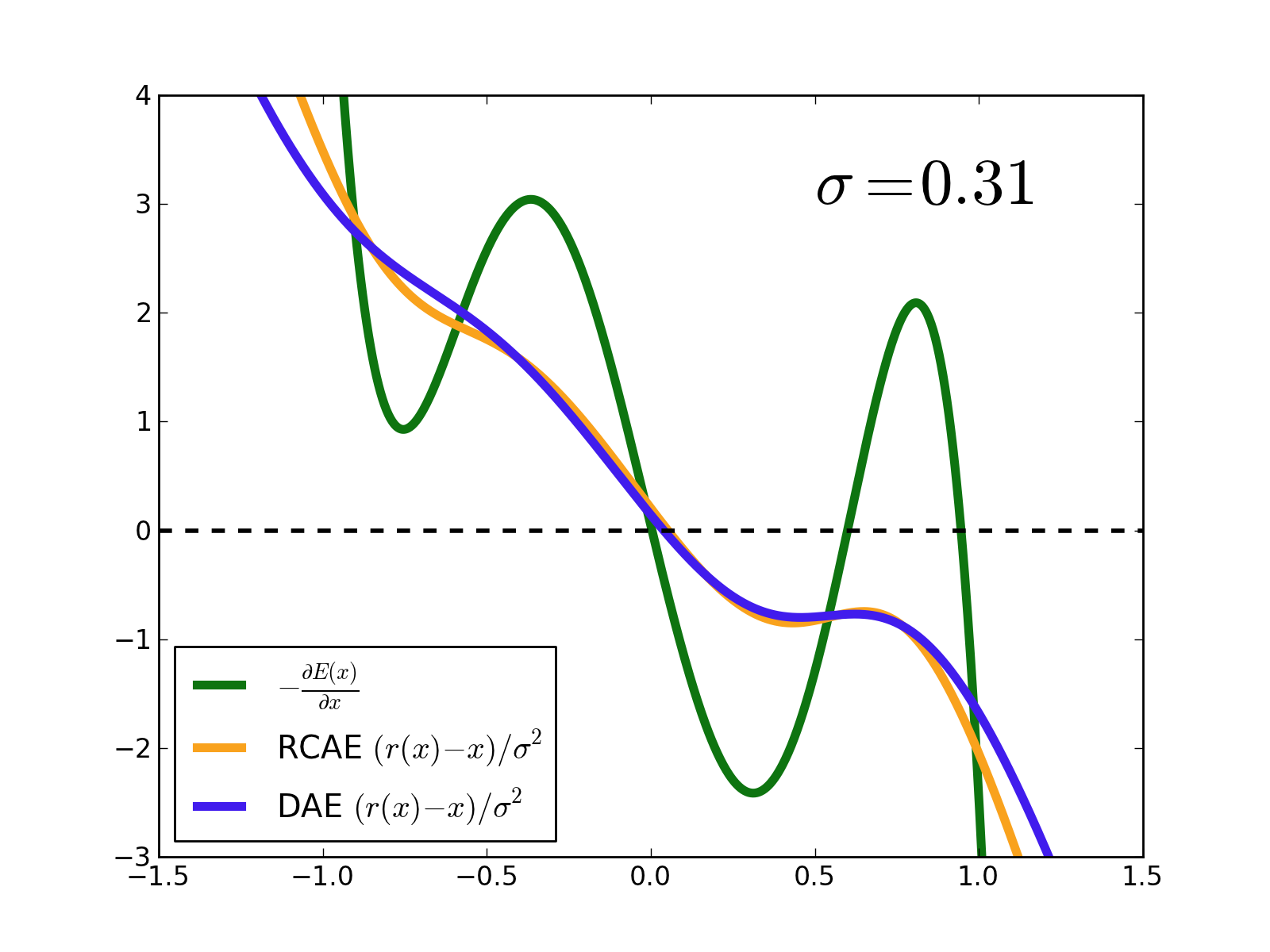}
\includegraphics[width=0.45\textwidth]{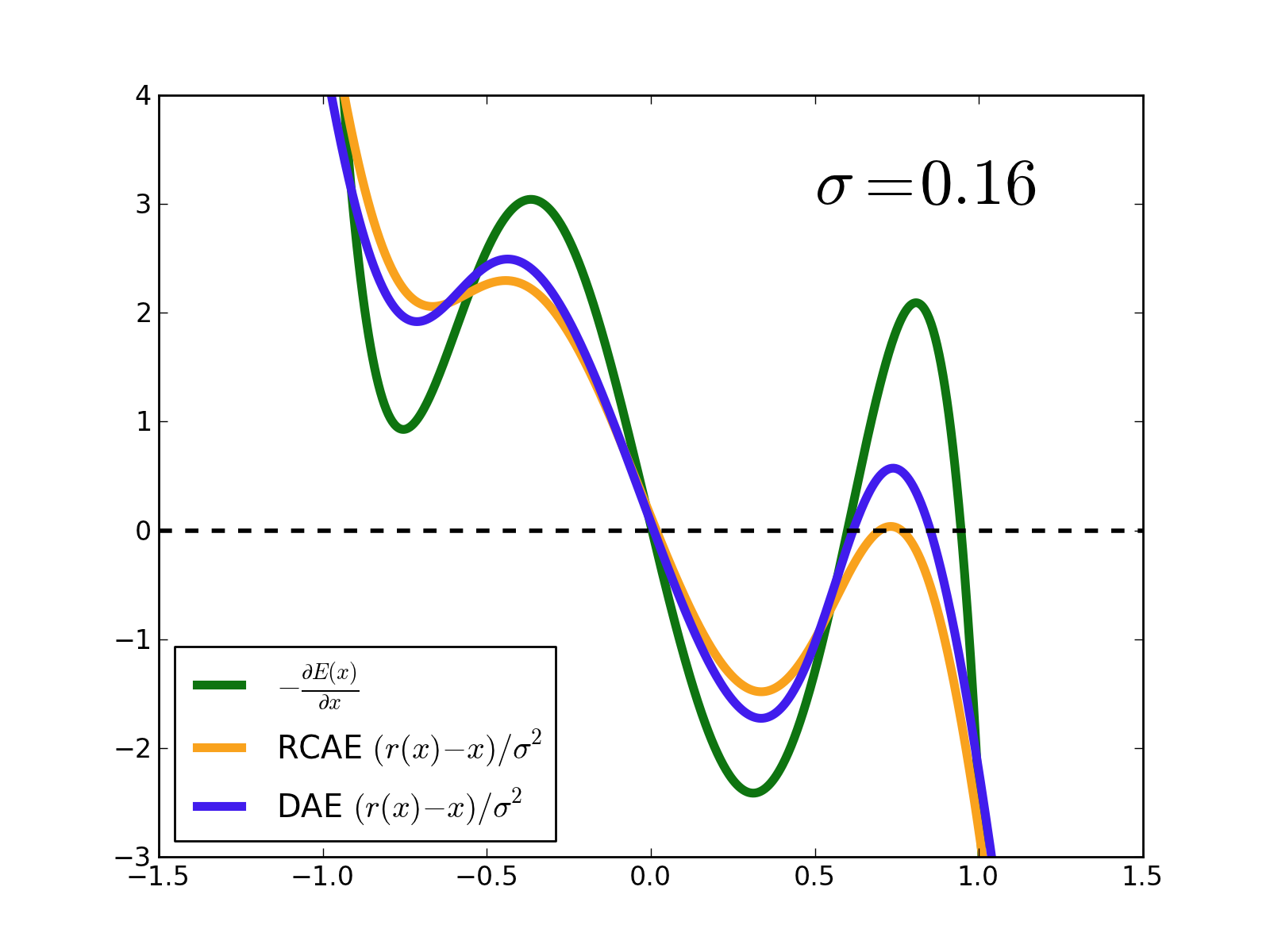}
\includegraphics[width=0.45\textwidth]{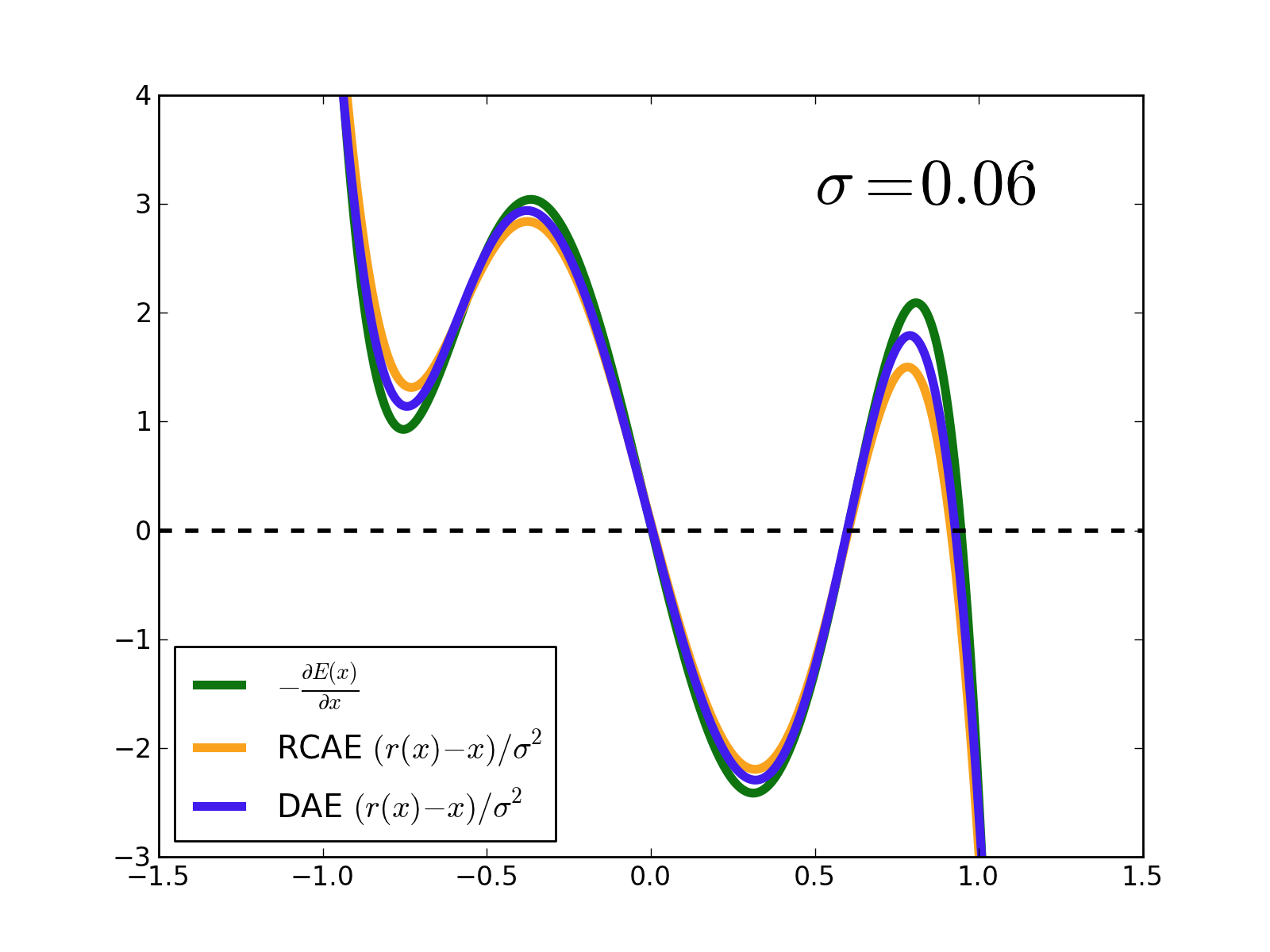}
\caption{Comparing the approximation of the score of $p$ given by discrete versions of
optimally trained auto-encoders with infinite capacity. The approximations given by the RCAE
are in orange while the approximations given by the DAE are in purple. The results are shown
for decreasing values of $\sigma \in \{1.00, 0.31, 0.16, 0.06\}$ that have been selected for
their visual appeal.
\newline
As expected, we see in that the RCAE (orange)
and DAE (purple) approximations of the score are close to each other
as predicted by Proposition \ref{prp:DAE-connection-RCAE}. Moreover, they are also
converging to the true score (green)
as predicted by Theorem \ref{thm:DAE-optimal-solution} and Theorem \ref{thm:calcvarminloss}.
}
\label{fig:score-movie-frames}
\end{figure}

\subsection{Vector Field Around a Manifold}

We extend the experimentation of section \ref{sec:simple-numerical-example}
to a 1-dimensional manifold in 2-D space, in which one can
visualize $r(x)-x$ as a vector field, and we go from the non-parametric estimator of the previous
section to an actual auto-encoder trained by numerically minimizing the regularized
reconstruction error.

Two-dimensional data points $(x,y)$ were generated along a spiral according to the following
equations:

\[
x = 0.04 \sin(t) , \hspace{1em} y = 0.04 \cos(t), \hspace{1em} t \sim \textrm{Uniform}\left(3,12 \right).
\]

A denoising auto-encoder was trained with Gaussian corruption noise $\sigma=0.01$.
The encoder is $f(x)=\tanh(b + Wx)$ and the decoder is $g(h)=c+Vh$. The parameters
$(b,c,V,W)$ are optimized by BFGS to minimize the average squared error, 
using a fixed training set of $10\ 000$ samples
(i.e. the same corruption noises were sampled once and for all). We found better
results with untied weights, and BFGS gave more accurate models than stochastic
gradient descent. We used $1000$ hiddens units and ran BFGS for 1000 iterations.

The non-convexity of the problem makes it such that the solution found depends
on the initialization parameters. The random corruption noise used
can also influence the final outcome. Moreover, the fact that we are using a
finite training sample size with reasonably small noise may allow for
undesirable behavior of $r$ in regions far away from the training samples.
For those reasons, we trained the model multiple times and selected two of the
most visually appealing outcomes. These are found in Figure~\ref{fig:two-spiral-graphics}
which features a more global perspective along with a close-up view.

\begin{figure}[h]
\centering
\mbox{
    \subfigure[][$r(x)-x$ vector field, acting as sink, zoomed out]{
      \includegraphics[scale=0.36]{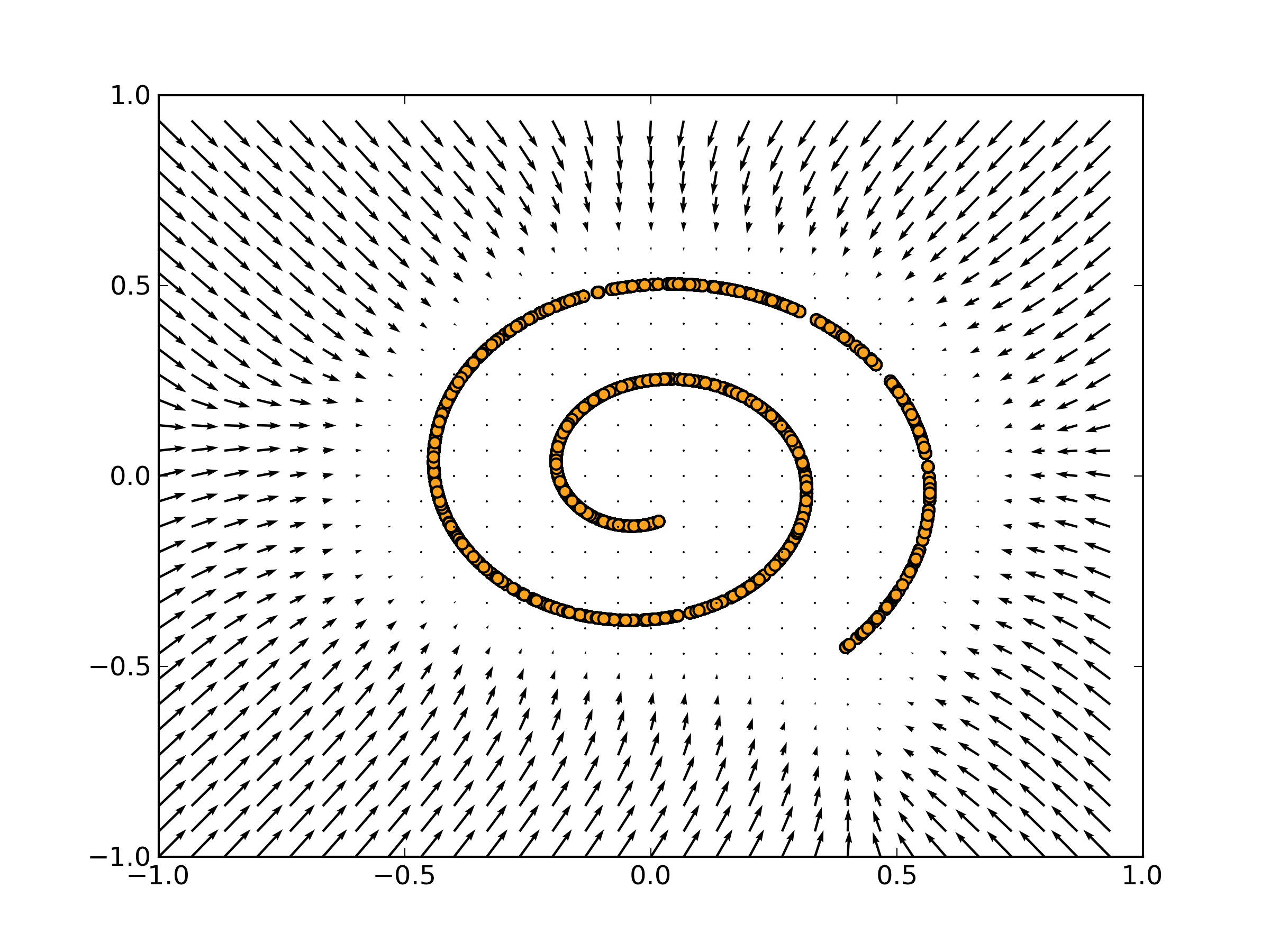}
      \label{fig1:spiral-reconstruction-grid-full}
    }
    \quad
    \subfigure[][$r(x)-x$ vector field, close-up]{
      \includegraphics[scale=0.36]{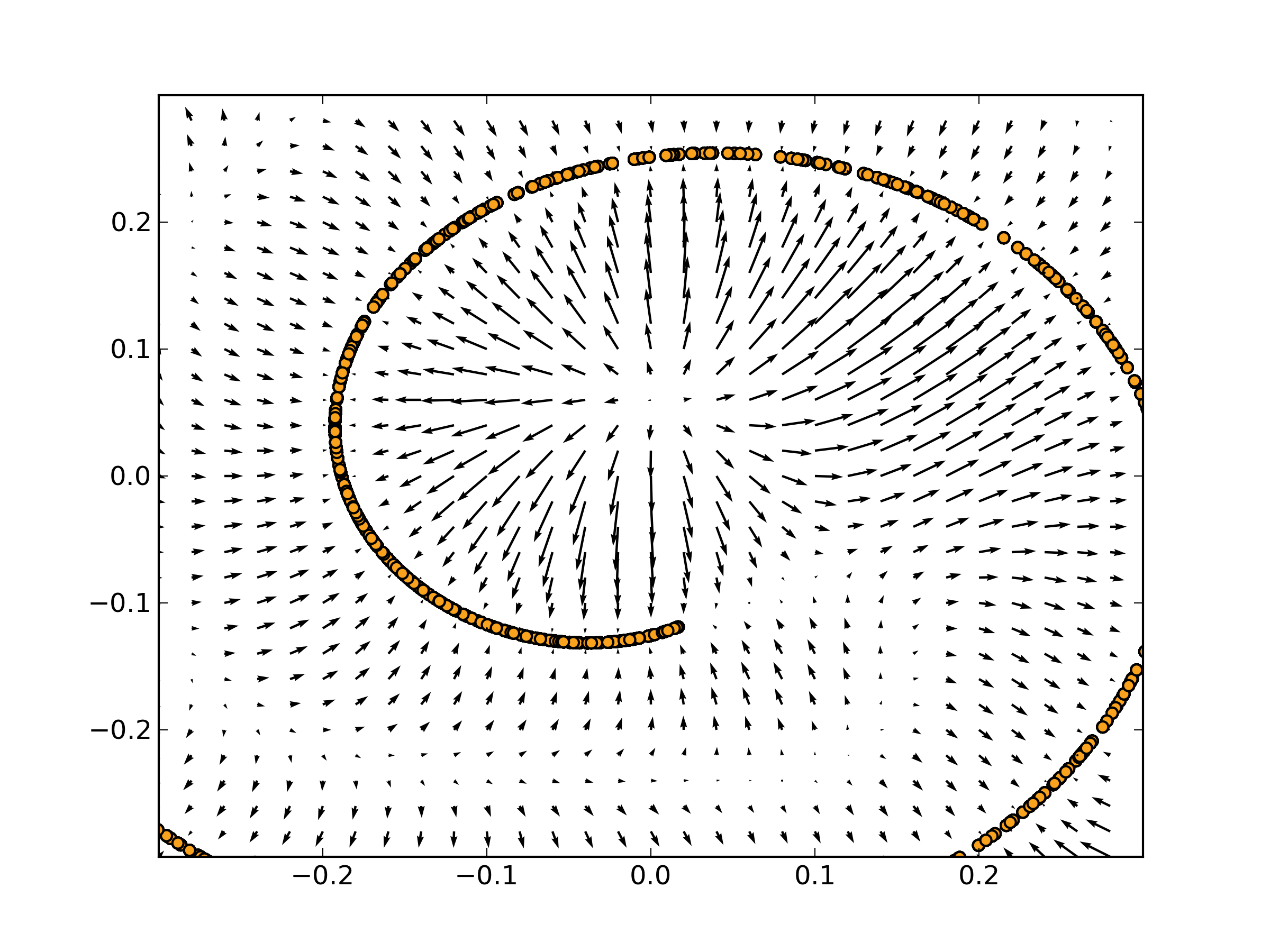}
      \label{fig2:spiral-reconstruction-grid-zoomed-center}
    }
}
\caption{The original 2-D data from the data generating density $p(x)$ is plotted along
with the vector field defined by the values of $r(x)-x$
for trained auto-encoders (corresponding to the estimation of the score $\frac{\partial \log p(x)}{\partial x}$).}
\label{fig:two-spiral-graphics}
\end{figure}

Figure~\ref{fig:two-spiral-graphics} shows the data along with the learned score function
(shown as a vector field).
We see that that the vector field points
towards the nearest high-density point on the data manifold.
The vector field is close to zero near the manifold (i.e. the
reconstruction error is close to zero), also corresponding to peaks
of the implicitly estimated density. The points on the manifolds play the role
of sinks for the vector field. {\em Other places where reconstruction error may be low,
but where the implicit density is not high, are sources of the vector field.}
In Figure~\ref{fig2:spiral-reconstruction-grid-zoomed-center} we can see
that we have that kind of behavior halfway between two sections
of the manifold. This shows that reconstruction error plays a very different
role as what was previously hypothesized: whereas~\citet{ranzato-08} viewed
reconstruction error as an {\em energy function}, our analysis suggests
that in regularized auto-encoders, it is the norm of an approximate score,
i.e., the derivative of the energy w.r.t. input. Note that the norm of the
score should be small near training examples (corresponding to local maxima of
density) but it could also be small at other places corresponding to
{\em local minima of density}. This is indeed what happens in the spiral
example shown. It may happen whenever there are high-density regions
separated by a low-density region: tracing paths from one high-density
region to another should cross a ``median'' lower-dimensional region (a manifold)
where the density has a local maximum along the path direction. The reason
such a median region is needed is because at these points the vectors $r(x)-x$
must change sign: on one side of the median they point to one of the high-density
regions while on the other side they point to the other, as clearly 
visible in Figure~\ref{fig2:spiral-reconstruction-grid-zoomed-center} 
between the arms of the spiral. 

We believe that this analysis is valid not just for contractive and
denoising auto-encoders, but for regularized auto-encoders in general.
The intuition behind this statement can be firmed up by analyzing
Figure~\ref{fig:1D-autoencoder}: the score-like behavior of $r(x)-x$
arises simply out of the opposing forces of (a) trying to make $r(x)=x$
at the training examples and (b) trying to make $r(x)$ as regularized
as possible (as close to a constant as possible). 

Note that previous work~\citep{Rifai-icml2012,Bengio-et-al-icml2013} has already shown that contractive auto-encoders
(especially when they are stacked in a way similar to RBMs in a deep belief net) 
learn good models of high-dimensional data (such as images), and that these models can
be used not just to obtain good representations for classification tasks but
that good quality samples can be obtained from the model, by a random walk
near the manifold of high-density. This was achieved by essentially following the 
vector field and adding noise along the way.

\subsection{Missing ${\sigma^2}$}

When we are in the same setting as in section \ref{sec:perfect-world-scenario}
but the value of ${\sigma^2}$ is unknown, we can modify
(\ref{eqn:rx-x-trick}) a bit and avoid dividing by ${\sigma^2}$.
That is, for a trained reconstruction function $r(x)$
given to us we just
take the quantity $r(x)-x$ and it should be approximatively
the score {\em up to a multiplicative constant}.
\[
    r(x)-x \propto \frac{\partial \log p(x)}{\partial x}
\]
Equivalently, if one estimates the density via an energy
function (minus the unnormalized log density), then $x-r(x)$
estimates the derivative of the energy function.

We still have to assume that ${\sigma^2}$ is small.
Otherwise, if the unknown ${\sigma^2}$ is too large we
might get a poor estimation of the score.

\subsection{Limited Parameterization}
\label{sec:limited-parameterization}

We should also be concerned about the fact that $r(x)-x$
is trying to approximate $- \frac{\partial E(x)}{\partial x}$
as ${\sigma} \rightarrow 0$ but we have not made any assumptions
about the space of functions that $r$ can represent when we
are dealing with a specific implementation.

When using a certain parameterization of $r$ such as the one
from section \ref{sec:simple-numerical-example}, there is
no guarantee that the family of functions in which we select $r$
each represent a conservative vector field (i.e. the gradient
of a potential function). Even if we start
from a density $p(x) \propto \exp(-E(x))$ and we
have that $r(x)-x$ is very close to $- \frac{\partial}{\partial x} E(x)$
in terms of some given norm, there is not guarantee that there
exists an associated function $E_0(x)$
for which $r(x)-x \propto - \frac{\partial }{\partial x} E_0(x)$ and $E_0(x) \approx E(x)$.

In fact, in many cases we can trivially show the non-existence of
such a $E_0(x)$ by computing the curl of $r(x)$.
The curl has to be equal to $0$ everywhere if $r(x)$ is indeed the
derivative of a potential function. We can omit the $x$
terms from the computations because we can easily find its antiderivative
by looking at $x = \frac{\partial}{\partial x} \left\| x \right\| ^2_2$.

Conceptually, another way to see this is to argue that
if such a function $E_0(x)$ existed,
its second-order mixed derivatives should be equal. That is,
we should have that
\[
\frac{\partial^2 E_0(x)}{\partial x_i \partial x_j} = \frac{\partial^2 E_0(x)}{\partial x_j \partial x_i} \hspace{1em} \forall i,j,
\]
which is equivalent to
\[
\frac{\partial r_i(x)}{\partial x_j} = \frac{\partial r_j(x)}{\partial x_i} \hspace{1em} \forall i,j.
\]

Again in the context of section \ref{sec:simple-numerical-example},
with the parameterization used for that particual kind of denoising auto-encoder,
this would yield the constraint that $V^T = W$. That is, unless
we are using tied weights, we know that no such potential $E_0(x)$ exists,
and yet when running the experiments from section \ref{sec:simple-numerical-example}
we obtained much better results with untied weights.
To make things worse, it can also be demonstrated that the energy function that we
get from tied weights leads to a distribution that is not normalizable
(it has a divergent integral over $\mathbb{R}^d$).
In that sense, this suggests that we should not worry too much
about the exact parameterization of the denoising auto-encoder as long
as it has the required flexibility to approximate the optimal reconstruction
function sufficiently well.

\subsection{Relation to Denoising Score Matching}

There is a connection between our results and previous research
involving score matching for denoising auto-encoders.
We will summarize here the existing results from ~\cite{Vincent-NC-2011}
and show that, whereas they have shown that denoising auto-encoders
with a particular form estimated the score,
our results extend this to a very large family of estimators
(including the non-parametric case). This will provide
some reassurance given some of the potential issues
mentioned in section \ref{sec:limited-parameterization}.


Motivated by the analysis of denoising auto-encoders,
~\cite{Vincent-NC-2011} are concerned with the case where
we explicitly parametrize an energy function ${\cal E}(x)$,
yielding an associated score function $\psi(x)=-\frac{\partial {\cal E}(x)}{\partial x}$
and we stochastically corrupt the original samples $x \sim p$ to
obtain noisy samples $\tilde{x} \sim q_\sigma(\tilde{x}|x)$.
In particular, the article analyzes the case where $q_\sigma$ adds Gaussian noise of variance $\sigma^2$
to $x$. The main result is that minimizing the expected square
difference between $\psi(\tilde{x})$ and the score of $q_\sigma(\tilde{x}|x)$,
\[
  E_{x,\tilde{x}}[||\psi(\tilde{x}) - \frac{\partial \log q_\sigma(\tilde{x}|x)}{\partial \tilde{x}}||^2],
\]
is equivalent to performing {\em score matching}~\citep{Hyvarinen-2005}
with estimator $\psi(\tilde{x})$ and target density
$q_\sigma(\tilde{x})=\int q_\sigma(\tilde{x}|x) p(x) dx$, where $p(x)$
generates the training samples $x$. Note that when a finite training set is
used, $q_\sigma(\tilde{x})$ is simply a smooth of the empirical
distribution (e.g. the Parzen density with Gaussian kernel of width
$\sigma$).  When the corruption noise is Gaussian,
$\frac{q_\sigma(\tilde{x}|x)}{\partial
  \tilde{x}}=\frac{x-\tilde{x}}{\sigma^2}$, from which we can deduce that
if we define a reconstruction function
\begin{equation}
  r(\tilde{x})=\tilde{x}+\sigma^2 \psi(\tilde{x}),
\label{eq:r-psi}
\end{equation}
then the above expectation is
equivalent to
\[
  E_{x,\tilde{x}}[||\frac{r(\tilde{x})-\tilde{x}}{\sigma^2} - \frac{x-\tilde{x}}{\sigma^2}||^2]
  = \frac{1}{\sigma^2}
  E_{x,\tilde{x}}[||r(\tilde{x})-x||^2]
\]
which is the denoising criterion. This says that when the reconstruction
function $r$ is parametrized so as to correspond to the score $\psi$ of a
model density (as per eq.~\ref{eq:r-psi}, and where $\psi$ is a derivative
of some log-density), the denoising criterion on $r$
with Gaussian corruption noise is equivalent to 
score matching with respect to a smooth of the data generating density,
i.e., a regularized form of score matching. Note that this regularization
appears desirable, because matching the score of the empirical distribution
(or an insufficiently smoothed version of it) could yield undesirable results
when the training set is finite. Since score matching has been
shown to be a consistent induction principle~\citep{Hyvarinen-2005}, it means that this 
{\em denoising score matching}~\citep{Vincent-NC-2011,Kingma+LeCun-2010,Swersky-ICML2011} 
criterion recovers the underlying
density, up to the smoothing induced by the noise of variance $\sigma^2$.
By making $\sigma^2$ small, we can make the estimator arbitrarily good
(and we would expect to want to do that as the amount of training data
increases). Note the correspondance of this conclusion with the results
presented here, which show (1) the equivalence between the RCAE's regularization
coefficient and the DAE's noise variance $\sigma^2$, and (2) that minimizing the
equivalent analytic criterion (based on a contraction penalty) estimates
the score when ${\sigma^2}$ is small. The difference is that our result
holds even when $r$ is not parametrized as per eq.~\ref{eq:r-psi}, i.e.,
is not forced to correspond with the score function of a density.

\subsection{Estimating the Hessian}

Since we have $\frac{r(x)-x}{{\sigma^2}}$ as an estimator of the score,
we readily obtain that the Hessian of the log-density, can be
estimated by the Jacobian of the reconstruction function minus
the identity matrix:
\[
   \frac{\partial^2 \log p(x)}{\partial x^2} \approx (\frac{\partial r(x)}{\partial x} - I)/{\sigma^2}
\]
as shown by equation (\ref {eq:calcvarminloss-drx}) of Theorem \ref{thm:calcvarminloss}.

In spite of its simplicity, this result is interesting because it relates
the derivative of the reconstruction function, i.e., a Jacobian matrix,
with the second derivative of the log-density (or of the energy).  This
provides insights into the geometric interpretation of the reconstruction
function when the density is concentrated near a manifold.  In that case,
near the manifold the score is nearly 0 because we are near a ridge of
density, and the density's second derivative matrix tells us in which
directions the first density remains close to zero or increases.  The ridge
directions correspond to staying on the manifold and along these directions
we expect the second derivative to be close to 0.  In the orthogonal
directions, the log-density should decrease sharply while its first and
second derivatives would be large in magnitude and negative in directions
away from the manifold.

Returning to the above equation, keep in mind that in these derivations
$\sigma^2$ is near 0 and $r(x)$ is near $x$, so that $\frac{\partial
  r(x)}{\partial x}$ is close to the identity. In particular, in the ridge
(manifold) directions, we should expect $\frac{\partial r(x)}{\partial x}$
to be closer to the identity, which means that the reconstruction remains
faithful ($r(x)=x$) when we move on the manifold, and this corresponds to
the eigenvalues of $\frac{\partial r(x)}{\partial x}$ that are near 1,
making the corresponding eigenvalues of 
$\frac{\partial^2 \log p(x)}{\partial x^2}$ near 0.
On the other hand, in the directions orthogonal to the manifold, 
$\frac{\partial r(x)}{\partial x}$ should be smaller than 1,
making the corresponding eigenvalues of
$\frac{\partial^2 \log p(x)}{\partial x^2}$ negative.

Besides first and second derivatives of the density, other local 
properties of the density are its local mean and local covariance,
discussed in the Appendix, section~\ref{sec:local-moments}.

\section{Sampling with Metropolis-Hastings}
\label{seq:sampling}

\subsection{Estimating Energy Differences}
\label{subseq:energy-diff}
One of the immediate consequences of equation (\ref{eqn:score-estimator}) is that,
while we cannot easily recover the energy $E(x)$ itself,
it is possible to approximate the energy difference
\mbox{$E(x^*) - E(x)$} between two states $x$ and $x^*$.
This can be done by using a first-order Taylor approximation
\[
E(x^*) - E(x) = \frac{\partial E(x)}{\partial x}^T (x^* - x) + o(\left\| x^* - x \right\|).
\]
To get a more accurate approximation, we can also use
a path integral from $x$ to $x^*$ that we can discretize
in sufficiently many steps.
With a smooth path $\gamma(t):[0,1]\rightarrow \mathbb{R}^d$,
assuming that $\gamma$ stays in a region where our DAE/RCAE can
be used to approximate $\frac{\partial E}{\partial x}$ well enough,
we have that
\begin{equation}
\label{eq:energy-difference-by-path-integration}
E(x^*) - E(x) = \int_0^1 \left[ \frac{\partial E}{\partial x} \left( \gamma(t) \right)  \right]^T \gamma'(t) dt.
\end{equation}
The simplest way to discretize this path integral is to pick
points $\left\{x_i \right\}_{i=1}^{n}$ spread at even
distances on a straight line from $x_1 = x$ to $x_n = x^*$.
We approximate (\ref{eq:energy-difference-by-path-integration}) by
\begin{equation}
\label{eq:energy-difference-by-path-integration-discrete-version}
E(x^*) - E(x) \approx \frac{1}{n}\sum_{i=1}^n \left[ \frac{\partial E}{\partial x} \left( x_i \right)  \right]^T \left( x^* - x \right)
\end{equation}

\subsection{Sampling}
\label{subseq:sampling}

With equation (\ref{eq:energy-difference-by-path-integration})
from section \ref{subseq:energy-diff} we can perform approximate
sampling from the estimated distribution, using the score estimator
to approximate energy differences which are needed in the
Metropolis-Hastings accept/reject decision.
Using a symmetric proposal $q(x^*|x)$, the acceptance ratio is
\[
\alpha = \frac{p(x^*)}{p(x)} = \exp(-E(x^*)+E(x))
\]
which can be computed with (\ref{eq:energy-difference-by-path-integration})
or approximated with (\ref{eq:energy-difference-by-path-integration-discrete-version})
as long as we trust that our DAE/RCAE was trained properly and
has enough capacity to be a sufficiently good estimator of $\frac{\partial E}{\partial x}$.
An example of this process is shown in Figure \ref{fig:example-10d}
in which we sample from a density concentrated around
a 1-d manifold embedded in a space of dimension 10.
For this particular task, we have trained only DAEs and
we are leaving RCAEs out of this exercise.
Given that the data is roughly contained in the range $[-1.5,1.5]$
along all dimensions, we selected a training noise level
$\sigma_{\textrm{train}} = 0.1$ so that the noise would
have an appreciable effect while still being relatively small.
As required by Theorem \ref{thm:DAE-optimal-solution}, we have used
isotropic Gaussian noise of variance $\sigma_{\textrm{train}}^2$.

The Metropolis-Hastings proposal $q(x^*|x) = \mathcal{N}(0, \sigma_{\textrm{MH}}^2 I)$
has a noise parameter $\sigma_{\textrm{MH}}$ that needs
to be set. In the situation shown in Figure \ref{fig:example-10d},
we used $\sigma_{\textrm{MH}} = 0.1$.
After some hyperparameter tweaking and exploring
various scales for $\sigma_{\textrm{train}}, \sigma_{\textrm{MH}}$,
we found that setting both to be $0.1$ worked well.

When $\sigma_{\textrm{train}}$ is too large, the DAE trained learns
a ``blurry'' version of the density that fails
to represent the details that we are interested in.
The samples shown in Figure \ref{fig:example-10d} are very convincing
in terms of being drawn from a distribution that models well
the original density.
We have to keep in mind that Theorem \ref{thm:DAE-optimal-solution}
describes the behavior as
$\sigma_{\textrm{train}} \rightarrow 0$ so we would expect that the
estimator becomes worse when $\sigma_{\textrm{train}}$
is taking on larger values.
In this particular case with $\sigma_{\textrm{train}} = 0.1$,
it seems that we are instead modeling something like
the original density to which isotropic Gaussian noise of variance
$\sigma_{\textrm{train}}^2$ has been added.

In the other extreme, when $\sigma_{\textrm{train}}$ is too small, the DAE
is not exposed to any training example farther away from the density manifold.
This can lead to various kinds of strange behaviors when the sampling algorithm
falls into those regions and then has no idea what to do there and how to
get back to the high-density regions.
We come back to that topic in section \ref{subseq:spurious-maxima}.

It would certainly be possible to pick both a very small value for
$\sigma_{\textrm{train}} = \sigma_{\textrm{MH}} = 0.01$ to avoid the
spurius maxima problem illustrated in section \ref{subseq:spurious-maxima}.
However, this leads to the same kind of mixing problems that any
kind of MCMC algorithm has. Smaller values of $\sigma_{\textrm{MH}}$
lead to higher acceptance ratios but worse mixing properties.

\begin{figure}[h]
\centering
\makebox[0.20\textwidth][c]{\textsc{original}}
\makebox[0.20\textwidth][c]{\textsc{sampled}}
\makebox[0.20\textwidth][c]{\textsc{original}}
\makebox[0.20\textwidth][c]{\textsc{sampled}}

\includegraphics[width=0.20\textwidth]{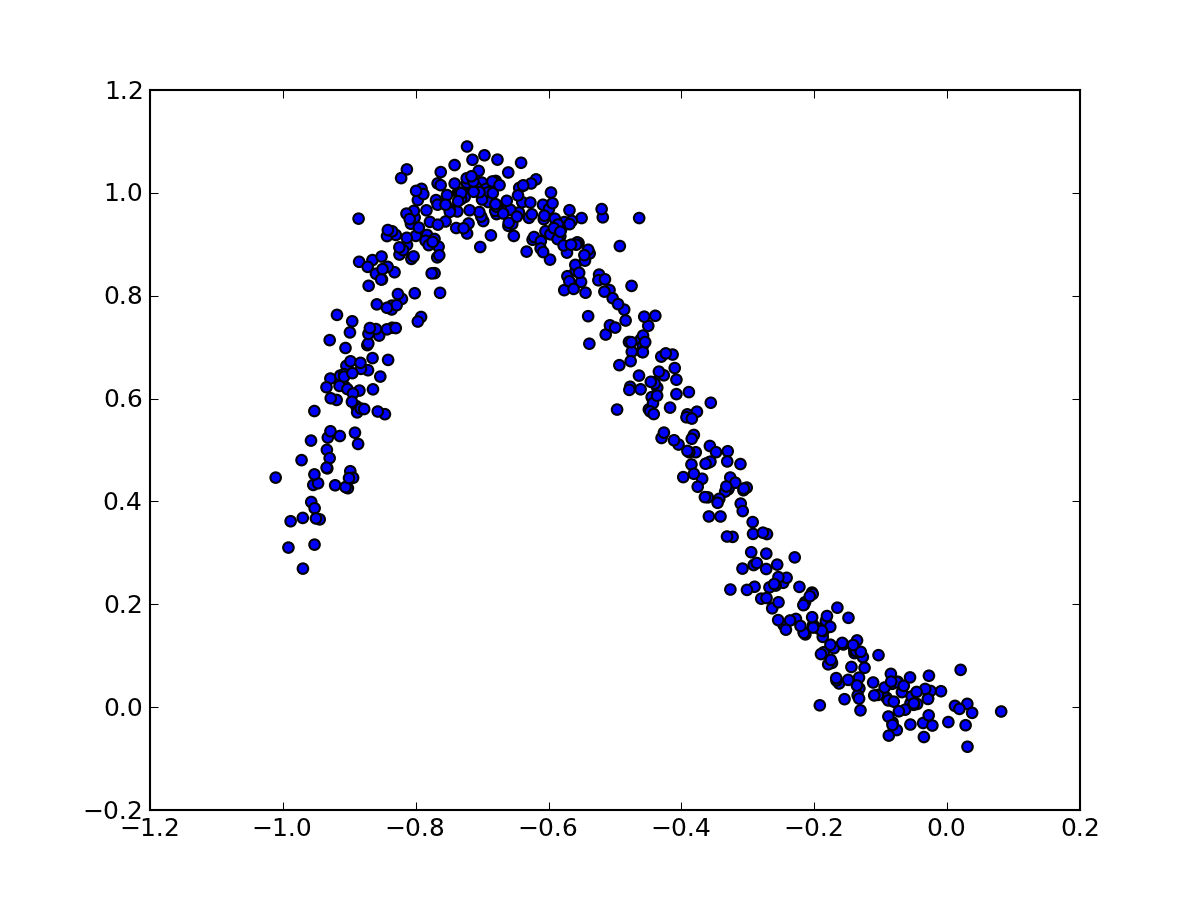}
\includegraphics[width=0.20\textwidth]{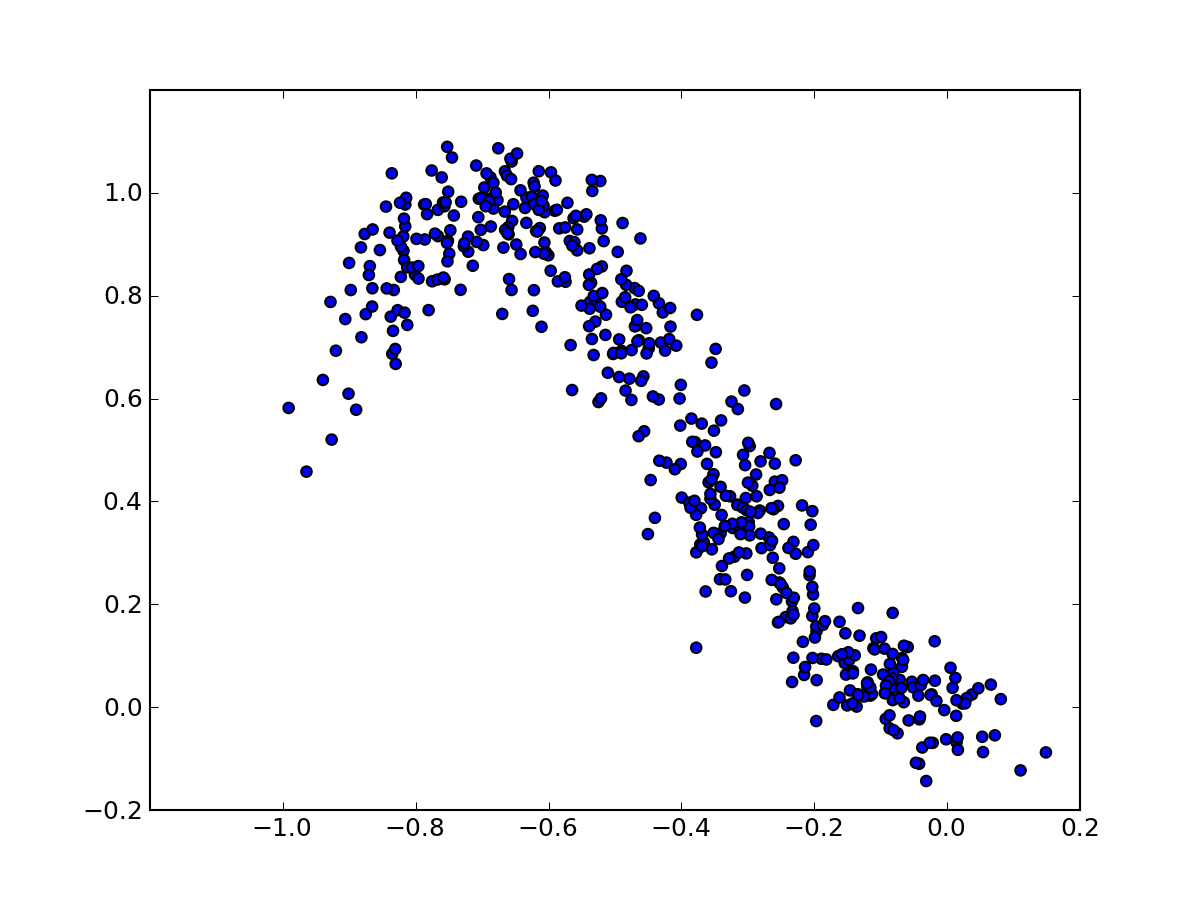}
\includegraphics[width=0.20\textwidth]{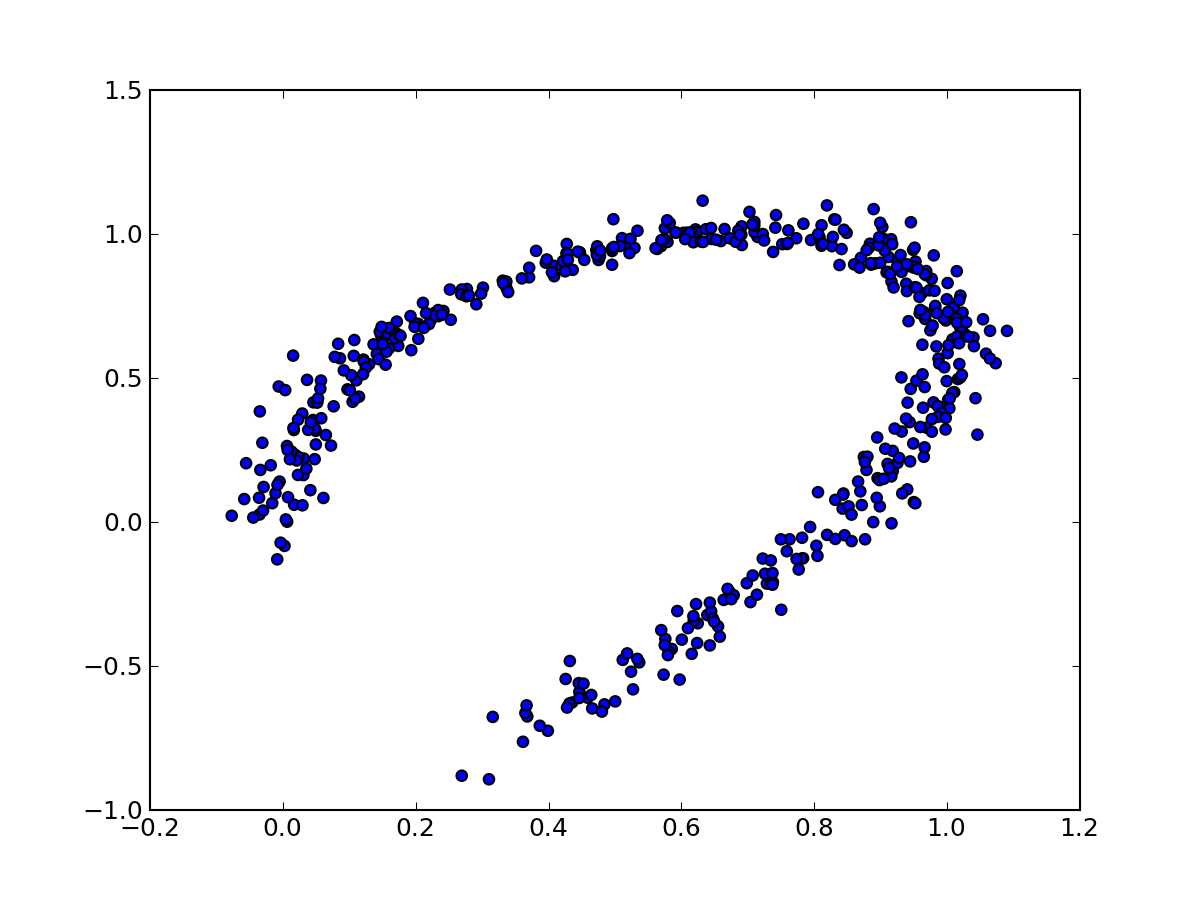}
\includegraphics[width=0.20\textwidth]{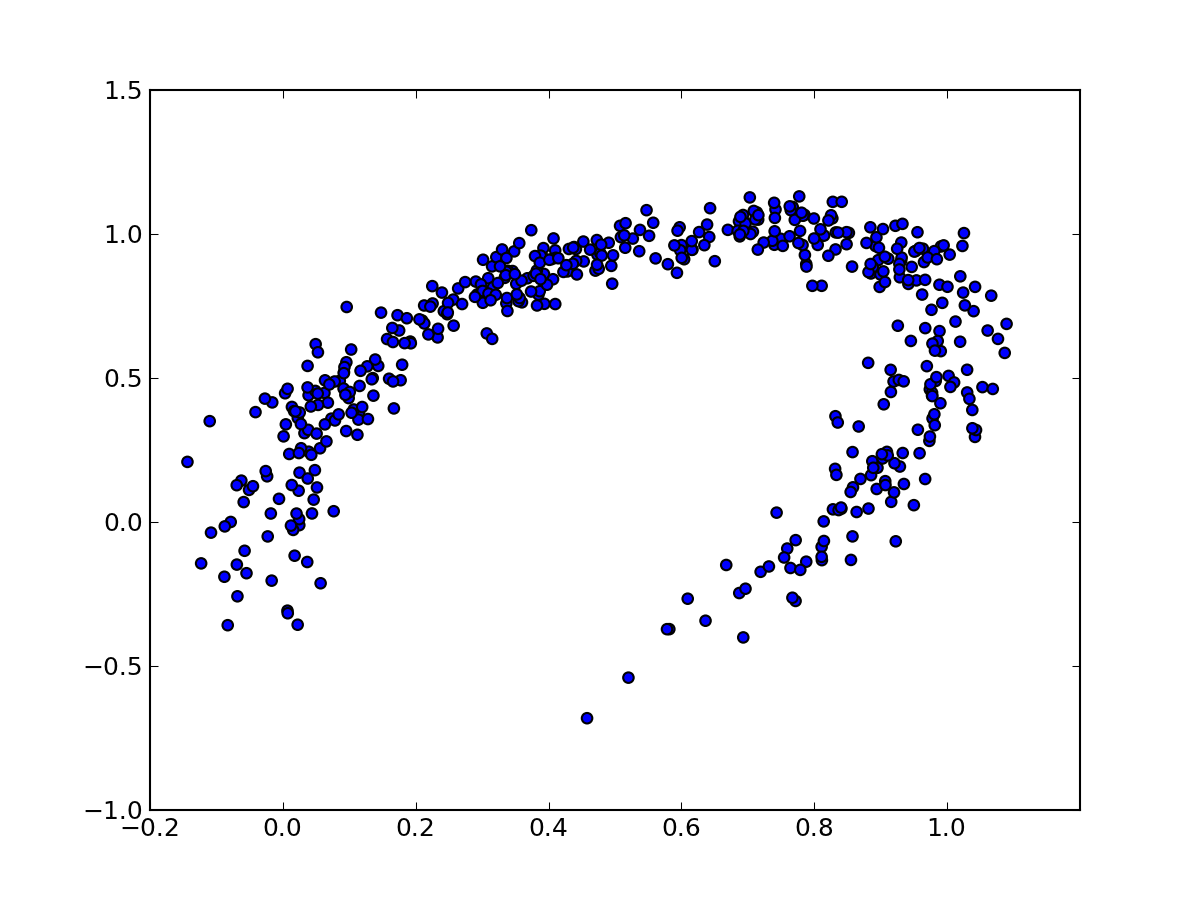}

\includegraphics[width=0.20\textwidth]{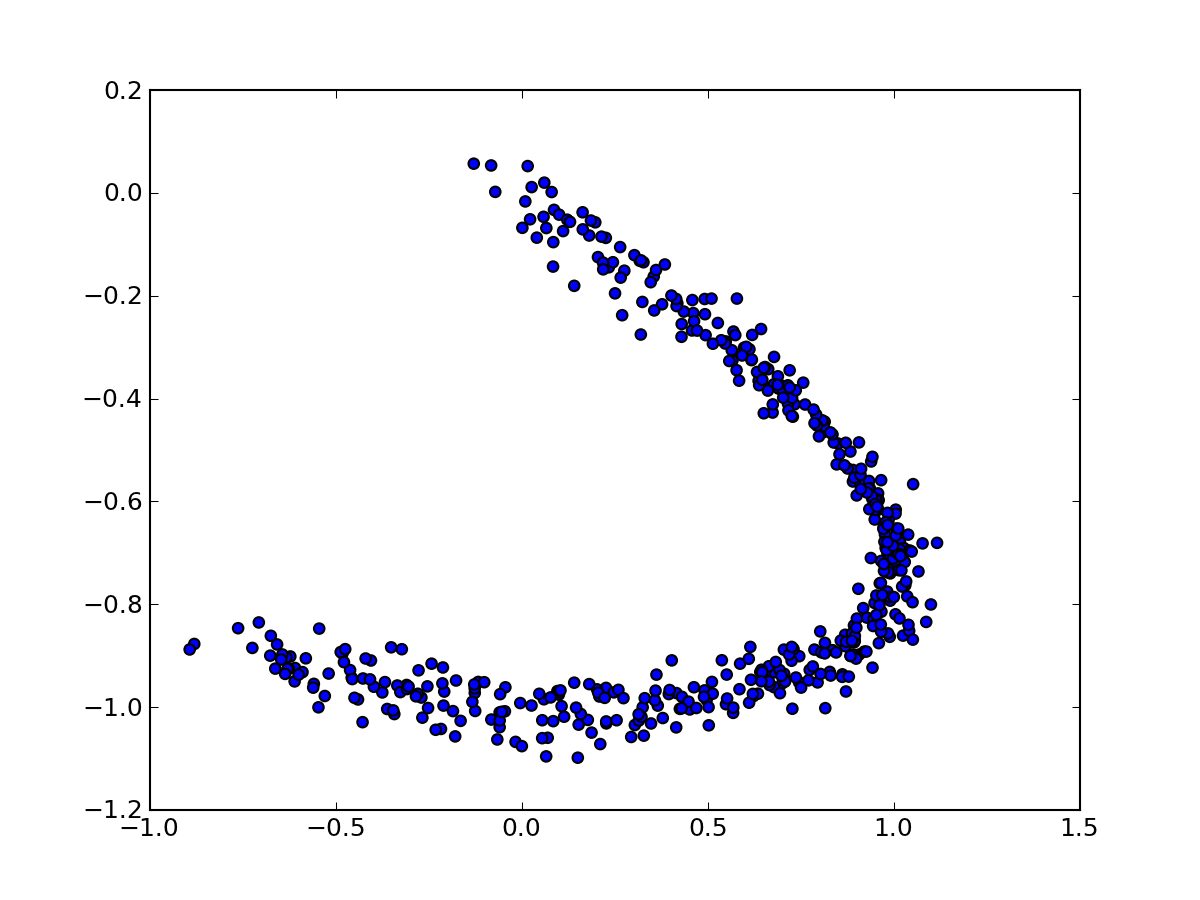}
\includegraphics[width=0.20\textwidth]{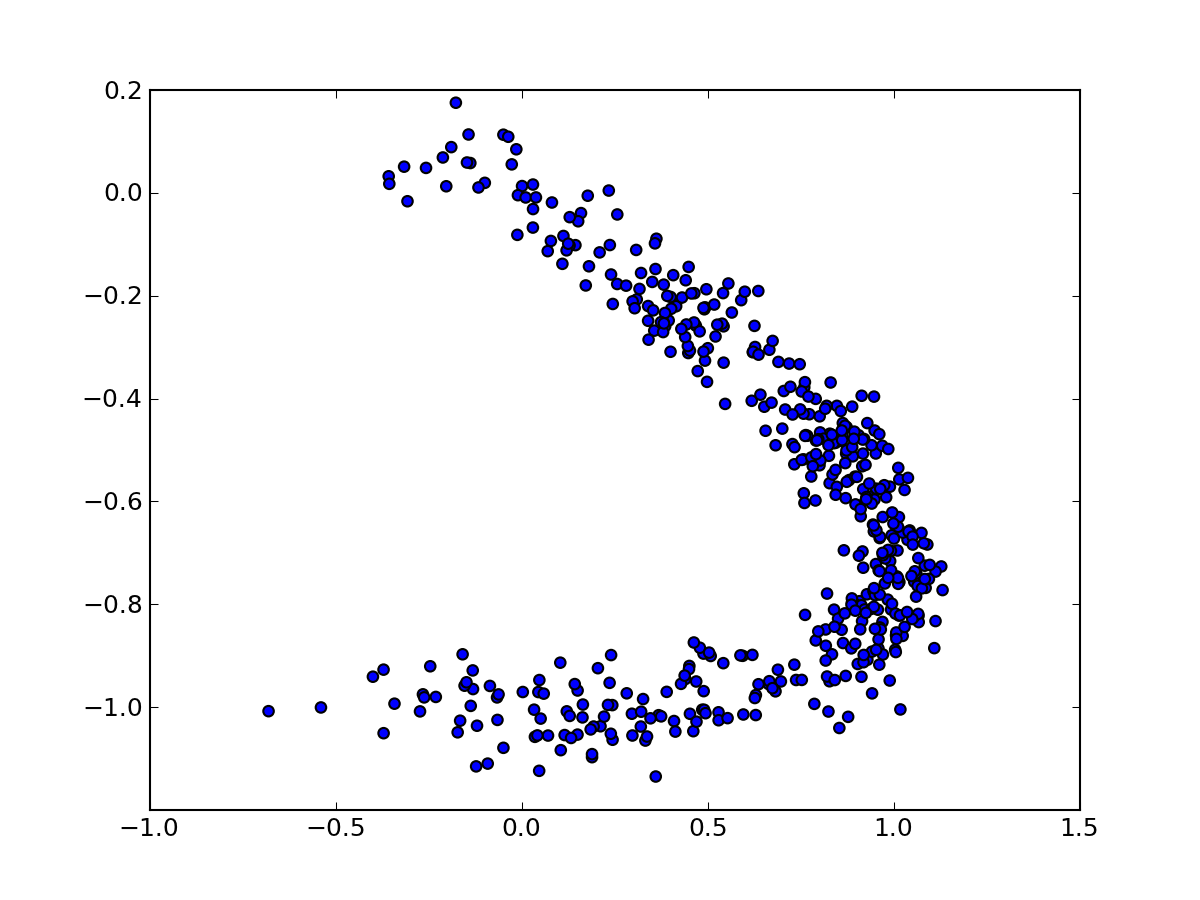}
\includegraphics[width=0.20\textwidth]{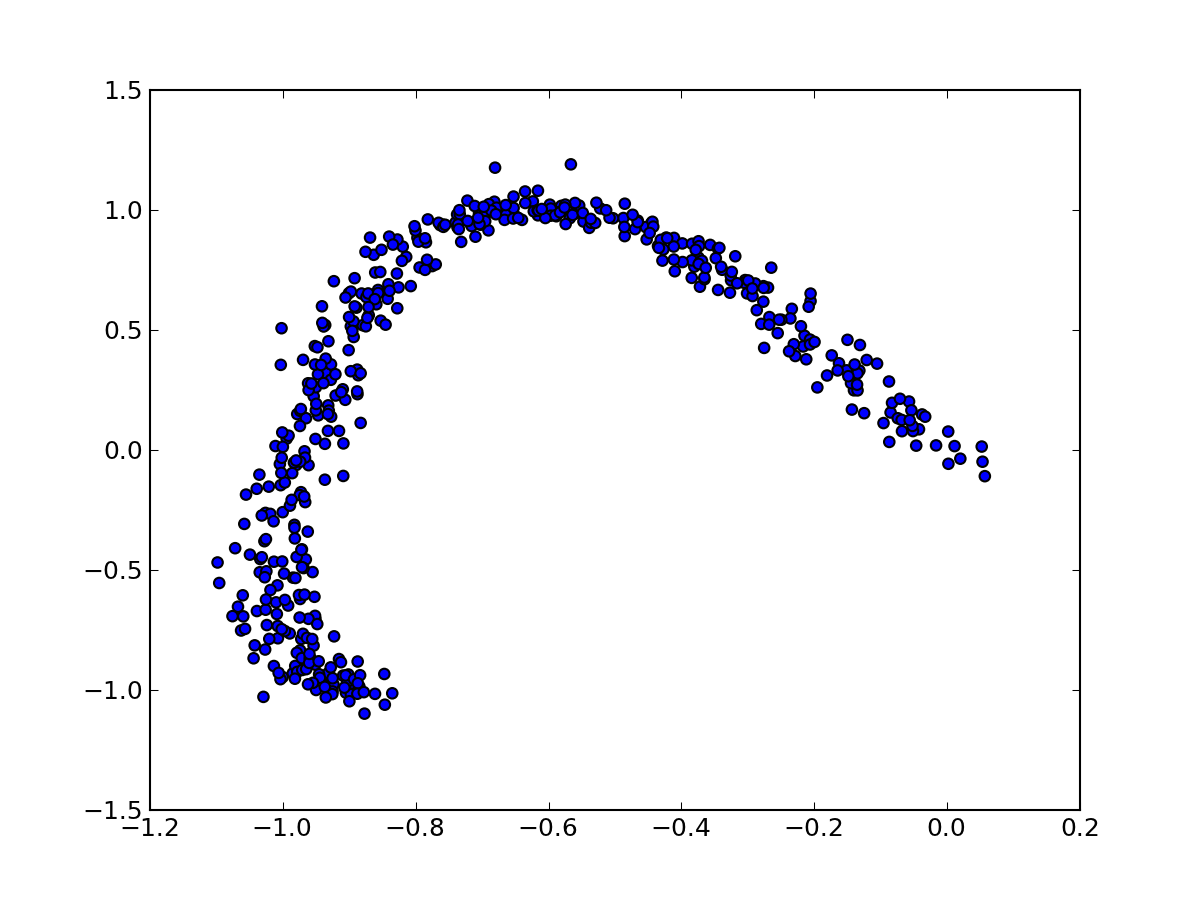}
\includegraphics[width=0.20\textwidth]{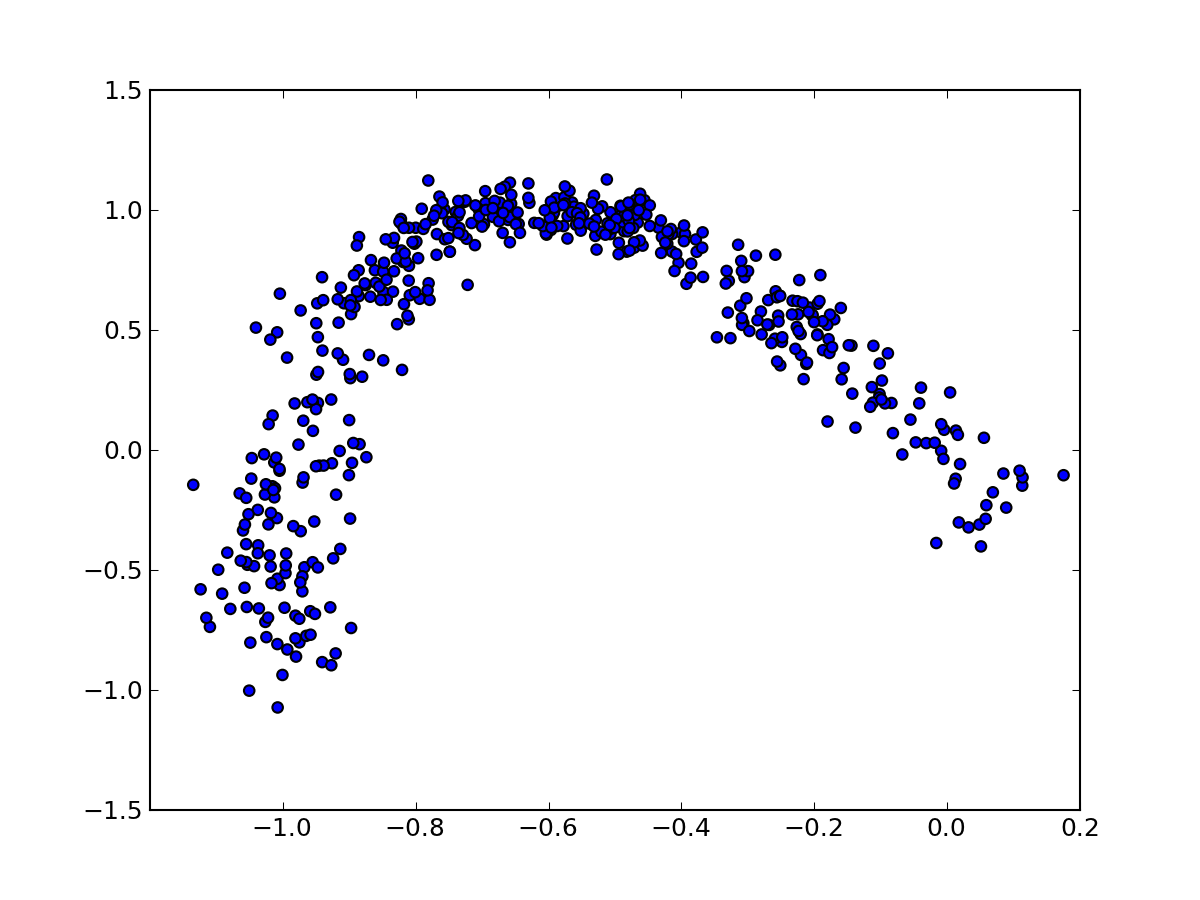}

\includegraphics[width=0.20\textwidth]{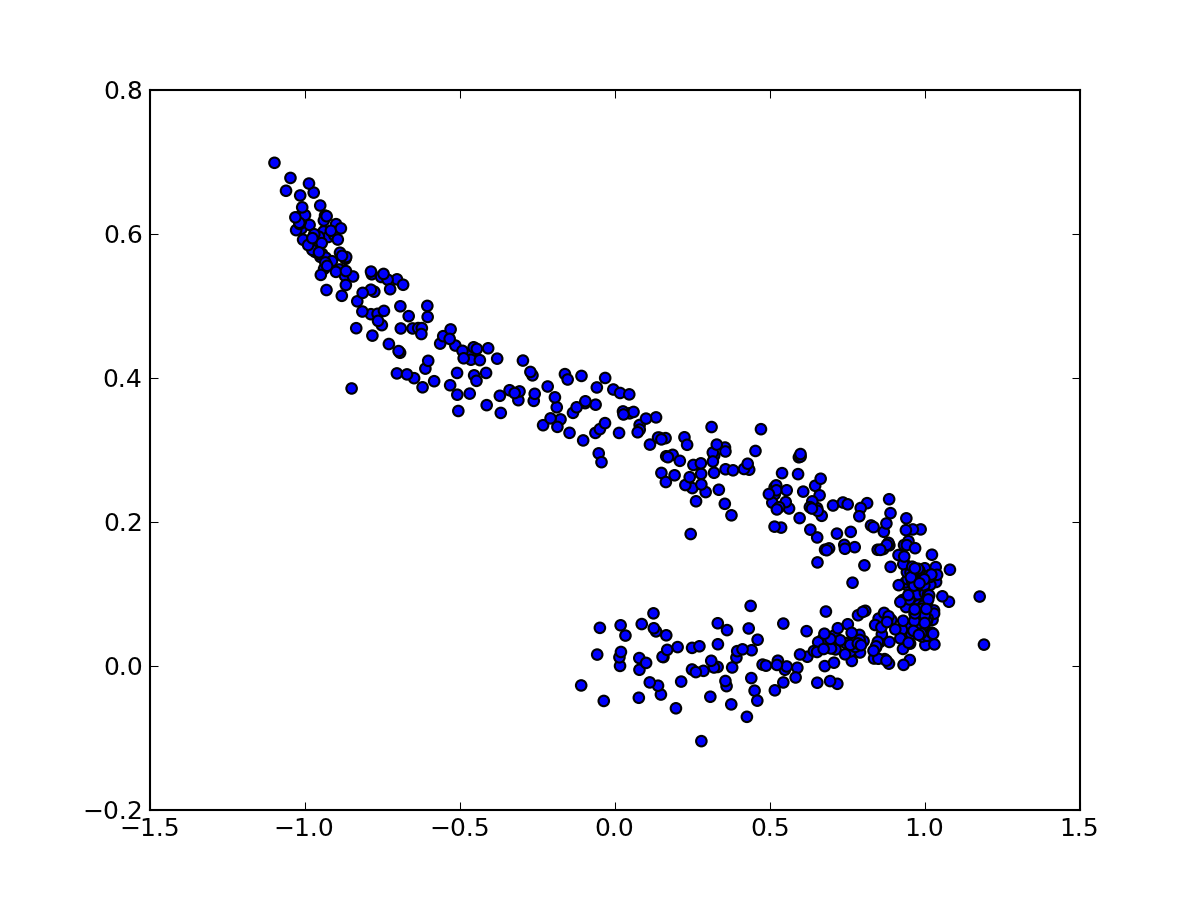}
\includegraphics[width=0.20\textwidth]{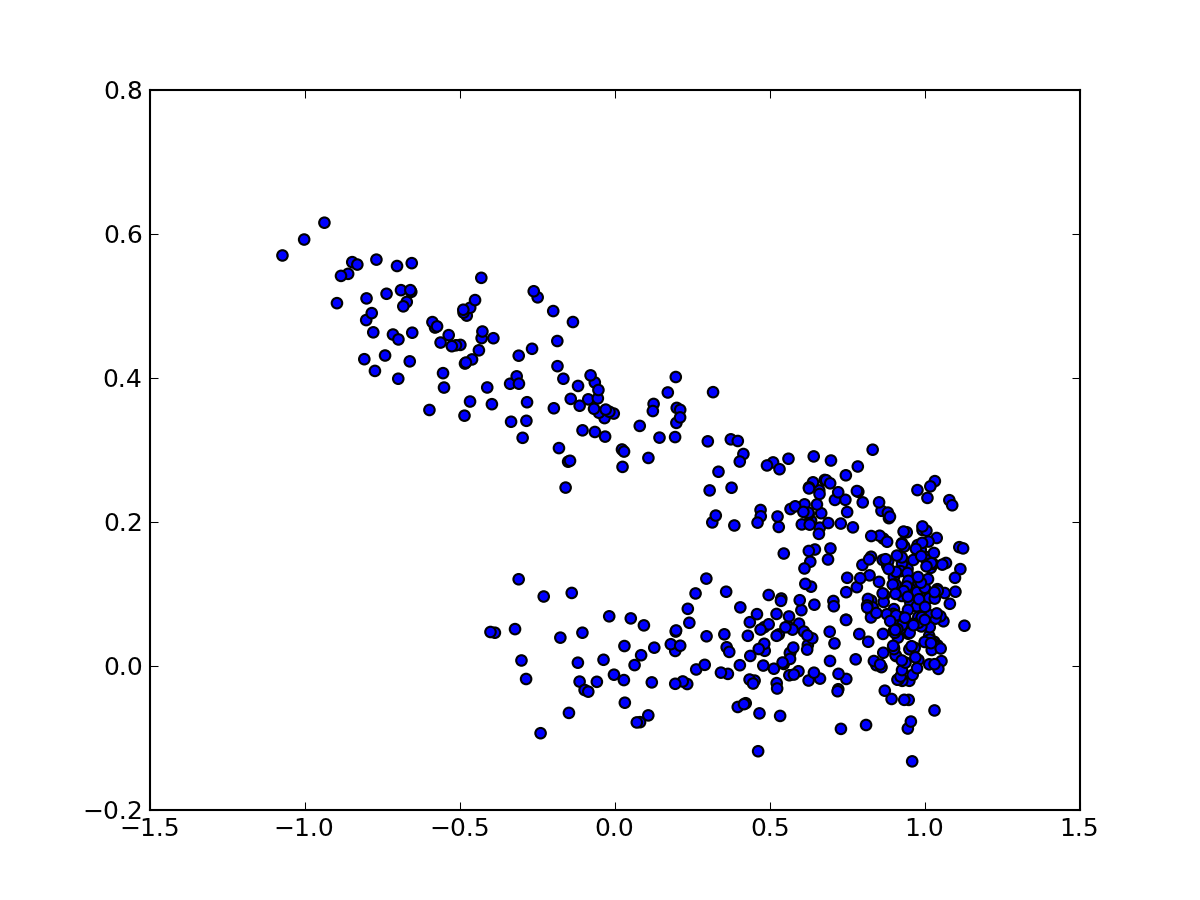}
\includegraphics[width=0.20\textwidth]{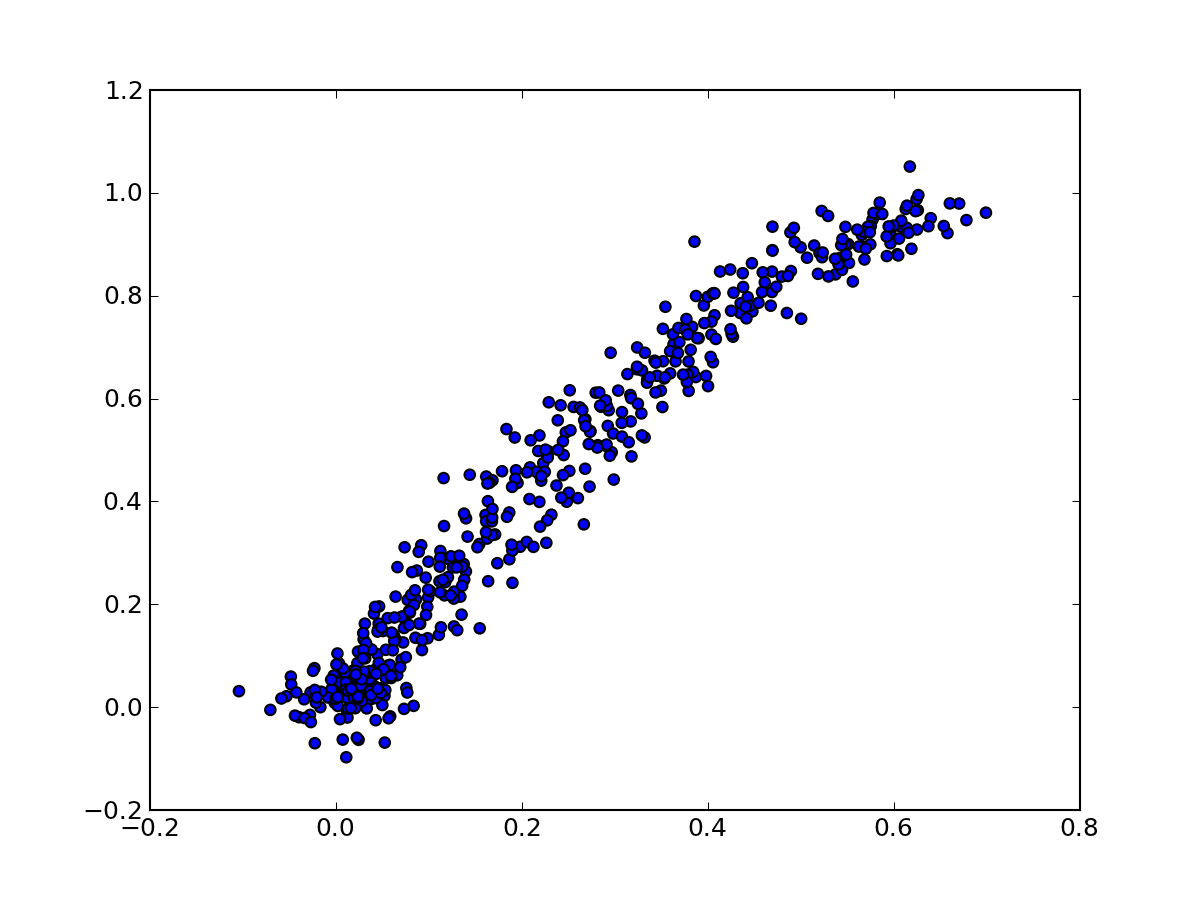}
\includegraphics[width=0.20\textwidth]{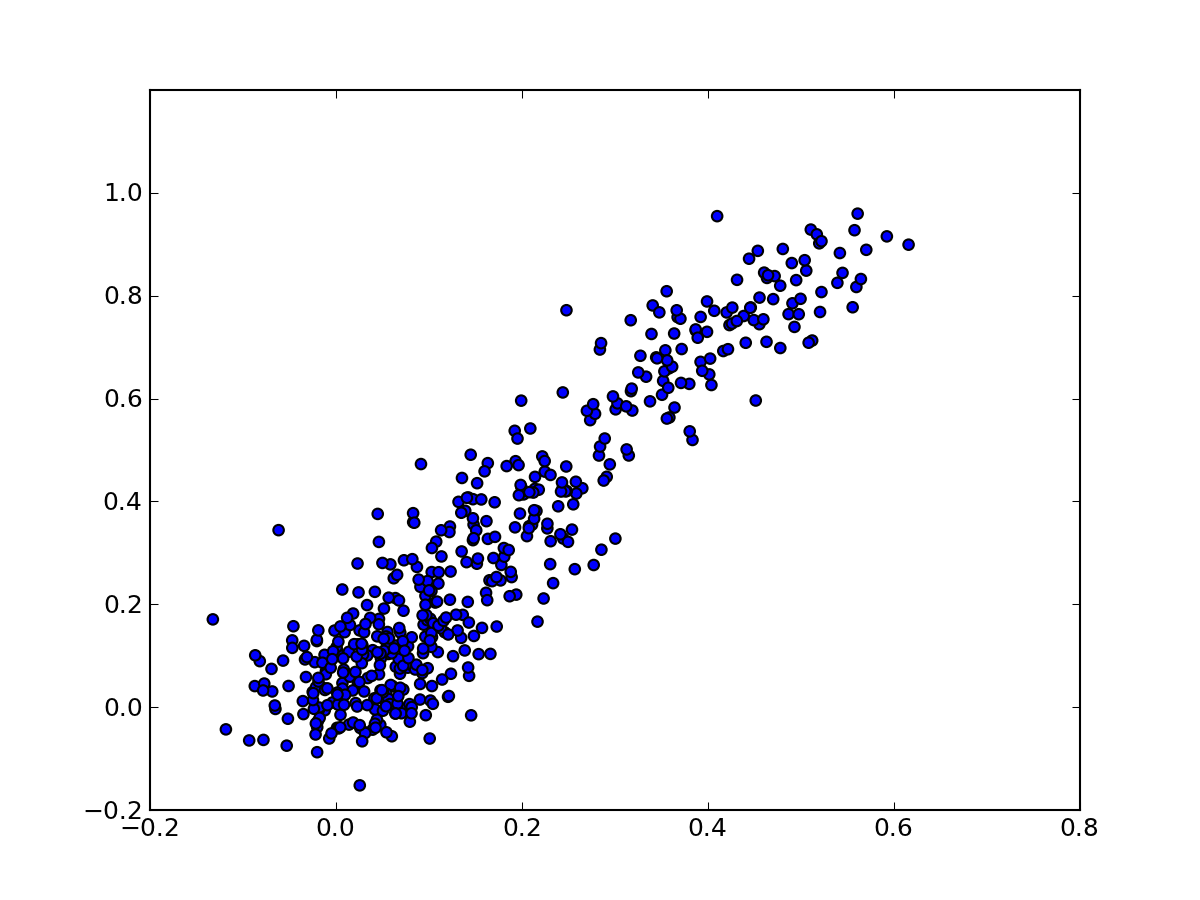}

\includegraphics[width=0.20\textwidth]{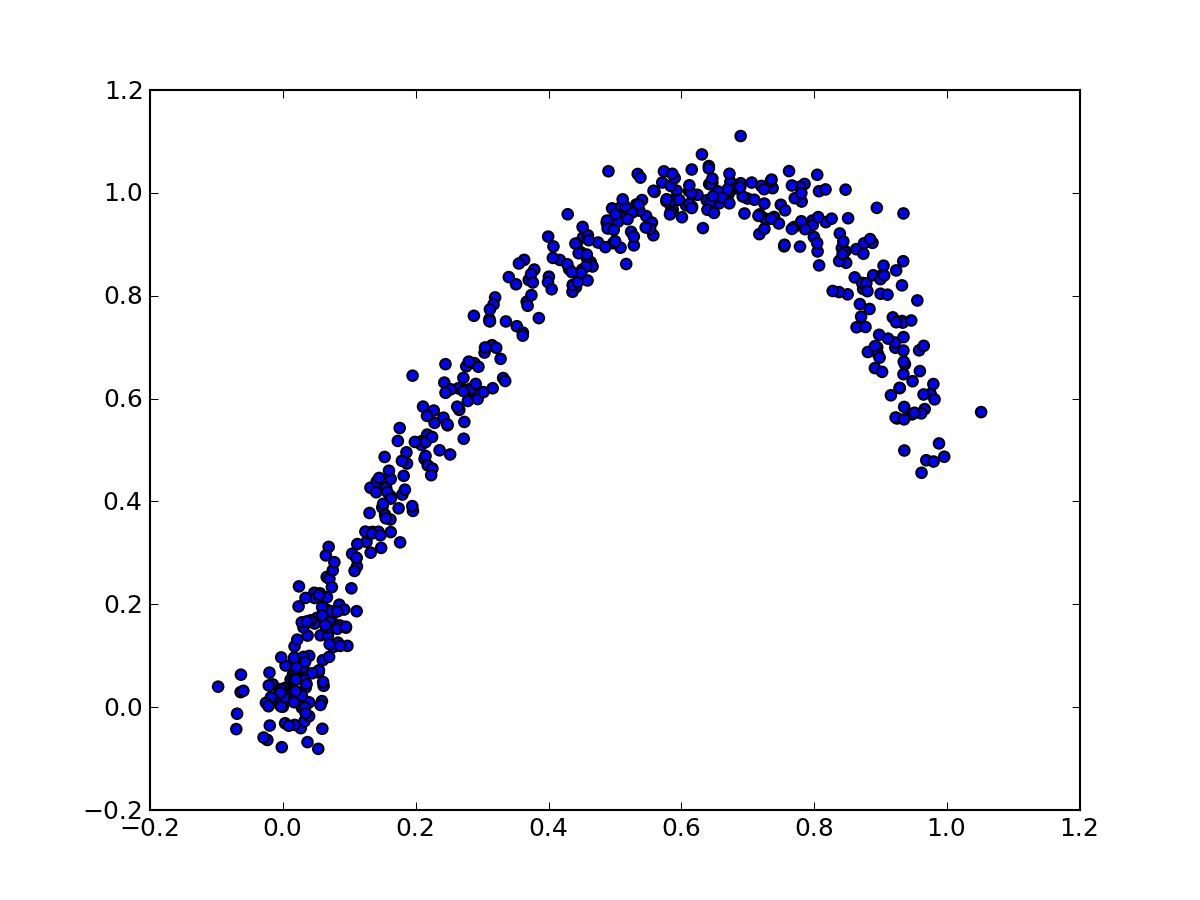}
\includegraphics[width=0.20\textwidth]{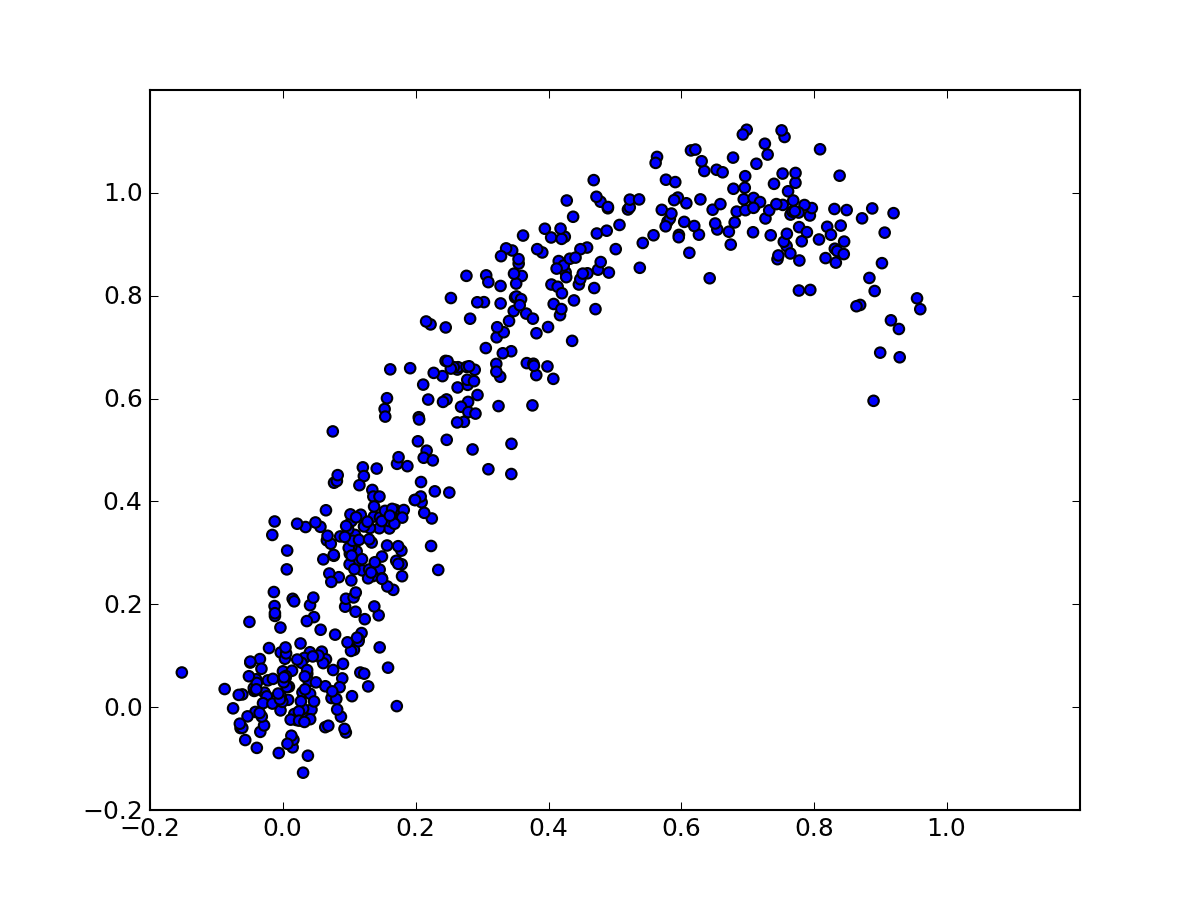}
\includegraphics[width=0.20\textwidth]{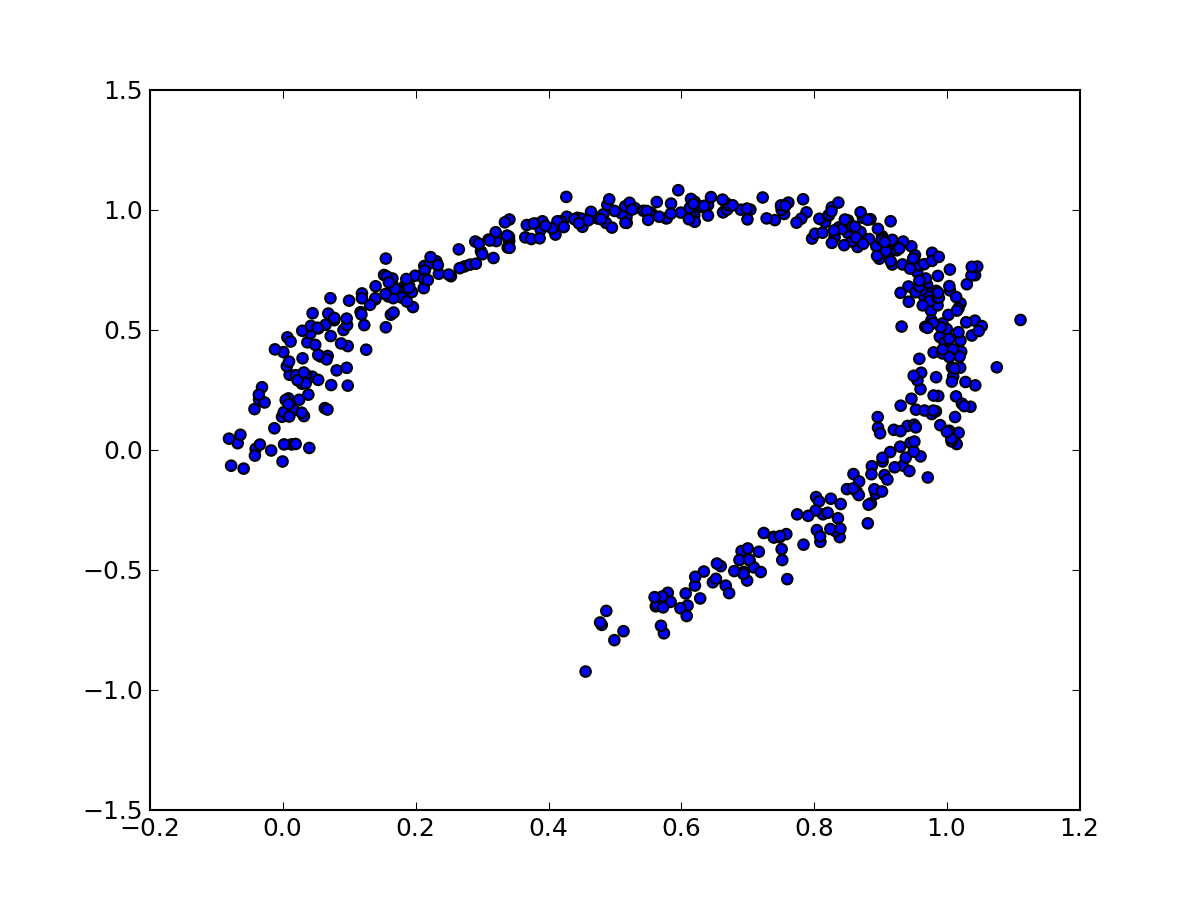}
\includegraphics[width=0.20\textwidth]{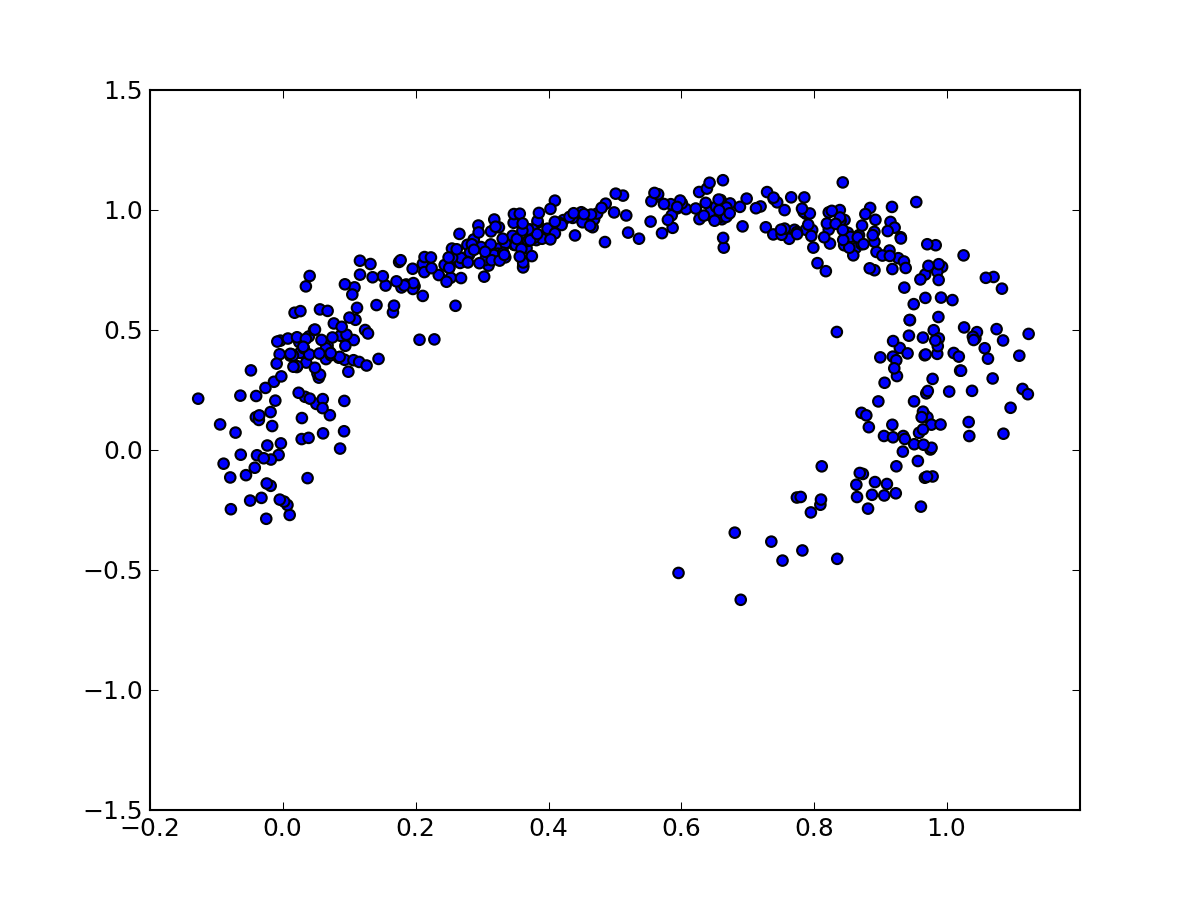}

\includegraphics[width=0.20\textwidth]{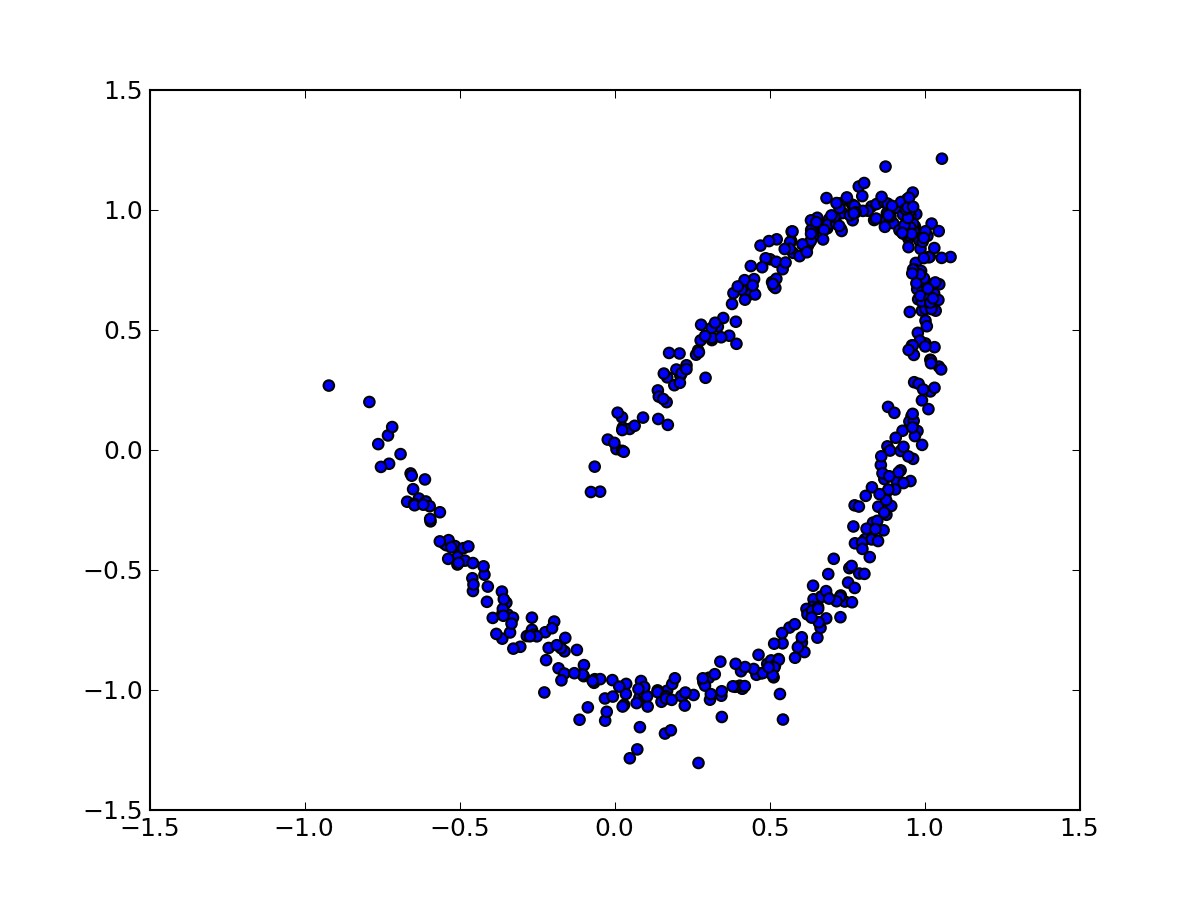}
\includegraphics[width=0.20\textwidth]{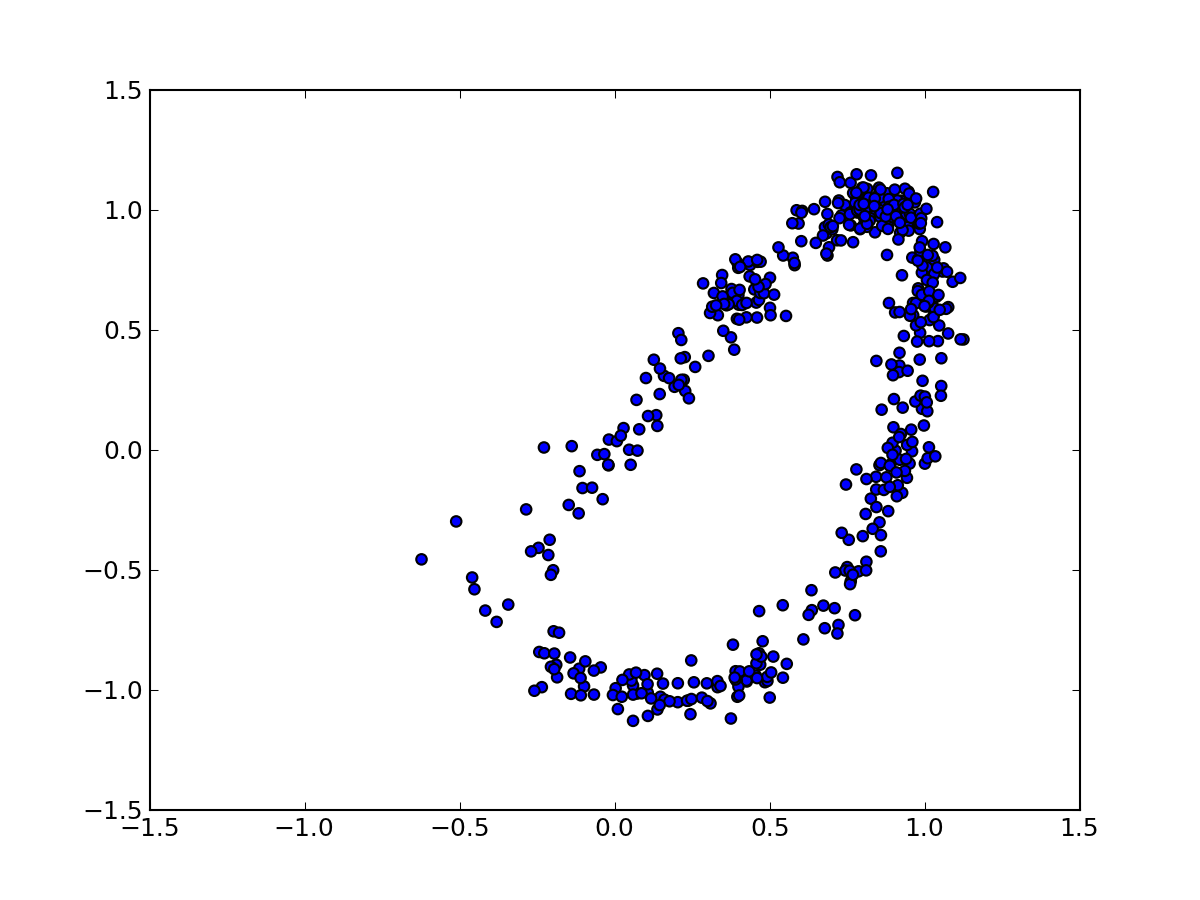}
\includegraphics[width=0.20\textwidth]{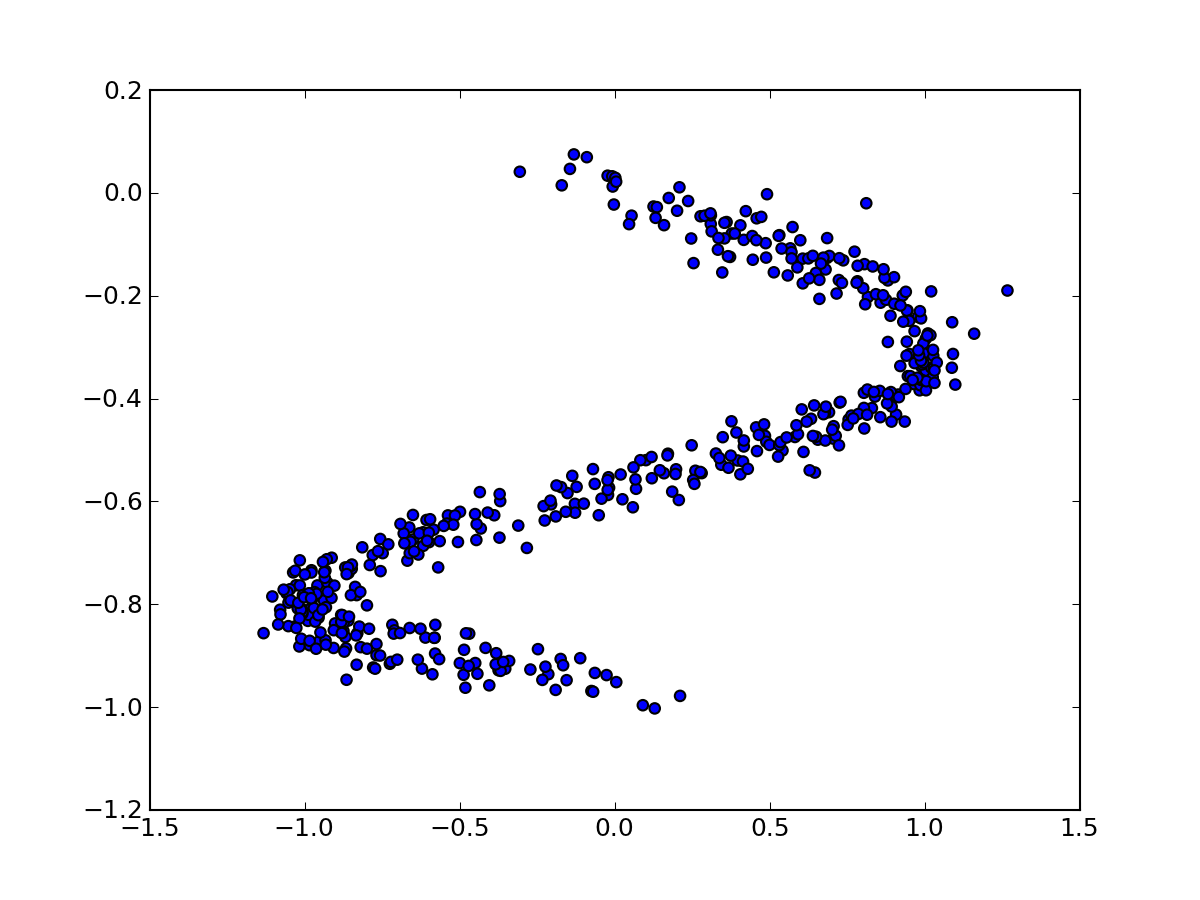}
\includegraphics[width=0.20\textwidth]{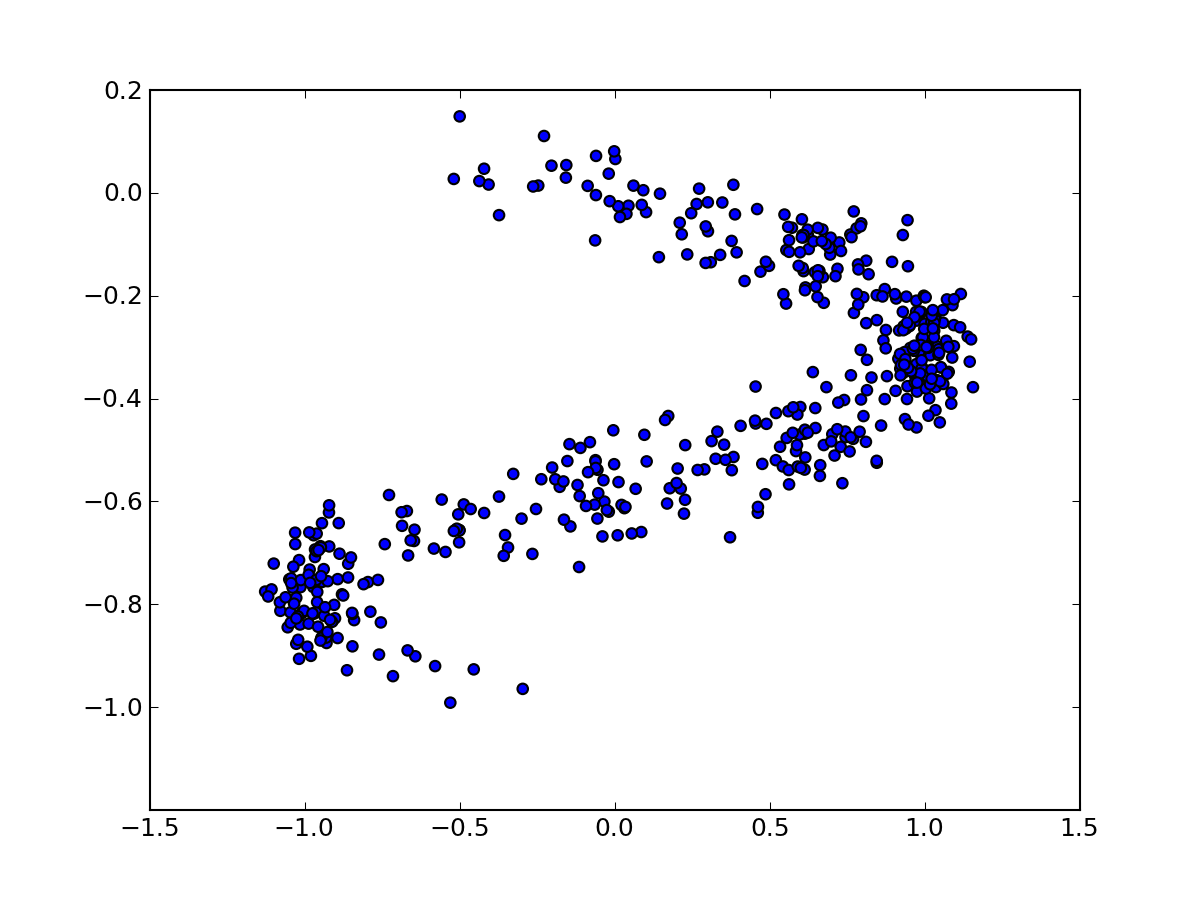}

\caption{
Samples drawn from the estimate of $\frac{\partial E}{\partial x}$ given by a DAE
by the Metropolis-Hastings method presented in section \ref{seq:sampling}.
By design, the data density distribution is concentrated along a 1-d manifold embedded in
a space of dimension 10. This data can be visualized in the plots above by plotting
pairs of dimensions $(x_0,x_1), \ldots, (x_8,x_9), (x_9,x_0)$, going 
in reading order from left to right and then line by line. For each pair of
dimensions, we show side by side the
original data (left) with the samples drawn (right).
}
\label{fig:example-10d}
\end{figure}

\subsection{Spurious Maxima}
\label{subseq:spurious-maxima}

There are two very real concerns with the sampling method discussed
in section \ref{subseq:sampling}.
The first problem is with the mixing
properties of MCMC and it is discussed in that section.
The second issue is with spurious probability maxima resulting from
inadequate training of the DAE. It happens when an auto-encoder
lacks the capacity to model the density with enough precision,
or when the training procedure ends up in a bad local minimum
(in terms of the DAE parameters).

This is illustrated in Figure \ref{fig:spurious-maxima}
where we show an example of a vector field $r(x)-x$ for
a DAE that failed to properly learn the desired behavior
in regions away from the spiral-shaped density.

\begin{figure}[h]
\centering
    \subfigure[][DAE misbehaving when away from manifold]{
        \includegraphics[width=0.45\textwidth]{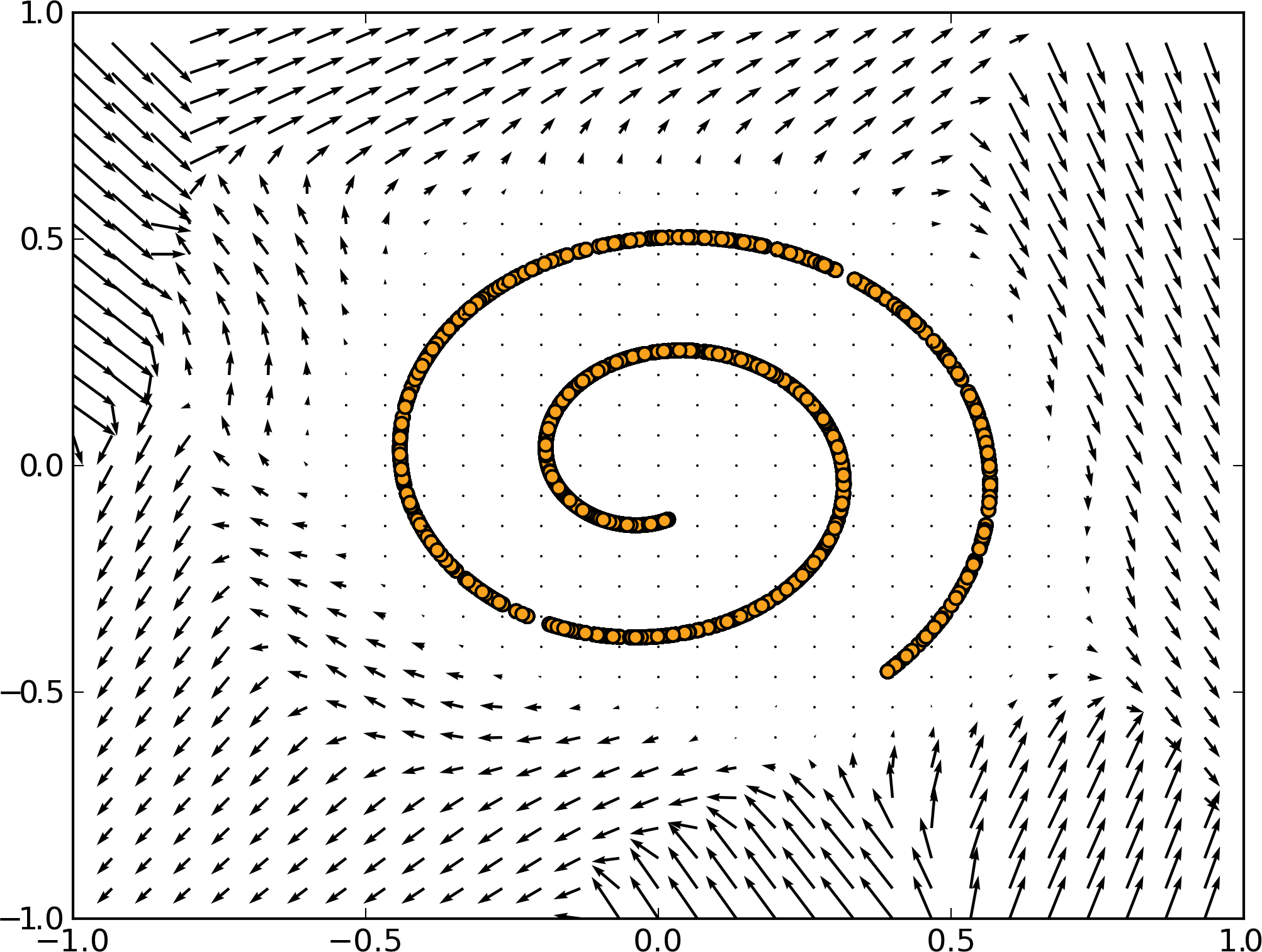}
        \label{fig1:dae-misbehaving}
    }
    \subfigure[][sampling getting trapped into bad attractor]{
        \includegraphics[width=0.45\textwidth]{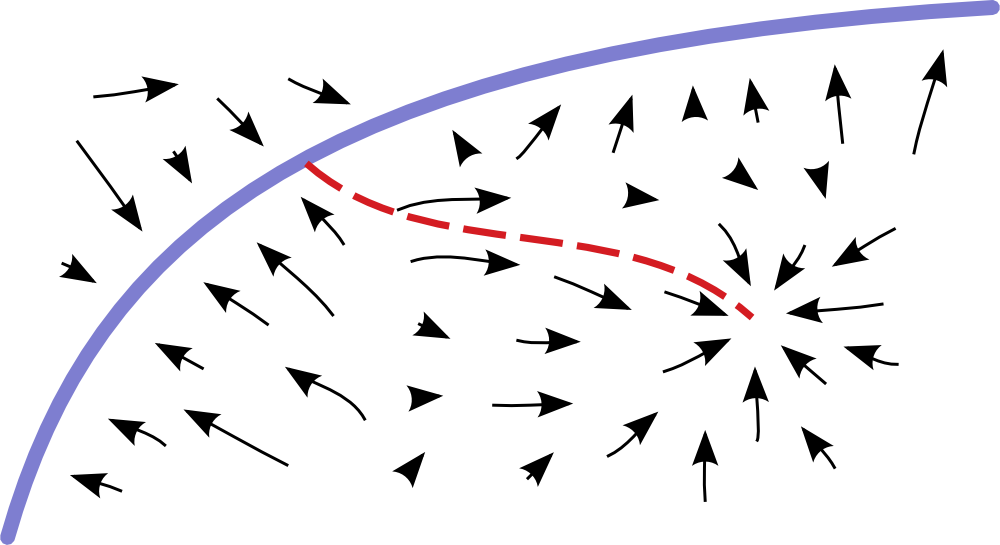}
        \label{fig1:walkback-trapped}
    }
\caption{
(a) On the left we show a $r(x)-x$ vector field similar to
that of the earlier Figure \ref{fig:two-spiral-graphics}.
The density is concentrated along a spiral manifold and we
should have the reconstruction function $r$ bringing
us back towards the density. In this case, it works well
in the region close to the spiral (the magnitude of the
vectors is so small that the arrows are shown as dots).
However, things are out of control in the regions outside.
This is because the level of noise used during training
was so small that not enough of the training examples were
found in those regions.
\newline
(b) On the right we sketch what may happen when
we follow a sampling procedure as described in section \ref{subseq:sampling}.
We start in a region of high density (in purple) and
we illustrate in red the trajectory that our samples may take.
In that situation, the DAE/RCAE was not trained properly.
The resulting vector field does not reflect the density accurately
because it should not have this attractor (i.e. stable fixed point)
outside of the manifold on which the density is concentrated.
Conceptually, the sampling procedure visits that spurious attractor because
it assumes that it corresponds to a region of high probability.
In some cases, this effect is regrettable but not catastrophic,
but in other situations we may end up with completely unusable samples.
In the experiments, training with enough of the examples involving 
sufficiently large corruption noise typically eliminates that problem.
}
\label{fig:spurious-maxima}
\end{figure}

\section{Conclusion}

Whereas auto-encoders have long been suspected of capturing information
about the data generating density, this work has clarified what some
of them are actually doing, showing that they can actually implicitly
recover the data generating density altogether.
We have shown that regularized auto-encoders such as the denoising
auto-encoder and a form of contractive auto-encoder are closely related to
each other and estimate local properties of the data generating density:
the first derivative (score) and second derivative of the log-density, as
well as the local mean. This contradicts the previous interpretation of
reconstruction error as being an energy function~\citep{ranzato-08} but
is consistent with our experimental findings.
Our results do not require the
reconstruction function to correspond to the derivative of an energy
function as in~\citet{Vincent-NC-2011}, but hold simply by virtue of
minimizing the regularized reconstruction error training criterion. This
suggests that minimizing a regularized reconstruction error may be an
alternative to maximum likelihood for unsupervised learning, avoiding the
need for MCMC in the inner loop of training, as in RBMs and deep Boltzmann
machines, analogously to score matching~\citep{Hyvarinen-2005,Vincent-NC-2011}. 
Toy experiments have confirmed that a good estimator of the
density can be obtained when this criterion is non-parametrically
minimized. The experiments have also confirmed that an MCMC could be
setup that approximately samples from the estimated model, by 
estimating energy differences to first order (which only requires the score)
to perform approximate Metropolis-Hastings MCMC.

Many questions remain open and deserve futher study. A big question is how
to generalize these ideas to discrete data, since we have heavily relied
on the notions of scores, i.e., of derivatives with respect to $x$.
A natural extension of the notion of score that could be applied to
discrete data is the notion of {\em relative energy}, or energy difference
between a point $x$ and a perturbation $\tilde{x}$ of $x$. This notion
has already been successfully applied to obtain the equivalent of score matching
for discrete models, namely ratio matching~\citep{Hyvarinen-2007}.
More generally, we would like to generalize to any form of reconstruction
error (for example many implementations of auto-encoders use a Bernouilli
cross-entropy as reconstruction loss function) and any (reasonable) form
of corruption noise (many implementations use masking or salt-and-pepper
noise, not just Gaussian noise). More fundamentally, the need to rely
on $\sigma \rightarrow 0$ is troubling, and getting rid of this limitation
would also be very useful. A possible solution to this limitation,
as well as adding the ability to handle both discrete and continuous
variables, has recently been proposed while this
article was under review~\citep{Bengio-et-al-arxiv-2013}.

It would also be interesting to generalize the results presented here to
other regularized auto-encoders besides the denoising and contractive types.
In particular, the commonly used sparse auto-encoders seem to fit the qualitative
pattern illustrated in Figure \ref{fig:1D-autoencoder} where a score-like vector
field arises out of the opposing forces of minimizing reconstruction error
and regularizing the auto-encoder.

We have mostly considered the harder case where the auto-encoder
parametrization does not guarantee the existence of an analytic formulation
of an energy function. It would be interesting to compare experimentally
and study mathematically these two formulations to assess how much is lost
(because the score function may be somehow inconsistent) or gained (because
of the less constrained parametrization).

\subsubsection*{Acknowledgements}
The authors thank Salah Rifai, 
Max Welling, Yutian Chen  
and Pascal Vincent for 
fruitful discussions, and acknowledge the funding support from NSERC,
Canada Research Chairs and CIFAR.

{\small
\bibliography{strings,strings-shorter,ml,aigaion-shorter}
}


\section{Appendix}

\subsection{Optimal DAE solution}

\setcounter{theorem}{0}
\begin{theorem}
\label{thm_app:DAE-optimal-solution}
Let $p$ be the probability density function of the data.
If we train a $DAE$ using the expected quadratic loss
and corruption noise $N(x) = x+\epsilon$ with
\[
\epsilon \sim \mathcal{N}\left(0, \sigma^2 I \right),
\]
then the optimal reconstruction function $r^{*}(x)$ will be given by
\begin{equation}
r^*(x) = \frac{ \mathbb{E}_{\epsilon} \left[ p(x-\epsilon) (x - \epsilon) \right]}{\mathbb{E}_{\epsilon} \left[ p(x-\epsilon) \right]} \label{eqn_app:opt-ratio}
\end{equation}
for values of $x$ where $p(x) \neq 0$.

Moreover, if we consider how the optimal reconstruction function
$r^{*}_{\sigma}(x)$ behaves asymptotically as $\sigma \rightarrow 0$,
we get that
\begin{equation}
r^*_\sigma(x) = x + \sigma^2 \frac{\partial \log p(x)}{\partial x} + o(\sigma^2) \hspace{1em} \textrm{as} \hspace{1em} \sigma \rightarrow 0. \nonumber 
\end{equation}
\end{theorem}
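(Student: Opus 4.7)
\textbf{Proof plan for Theorem~\ref{thm:DAE-optimal-solution}.}

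The plan is to split the argument in two: first derive the exact pointwise minimizer~(\ref{eqn:opt-ratio}) via a standard ``optimal regressor = conditional mean'' argument, then expand the resulting ratio in $\sigma$ by Taylor-expanding $p(x-\epsilon)$ around $x$ and taking Gaussian moments.

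First, I would rewrite the DAE criterion as a joint expectation over a clean sample $x \sim p$ and an independent noise $\epsilon \sim \mathcal{N}(0,\sigma^2 I)$, giving
\[
\mathcal{L}_{DAE}(r) \;=\; \int\!\!\int p(x)\, q_\sigma(\tilde x \mid x)\, \|x - r(\tilde x)\|^2 \, dx\, d\tilde x,
\]
where $q_\sigma(\tilde x\mid x)$ is the Gaussian kernel. Since the inner minimization can be carried out pointwise in $\tilde x$ (no coupling across $\tilde x$ for a nonparametric $r$), standard calculus on a quadratic gives $r^*(\tilde x) = \mathbb{E}[x \mid \tilde x]$. Writing this conditional expectation via Bayes' rule and then changing variables $\epsilon = \tilde x - x$ (legal because $q_\sigma$ is symmetric in $\tilde x - x$) converts the conditional mean into
\[
r^*(\tilde x) \;=\; \frac{\int p(\tilde x-\epsilon)(\tilde x-\epsilon)\,q_\sigma(\epsilon)\,d\epsilon}{\int p(\tilde x-\epsilon)\,q_\sigma(\epsilon)\,d\epsilon},
\]
which is exactly (\ref{eqn:opt-ratio}) after renaming $\tilde x \to x$. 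This step is well-defined whenever the denominator (equivalently $p(x)$ up to $O(\sigma^2)$ corrections) is nonzero, matching the hypothesis.

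Next, for the asymptotic expansion, I would Taylor-expand $p(x-\epsilon) = p(x) - \nabla p(x)^{\!\top}\epsilon + \tfrac{1}{2}\epsilon^{\!\top} H(x)\,\epsilon + R(x,\epsilon)$ where $H$ is the Hessian of $p$ and the remainder $R$ is cubic in $\epsilon$. Taking Gaussian expectations and using $\mathbb{E}[\epsilon] = 0$, $\mathbb{E}[\epsilon\epsilon^{\!\top}] = \sigma^2 I$, and vanishing of all odd Gaussian moments, the denominator becomes $p(x) + \tfrac{\sigma^2}{2}\mathrm{tr}\,H(x) + o(\sigma^2)$, while a direct computation using $\mathbb{E}[\epsilon\epsilon^{\!\top}]\,\nabla p(x) = \sigma^2 \nabla p(x)$ shows
\[
\mathbb{E}_\epsilon[p(x-\epsilon)(x-\epsilon)] \;=\; x\,\mathbb{E}_\epsilon[p(x-\epsilon)] \;+\; \sigma^2\,\nabla p(x) \;+\; o(\sigma^2).
\]
Dividing and simplifying ($1/(p(x)+O(\sigma^2)) = 1/p(x) + O(\sigma^2)$ when $p(x)\neq 0$) yields $r^*_\sigma(x) = x + \sigma^2\,\nabla p(x)/p(x) + o(\sigma^2) = x + \sigma^2\,\partial \log p(x)/\partial x + o(\sigma^2)$.

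The main obstacle is bookkeeping on the remainder: one must justify that the cubic term in the Taylor expansion contributes only $o(\sigma^2)$ (not $O(\sigma^2)$) after integration against the Gaussian and after the division. The cleanest way is to note that cubic-in-$\epsilon$ terms integrate to zero by symmetry and the fourth-order tail is $O(\sigma^4)$, provided $p$ has the requisite smoothness and enough decay so that the expectations are finite; these are implicit regularity hypotheses consistent with the rest of the paper. Everything else is routine calculation, and the two conclusions stated in the theorem fall out directly.
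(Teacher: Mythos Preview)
Your proposal is correct and follows essentially the same route as the paper: a pointwise-in-$\tilde x$ minimization of the quadratic loss to obtain the ratio~(\ref{eqn:opt-ratio}), followed by a Taylor expansion of $p$ inside the Gaussian expectation and use of $\mathbb{E}[\epsilon]=0$, $\mathbb{E}[\epsilon\epsilon^{\!\top}]=\sigma^2 I$ to extract the $\sigma^2\nabla\log p$ term. The only cosmetic differences are that the paper differentiates the loss directly rather than invoking the conditional-mean characterization, and it first uses the symmetry of the Gaussian to replace $-\epsilon$ by $+\epsilon$ before expanding; your handling of the remainder and the division is also the same as the paper's.
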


\begin{proof}
The first part of this proof is to get to equation (\ref{eqn_app:opt-ratio})
without assuming that $\sigma \rightarrow 0$.

By using an auxiliary variable $\tilde{x} = x + \epsilon$,
we can rewrite this loss in a way that puts
the quantity $r(\tilde{x})$ in focus and allows us to
perform the minimization with respect to each choice of
$r(\tilde{x})$ independantly.
That is, we have that
\begin{equation*}
\mathcal{L}_{\textrm{DAE}} = \int_{\mathbb{R}^d} \mathbb{E}_{\epsilon \sim \mathcal{N}\left(0, \sigma^2 I \right)} \left[ p(\tilde{x}-\epsilon) \left\|r(\tilde{x}) - \tilde{x} + \epsilon \right\|^2_2 \right] d\tilde{x}
\end{equation*}
which can be differentiated with respect to the
quantity $r(\tilde{x})$ and set to be equal to $0$.
Denoting the optimum by $r^*(\tilde{x})$, we get

\begin{equation*}
0 = \mathbb{E}_{\epsilon \sim \mathcal{N}\left(0, \sigma^2 I \right)} \left[ p(\tilde{x}-\epsilon) \left(r^*(\tilde{x}) - \tilde{x} + \epsilon \right) \right]
\end{equation*}
\begin{equation*}
\mathbb{E}_{\epsilon \sim \mathcal{N}\left(0, \sigma^2 I \right)} \left[ p(\tilde{x}-\epsilon) r^*(\tilde{x}) \right] = \mathbb{E}_{\epsilon \sim \mathcal{N}\left(0, \sigma^2 I \right)} \left[ p(\tilde{x}-\epsilon) (\tilde{x} - \epsilon) \right]
\end{equation*}
\begin{eqnarray}
r^*(\tilde{x}) & = & \frac{\mathbb{E}_{\epsilon \sim \mathcal{N}\left(0, \sigma^2 I \right)} \left[ p(\tilde{x}-\epsilon) (\tilde{x} - \epsilon) \right]}{\mathbb{E}_{\epsilon \sim \mathcal{N}\left(0, \sigma^2 I \right)} \left[ p(\tilde{x}-\epsilon) \right]}. \label{eqn_app:alternative-dae-loss-4}
\end{eqnarray}
We used $\tilde{x}$ out of convenience in equation (\ref{eqn_app:alternative-dae-loss-4}).
Theorem \ref{thm_app:DAE-optimal-solution} is just nicer to read when using $r(x)$, and we switch back to using $r(x)$ for the
rest of this proof.

Now, for the second part of this proof, we study the behavior of the solution $r_\sigma^*(x)$ as $\sigma$ approaches $0$.
We start with equation (\ref{eqn_app:alternative-dae-loss-4}) that we rewrite in a way to be able to pull out the leading
term $x$. Given the symmetry of the distribution of $\epsilon \sim \mathcal{N}\left(0, \sigma^2 I \right)$,
we can also simplify equation (\ref{eqn_app:alternative-dae-loss-4}) by converting all the $-\epsilon$ into $+\epsilon$.

\begin{eqnarray*}
\frac{\mathbb{E}_\epsilon \left[ p(x+\epsilon) (x + \epsilon) \right]}{\mathbb{E}_\epsilon \left[ p(x+\epsilon) \right]} & = & \frac{\mathbb{E}_\epsilon \left[ p(x+\epsilon) x \right]}{\mathbb{E}_\epsilon \left[ p(x+\epsilon) \right]} + \frac{\mathbb{E}_\epsilon \left[ p(x+\epsilon) \epsilon \right]}{\mathbb{E}_\epsilon \left[ p(x+\epsilon) \right]} \\
& = & x + \frac{\mathbb{E}_\epsilon \left[ p(x+\epsilon) \epsilon \right]}{\mathbb{E}_\epsilon \left[ p(x+\epsilon) \right]}
\end{eqnarray*}

We now look at the Taylor expansion of $p(x+\epsilon)$ around $x$.
We perform this expansion inside of the expectation, where $\epsilon$
just represents as small quantity of scale $\sigma$.
\begin{equation}
p(x+\epsilon) = p(x) + \nabla p(x)^T \epsilon + \frac{1}{2}\epsilon^T \frac{\partial^2 p(x)}{\partial x^2} \epsilon + o(\epsilon^2) \hspace{1em} \textrm{as} \hspace{1em} \epsilon \rightarrow 0
\end{equation}

When taking the expectation of $p(x+\epsilon)$ with respect to $\epsilon$,
we get a zero for all the terms containing an odd power of $\epsilon$,
for reasons of symmetry. When we take the expectation of $p(x+\epsilon)\epsilon$
instead, all the terms with an even power of $\epsilon$ in the above expectation
will vanish. Thus, we get that
\begin{eqnarray*}
\mathbb{E}_\epsilon \left[ p(x+\epsilon) \epsilon \right] & = & \mathbb{E}_\epsilon[\epsilon \epsilon^T] \nabla p(x)  + \mathbb{E}_\epsilon[o(\epsilon^2)] = \sigma^2 \nabla p(x) + o(\sigma^2) \\
\mathbb{E}_\epsilon \left[ p(x+\epsilon) \right] & = & p(x) + \mathcal{O}(\sigma^2) \\
\end{eqnarray*}

Provided that $p(x) \neq 0$, our quotient can be rewritten as
\begin{equation}
\label{eqn_app:simplified_quotient}
\frac{\sigma^2 \nabla p(x) + o(\sigma^2)}{p(x) + \mathcal{O}(\sigma^2)} = \frac{\sigma^2 \nabla \log p(x) + o(\sigma^2)}{1 + \mathcal{O}(\sigma^2)}.
\end{equation}

We can use the basic geometric series expansion to write
\begin{equation*}
\frac{1}{1 + \mathcal{O}(\sigma^2)} = 1 - \mathcal{O}(\sigma^2) = 1 + \mathcal{O}(\sigma^2)
\end{equation*}
so that equation (\ref{eqn_app:simplified_quotient}) is now
\begin{equation*}
\frac{\sigma^2 \nabla \log p(x) + o(\sigma^2)}{1 + \mathcal{O}(\sigma^2)} = \sigma^2 \nabla \log p(x) + o(\sigma^2).
\end{equation*}
This is the result that we wanted.
\end{proof}

Note that $p(x)$ was treated as a constant when we studied the asymptotic behavior
as $\sigma \rightarrow 0$. In the Taylor expansion around some given $x$,
we want to stuff all the higher-order derivatives of $p(x)$ into the asymptotic
remainder term. It is quite possible that the size of the $\sigma$ required to
do so would depend on the particular $x$, that there would not be a uniform $\sigma>0$
suitable for all the $x$. Therefore we can
only say that we are dealing with pointwise convergence (not uniform convergence)
in our formula for the asymptotic expansion of $r_\sigma^*(x)$.
For practical applications, we do not need more than that.

The assumption that $p(x) \neq 0$ could also be viewed as problematic, but
if we go back to the definition of the DAE loss, then any point where $p(x) = 0$
would never produce a training example and would not even contribute in the
definition of the expectation of DAE loss. The correct value to assign to $r$ at such a point is
not well-defined.

\subsection{Relationship between Contractive Penalty and Denoising Criterion}

\setcounter{theorem}{0}
\begin{proposition}
\label{prp:DAE-connection-RCAE}
Let $p$ be the probability density function of the data.
Consider a $DAE$ using the expected quadratic loss
and corruption noise $N(x) = x+\epsilon$, with $\epsilon \sim \mathcal{N}\left(0, \sigma^2 I \right)$.
If we assume that the non-parametric solutions $r_\sigma(x)$ satistfies
\[
r_\sigma(x)=x+o(1) \hspace{1em} \textrm{as} \hspace{1em} \sigma \rightarrow 0,
\]
then we can rewrite the loss as 
\[
\mathcal{L}_{\textrm{DAE }} (r_\sigma) = \E\left[\|r_\sigma(x) - x\|^2_2 + \sigma^2 \left\|\frac{\partial r_\sigma(x)}{\partial x}\right\|^2_F\right]
+ o(\sigma^2) \hspace{1em} \textrm{as} \hspace{1em} \sigma \rightarrow 0
\]
where the expectation is taken with respect to $X$, whose distribution is given by $p$.
\end{proposition}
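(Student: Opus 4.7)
The plan is to start from the DAE loss
\[
\mathcal{L}_{\textrm{DAE}}(r_\sigma) = \E_{x}\E_{\epsilon}\bigl[\|r_\sigma(x+\epsilon) - x\|_2^2\bigr],
\]
Taylor-expand $r_\sigma$ around $x$ inside the inner expectation, and then collect terms according to their order in $\sigma$ using the moment structure of $\epsilon\sim\mathcal{N}(0,\sigma^2 I)$.

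Concretely, I would first write $r_\sigma(x+\epsilon) = r_\sigma(x) + J(x)\epsilon + R(x,\epsilon)$, where $J(x) = \partial r_\sigma(x)/\partial x$ and $R(x,\epsilon) = O(\|\epsilon\|^2)$ comes from the second-order remainder (this is where the twice-differentiability of $r_\sigma$ is used). Subtracting $x$ and expanding the squared norm gives
\[
\|r_\sigma(x+\epsilon)-x\|_2^2 = \|r_\sigma(x)-x\|_2^2 + 2(r_\sigma(x)-x)^{T}\bigl(J(x)\epsilon + R(x,\epsilon)\bigr) + \|J(x)\epsilon + R(x,\epsilon)\|_2^2.
\]
Taking expectation over $\epsilon$ kills every term with an odd power of $\epsilon$ by symmetry. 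The only terms that survive at order $\sigma^2$ are $\|r_\sigma(x)-x\|_2^2$ and $\E_\epsilon\|J(x)\epsilon\|_2^2 = \sigma^2\|J(x)\|_F^2$, using $\E[\epsilon\epsilon^T]=\sigma^2 I$.

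The remaining pieces are the error terms that I want to show are $o(\sigma^2)$. There are three of them: the cross term $2(r_\sigma(x)-x)^{T}\E_\epsilon[R(x,\epsilon)]$, the cross term $2\E_\epsilon[(J(x)\epsilon)^{T}R(x,\epsilon)]$, and $\E_\epsilon\|R(x,\epsilon)\|_2^2$. The last is $O(\sigma^4) = o(\sigma^2)$ because $R=O(\|\epsilon\|^2)$ and $\E\|\epsilon\|^4 = O(\sigma^4)$. The middle term is $O(\sigma^3)$ since it is an odd polynomial times an even one under a Gaussian — more carefully, $\E[\epsilon\,R]=O(\sigma^3)$ because $R$ contributes $\|\epsilon\|^2$ and the resulting third moment scales as $\sigma^3$. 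The crucial cross term is the first one: $\E_\epsilon[R]=O(\sigma^2)$, and the hypothesis $r_\sigma(x)-x = o(1)$ as $\sigma\to 0$ is exactly what turns $o(1)\cdot O(\sigma^2)$ into $o(\sigma^2)$. Finally, taking the outer expectation over $x$ preserves these asymptotic bounds pointwise, yielding the claimed identity.

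The main obstacle is precisely this last cross term: without the assumption $r_\sigma(x)=x+o(1)$, one could only conclude it is $O(\sigma^2)$, which would collapse into the leading contractive penalty rather than disappear into the remainder. A secondary technical issue is that the $o(\sigma^2)$ I obtain is pointwise in $x$, and promoting this to an $o(\sigma^2)$ after integrating against $p(x)$ is not automatic in general; I would handle this either by assuming mild regularity of the remainder (so that dominated convergence applies) or by interpreting the statement, as the authors appear to throughout this paper, as a pointwise asymptotic expansion of the integrand that is then simply written under the expectation.
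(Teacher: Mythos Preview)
Your proposal is correct and matches the paper's proof essentially line for line: Taylor-expand $r_\sigma(x+\epsilon)$ around $x$, expand the squared norm, use $\E[\epsilon]=0$ and $\E[\epsilon\epsilon^{T}]=\sigma^{2}I$ to isolate the Frobenius term, and invoke $r_\sigma(x)-x=o(1)$ to turn the cross term $(r_\sigma(x)-x)^{T}\E_\epsilon[R]$ from $O(\sigma^{2})$ into $o(\sigma^{2})$. Your explicit bookkeeping of the three residual terms and your caveat about promoting pointwise $o(\sigma^{2})$ through the outer expectation are, if anything, more careful than the paper, which simply writes $o(1)\mathcal{O}(\sigma^{2})$ under $\E_X$ without further comment.
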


\begin{proof}
We can drop the $\sigma$ index from $r_\sigma$ to lighten the notation if we just keep in mind
that we are considering the particular family of solutions such that $r(x)-x = o(1)$.
With a Taylor expansion around $x$ we have that
\[
r(x+\epsilon) = r(x)+\frac{\partial r(x)}{\partial x}\epsilon + \mathcal{O}(\epsilon^2).
\]
The DAE loss involves taking the expectation with respect to $X$ and with respect to the noise $\epsilon$.
We substitute the Taylor expansion into ${\cal L}_{DAE}$, we express the norm as a dot product,
and we show how the expectation with respect to $\epsilon$ cancels out certain terms.
\begin{eqnarray}
&   & \mathbb{E}_{X}\mathbb{E}_{\epsilon}\left[\left\Vert x-\left(r(x)+\frac{\partial r(x)}{\partial x}\epsilon+\mathcal{O}(\epsilon^{2})\right)\right\Vert _{2}^{2}\right] \nonumber \\
& = & \mathbb{E}_{X}\left[\mathbb{E}_{\epsilon}\left[\|x-r(x)\|_{2}^{2}\right]-2\left(x-r(x)\right)^{T}\mathbb{E}_{\epsilon}\left[\frac{\partial r(x)}{\partial x}\epsilon+\mathcal{O}(\epsilon^{2})\right]+\mathbb{E}_{\epsilon}\left[\epsilon^{T}\frac{\partial r(x)}{\partial x}^{T}\frac{\partial r(x)}{\partial x}\epsilon + \mathcal{O}(\epsilon^{3})\right]\right] \nonumber \\
& = & \mathbb{E}_{X}\left[\left[\|r(x)-x\|_{2}^{2}\right]+o(1)\mathcal{O}(\sigma^{2})+\sigma^{2}\textrm{Tr}\left(\frac{\partial r(x)}{\partial x}^{T}\frac{\partial r(x)}{\partial x}\right)\right]+\mathcal{O}(\sigma^{3}) \nonumber \\
& = & \mathbb{E}_{X}\left[\|r(x)-x\|_{2}^{2}\right]+o(\sigma^{2})+\sigma^{2}\mathbb{E}_{X}\left[\left\Vert \frac{\partial r(x)}{\partial x}\right\Vert _{F}^{2}\right] \nonumber
\end{eqnarray}

The fact that we get a trace (rewritten as Frobenius norm) comes from the
assumption that the noise is Gaussian and is independant from $X$. This explains
why $\E\left[\epsilon \epsilon^T\right]=\sigma^2 I$ and $\E[\epsilon]=0$.
It also motivates many of our substitutions such as
$\mathbb{E}_{\epsilon}\left[\mathcal{O}(\epsilon^{k})\right]=\mathcal{O}(\sigma^{k})$.
\end{proof}

\subsection{Calculus of Variations}

\begin{theorem}
Let $p$ be a probability density function that is continuously differentiable
once and with support $\mathbb{R}^{d}$ (i.e. $\forall x\in\mathbb{R}^{d}$
we have $p(x)\neq0$). Let $\mathcal{L}_{\sigma}$ be the loss function defined
by
\[
\mathcal{L}_{\sigma}(r)=\int_{\mathbb{R}^{d}}p(x)\left[\left\Vert r(x)-x\right\Vert _{2}^{2}+{\sigma^2}\left\Vert \frac{\partial r(x)}{\partial x}\right\Vert _{F}^{2}\right]dx
\]
for $r:\mathbb{R}^{d}\rightarrow\mathbb{R}^{d}$ assumed to be
differentiable twice, and $0 \leq {\sigma}\in\mathbb{R}$ used as factor
to the penalty term.

Let $r_{\sigma}^*(x)$ denote the optimal function that minimizes
$\mathcal{L}_{\sigma}$. Then we have that
\[
r_{\sigma}^*(x) = x + {\sigma^2}\frac{\partial\log p(x)}{\partial x}+o({\sigma^2})\hspace{1em}\textrm{as}\hspace{1em}{\sigma} \rightarrow 0.
\]
Moreover, we also have the following expression for the derivative
\[
\frac{\partial r_{\sigma}^*(x)}{\partial x} = I + {\sigma^2}\frac{\partial^2\log p(x)}{\partial x^2} + o({\sigma^2})\hspace{1em}\textrm{as}\hspace{1em}{\sigma} \rightarrow 0.
\]
Both these asymptotic expansions are to be understood
in a context where we consider $\left\{r_{\sigma}^*(x)\right\}_{{\sigma} \geq 0}$
to be a family of optimal functions minimizing $\mathcal{L}_{\sigma}$
for their corresponding value of ${\sigma}$.
The asymptotic expansions are applicable point-wise in $x$, that is,
with any fixed $x$ we look at the behavior as ${\sigma} \rightarrow 0$.
\end{theorem}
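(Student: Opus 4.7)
The plan is to derive the Euler–Lagrange equation associated with the functional $\mathcal{L}_\sigma$ and then perform an asymptotic expansion of its solution in $\sigma^2$.

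First I would treat the integrand
$F(x, r, \nabla r) = p(x)\bigl[\|r(x)-x\|_2^2 + \sigma^2 \|\partial r/\partial x\|_F^2\bigr]$
as a Lagrangian depending on $x$, on the vector function $r$, and on its Jacobian matrix $\partial r/\partial x$. Writing the Euler–Lagrange equations componentwise, for each $i$ we have
$\partial F/\partial r_i = 2p(x)(r_i(x)-x_i)$ and $\partial F/\partial (\partial_j r_i) = 2\sigma^2 p(x)\, \partial_j r_i$, and the divergence of the latter in $j$ produces $2\sigma^2\bigl[\nabla p(x)\cdot \nabla r_i(x) + p(x)\Delta r_i(x)\bigr]$. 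Dividing through by $2p(x)$ (which is legitimate because $p>0$ everywhere by hypothesis) and reassembling components into a single vector identity, the stationarity condition becomes the pointwise PDE
\begin{equation*}
r(x) - x \;=\; \sigma^2\Bigl[\,\tfrac{\partial r(x)}{\partial x}\,\nabla \log p(x) \;+\; \Delta r(x)\,\Bigr],
\end{equation*}
where $\Delta r$ denotes the componentwise Laplacian. This is the key identity the minimizer $r_\sigma^*$ must satisfy.

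Next I would postulate the asymptotic ansatz $r_\sigma^*(x) = x + \sigma^2 h(x) + R_\sigma(x)$ with $R_\sigma(x) = o(\sigma^2)$, and also $\partial R_\sigma/\partial x = o(\sigma^2)$, $\Delta R_\sigma = o(\sigma^2)$ so that the ansatz can be differentiated term by term. Plugging this into the Euler–Lagrange PDE, the left-hand side becomes $\sigma^2 h(x) + o(\sigma^2)$. On the right-hand side, $\partial r_\sigma^*/\partial x = I + \sigma^2 \partial h/\partial x + o(\sigma^2)$, so $(\partial r_\sigma^*/\partial x)\nabla \log p(x) = \nabla \log p(x) + O(\sigma^2)$, and $\Delta r_\sigma^*(x) = O(\sigma^2)$. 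Multiplying by the outer $\sigma^2$, the right-hand side reduces to $\sigma^2 \nabla \log p(x) + o(\sigma^2)$. Matching the coefficients of $\sigma^2$ forces $h(x) = \partial \log p(x)/\partial x$, which yields the claimed expansion. The Jacobian formula (\ref{eq:calcvarminloss-drx}) then follows simply by differentiating the expansion of $r_\sigma^*$ in $x$, using that the remainder's spatial derivative is also $o(\sigma^2)$.

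The main obstacle is not the formal matching, which is essentially mechanical, but justifying rigorously that the family of minimizers $r_\sigma^*$ admits an expansion in $\sigma^2$ whose remainder behaves well under differentiation. One sensible route is to note that the leading-order candidate $x + \sigma^2 \nabla \log p(x)$ satisfies the Euler–Lagrange equation up to residual $o(\sigma^2)$, then argue by the strict convexity of $\mathcal{L}_\sigma$ in $r$ (the quadratic reconstruction term plus nonnegative contractive penalty) that any other admissible perturbation increases the loss, so the actual minimizer must agree with this candidate to the order claimed. Alternatively, one can invoke Proposition \ref{prp:DAE-connection-RCAE} to transfer the asymptotic expansion already established for the DAE minimizer in Theorem \ref{thm:DAE-optimal-solution} over to the RCAE minimizer, since the two loss functionals agree up to $o(\sigma^2)$ on the relevant family of $r$. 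Both approaches ultimately reduce to pointwise convergence statements, which is exactly the mode of convergence asserted in the theorem.
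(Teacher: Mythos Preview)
Your approach is essentially the paper's: derive the Euler--Lagrange equation for $\mathcal{L}_\sigma$ (the paper arrives at exactly your PDE, written componentwise) and then expand the solution perturbatively in $\sigma^2$. The one difference worth noting concerns the expansion step and speaks directly to the obstacle you flag. Rather than postulating the ansatz $r = x + \sigma^2 h + o(\sigma^2)$ with assumed $o(\sigma^2)$ bounds on the derivatives of the remainder, the paper substitutes the Euler--Lagrange relation $r_k(x) = x_k + \sigma^2\sum_i\bigl(\partial_i \log p\,\partial_i r_k + \partial_i^2 r_k\bigr)$ back into its own right-hand side, yielding $r_k = x_k + \sigma^2\,\partial_k \log p + \sigma^2\sum_i \partial_i^2 r_k + \sigma^4\rho$; differentiating this identity twice in $x_l$ then shows that every term of $\partial_l^2 r_k$ carries a factor of $\sigma^2$, so the leftover Laplacian contribution is actually $O(\sigma^4)$. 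This self-referential bootstrap extracts the expansion directly from the PDE the minimizer must satisfy, so the paper never needs your hypothesis that $\partial R_\sigma/\partial x$ and $\Delta R_\sigma$ are $o(\sigma^2)$, and it does not invoke either the convexity argument or the DAE transfer you propose.
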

\begin{proof}

This proof is done in two parts.
In the first part,
the objective is to get to equation (\ref{eq:rk-xk-equation-to-solve})
that has to be satisfied for the optimum solution.

We will leave out the ${\sigma}$ indices from the expressions involving $r(x)$
to make the notation lighter. We have a more important need for indices $k$ in $r_k(x)$
that denote the $d$ components of $r(x)\in\mathbb{R}^d$.

We treat ${\sigma}$ as given and constant for the first part of this proof.

In the second part we work out the asymptotic expansion
in terms of ${\sigma}$. We again work with the implicit
dependence of $r(x)$ on ${\sigma}$.

\textit{(part 1 of the proof)}

We make use of the Euler-Lagrange equation from
the Calculus of Variations. We would refer the reader to either
\citep{dacorogna2004introduction} or Wikipedia for more on the topic. Let
\[
f(x_{1},\ldots,x_{n},r,r_{x_{1}},\ldots,r_{x_{n}})=p(x)\left[\left\Vert r(x)-x\right\Vert _{2}^{2}+{\sigma^2}\left\Vert \frac{\partial r(x)}{\partial x}\right\Vert _{F}^{2}\right]
\]
where $x=(x_{1},\ldots,x_{d})$ , $r(x)=(r_{1}(x),\ldots,r_{d}(x))$
and $r_{x_{i}}=\frac{\partial f}{\partial x_{i}}$.


We can rewrite the loss $\mathcal{L}(r)$ more explicitly as

\begin{eqnarray}
\mathcal{L}(r) & = & \int_{\mathbb{R}^{d}}p(x)\left[\sum_{i=1}^{d}\left(r_{i}(x)-x_{i}\right)^{2}+{\sigma^2}\sum_{i=1}^{d}\sum_{j=1}^{d}\frac{\partial r_{i}(x)}{\partial x_{j}}^{2}\right]dx\nonumber \\
 & = & \sum_{i=1}^{d}\int_{\mathbb{R}^{d}}p(x)\left[\left(r_{i}(x)-x_{i}\right)^{2}+{\sigma^2}\sum_{j=1}^{d}\frac{\partial r_{i}(x)}{\partial x_{j}}^{2}\right]dx\label{eq:hinting-at-separation}
\end{eqnarray}

to observe that the components $r_{1}(x),\ldots,r_{d}(x)$ can each
be optimized separately.

\vphantom{}

The Euler-Lagrange equation to be satisfied at the optimal $r:\mathbb{R}^{d}\rightarrow\mathbb{R}^{d}$
is
\[
\frac{\partial f}{\partial r}=\sum_{i=1}^{d}\frac{\partial}{\partial x_{i}}\frac{\partial f}{\partial r_{x_{i}}}.
\]

In our situation, the expressions from that equation are given by
\[
\frac{\partial f}{\partial r}=2(r(x)-x)p(x)
\]

\[
\frac{\partial f}{\partial r_{x_{i}}}=2{\sigma^2} p(x)\left[\begin{array}{cccc}
\frac{\partial r_{1}}{\partial x_{i}} & \frac{\partial r_{2}}{\partial x_{i}} & \cdots & \frac{\partial r_{d}}{\partial x_{i}}\end{array}\right]^{T}
\]

\begin{eqnarray*}
\frac{\partial}{\partial x_{i}}\left(\frac{\partial f}{\partial r_{x_{i}}}\right) & = & 2{\sigma^2}\frac{\partial p(x)}{\partial x_{i}}\left[\begin{array}{cccc}
\frac{\partial r_{1}}{\partial x_{i}} & \frac{\partial r_{2}}{\partial x_{i}} & \cdots & \frac{\partial r_{d}}{\partial x_{i}}\end{array}\right]^{T}\\
 &  & +2{\sigma^2} p(x)\left[\begin{array}{cccc}
\frac{\partial^{2}r_{1}}{\partial x_{i}^{2}} & \frac{\partial^{2}r_{2}}{\partial x_{i}^{2}} & \cdots & \frac{\partial^{2}r_{d}}{\partial x_{i}^{2}}\end{array}\right]^{T}
\end{eqnarray*}

and the equality to be satisfied at the optimum becomes

\begin{equation}
(r(x)-x)p(x)={\sigma^2}\sum_{i=1}^{d}\left[\begin{array}{c}
\frac{\partial p(x)}{\partial x_{i}}\frac{\partial r_{1}}{\partial x_{i}}+p(x)\frac{\partial^{2}r_{1}}{\partial x_{i}^{2}}\\
\vdots\\
\frac{\partial p(x)}{\partial x_{i}}\frac{\partial r_{d}}{\partial x_{i}}+p(x)\frac{\partial^{2}r_{d}}{\partial x_{i}^{2}}
\end{array}\right].\label{eq:euler-lagrange-solution-Rd}
\end{equation}

As equation (\ref{eq:hinting-at-separation}) hinted, the expression
(\ref{eq:euler-lagrange-solution-Rd}) can be decomposed into the
different components $r_{k}(x):\mathbb{R}^{d}\rightarrow\mathbb{R}$
that make $r$. For $k=1,\ldots,d$ we get

\[
(r_{k}(x)-x_{k})p(x)={\sigma^2}\sum_{i=1}^{d}\left(\frac{\partial p(x)}{\partial x_{i}}\frac{\partial r_{k}(x)}{\partial x_{i}}+p(x)\frac{\partial^{2}r_{k}(x)}{\partial x_{i}^{2}}\right).
\]

As $p(x)\neq0$ by hypothesis, we can divide all the terms by $p(x)$
and note that $\frac{\partial p(x)}{\partial x_{i}}/p(x)=\frac{\partial\log p(x)}{\partial x_{i}}$.

\vphantom{}

We get

\begin{equation}\label{eq:rk-xk-equation-to-solve}
r_{k}(x)-x_{k}={\sigma^2}\sum_{i=1}^{d}\left(\frac{\partial\log p(x)}{\partial x_{i}}\frac{\partial r_{k}(x)}{\partial x_{i}}+\frac{\partial^{2}r_{k}(x)}{\partial x_{i}^{2}}\right).
\end{equation}

This first thing to observe is that when ${\sigma^2}=0$ the solution
is just $r_{k}(x)=x_{k}$, which translates into $r(x)=x$. This is
not a surprise because it represents the perfect reconstruction value
that we get when we the penalty term vanishes in the loss function.

\vphantom{}

\textit{(part 2 of the proof)}

This linear partial differential equation (\ref{eq:rk-xk-equation-to-solve}) can be used
as a recursive relation for $r_{k}(x)$
to obtain a Taylor series in ${\sigma^2}$.
The goal is to obtain an expression of the form
\begin{equation}\label{eq:rk-eq-xk-with-h-unknown}
r(x)
= x + {\sigma^2} h(x)
+ o({\sigma^2})\hspace{1em}\textrm{as}\hspace{1em}{\sigma^2}\rightarrow0
\end{equation}
where we can solve for $h(x)$
and for which we also have that
\[
\frac{\partial r(x)}{\partial x}
= I + {\sigma^2}\frac{\partial h(x)}{\partial x}
+ o({\sigma^2})\hspace{1em}\textrm{as}\hspace{1em}{\sigma^2}\rightarrow0.
\]
We can substitute in the right-hand side of equation (\ref{eq:rk-eq-xk-with-h-unknown})
the value for $r_{k}(x)$ that we get from equation (\ref{eq:rk-eq-xk-with-h-unknown})
itself. This substitution would be pointless in any other situation
where we are not trying to get a power series in terms of ${\sigma^2}$ around $0$.

\begin{eqnarray}
r_{k}(x) & = x_{k} & + {\sigma^2}\sum_{i=1}^{d}\left(\frac{\partial\log p(x)}{\partial x_{i}}\frac{\partial r_{k}(x)}{\partial x_{i}}+\frac{\partial^{2}r_{k}(x)}{\partial x_{i}^{2}}\right) \\
         & = x_{k} & + {\sigma^2}\sum_{i=1}^{d}\left(\frac{\partial\log p(x)}{\partial x_{i}}\frac{\partial}{\partial x_{i}}\left(x_{k}+{\sigma^2}\sum_{j=1}^{d}\left(\frac{\partial\log p(x)}{\partial x_{j}}\frac{\partial r_{k}(x)}{\partial x_{j}}+\frac{\partial^{2}r_{k}(x)}{\partial x_{j}^{2}}\right)\right)\right)\\
         &         & + {\sigma^2}\sum_{i=1}^{d}\frac{\partial^{2}r_{k}(x)}{\partial x_{i}^{2}}\\
         & = x_{k} & + {\sigma^2}\sum_{i=1}^{d}\frac{\partial\log p(x)}{\partial x_{i}}\mathbb{I}\left(i=k\right)+{\sigma^2}\sum_{i=1}^{d}\frac{\partial^{2}r_{k}(x)}{\partial x_{i}^{2}}\\
         &         & + {(\sigma^2})^{2}\sum_{i=1}^{d}\sum_{j=1}^{d}\left(\frac{\partial\log p(x)}{\partial x_{i}}\frac{\partial}{\partial x_{i}}\left(\frac{\partial\log p(x)}{\partial x_{j}}\frac{\partial r_{k}(x)}{\partial x_{j}}+\frac{\partial^{2}r_{k}(x)}{\partial x_{j}^{2}}\right)\right)\\
r_{k}(x) & = x_{k} & + {\sigma^2}\frac{\partial\log p(x)}{\partial x_{k}} + {\sigma^2}\sum_{i=1}^{d}\frac{\partial^{2}r_{k}(x)}{\partial x_{i}^{2}}+{\sigma^4}\rho({\sigma^2},x) \label{eq:rk_subs_in_subs_last_line}
\end{eqnarray}

Now we would like to get rid of that
${\sigma^2}\sum_{i=1}^{d}\frac{\partial^{2}r_{k}(x)}{\partial x_{i}^{2}}$
term by showing that it is a term that involves only powers of ${\sigma^4}$
or higher. We get this by showing what we get by differentiating the expression for $r_{k}(x)$
in line (\ref{eq:rk_subs_in_subs_last_line}) twice with respect to some $l$.

\[
\frac{\partial r_{k}(x)}{\partial x_{l}}=\mathbb{I}\left(i=l\right)+{\sigma^2}\frac{\partial^{2}\log p(x)}{\partial x_{l}\partial x_{k}}
+ {\sigma^2} \frac{\partial}{\partial x_{l}} \left( \sum_{i=1}^{d}\frac{\partial^{2}r_{k}(x)}{\partial x_{i}^{2}}+{\sigma^2} \rho({\sigma^2},x) \right)
\]

\[
\frac{\partial^{2}r_{k}(x)}{\partial x_{l}^{2}}={\sigma^2}\frac{\partial^{3}\log p(x)}{\partial x_{l}^{2}\partial x_{k}}
+ {\sigma^2} \frac{\partial}{\partial x_{l}^{2}} \left( \sum_{i=1}^{d}\frac{\partial^{2}r_{k}(x)}{\partial x_{i}^{2}}+{\sigma^2} \rho({\sigma^2},x) \right)
\]

Since ${\sigma^2}$ is a common factor in all the terms
of the expression of $\frac{\partial^{2}r_{k}(x)}{\partial x_{l}^{2}}$
we get what we needed. That is,
\[
r_{k}(x)=x_{k}+{\sigma^2}\frac{\partial\log p(x)}{\partial x_{k}}+{\sigma^4}\eta({\sigma^2},x).
\]
This shows that
\[
r(x)=x+{\sigma^2}\frac{\partial\log p(x)}{\partial x}+o({\sigma^2})\hspace{1em}\textrm{as}\hspace{1em}{\sigma^2}\rightarrow0
\]
and
\[
\frac{\partial r(x)}{\partial x}=I+{\sigma^2}\frac{\partial^{2}\log p(x)}{\partial x^{2}}+o({\sigma^2})\hspace{1em}\textrm{as}\hspace{1em}{\sigma^2}\rightarrow0
\]
which completes the proof.
\end{proof}

\subsection{Local Mean}
\label{sec:local-moments}

In preliminary work~\citep{Bengio-arxiv-moments-2012}, we studied how
the optimal reconstruction could possibly estimate so-called local moments.
We revisit this question here, with more appealing and precise results.

What previous work on denoising and contractive auto-encoders suggest is
that regularized auto-encoders can {\em capture the local structure of the
  density} through the value of the encoding (or reconstruction) function and
its derivative. In particular,
\citet{Rifai-icml2012,Bengio-arxiv-moments-2012} argue that the first and
second derivatives tell us in which directions it makes sense to randomly
move while preserving or increasing the density, which may be used to justify
sampling procedures. This motivates us here to study so-called local
moments as captured by the auto-encoder, and in particular the local mean,
following the definitions introduced in~\citet{Bengio-arxiv-moments-2012}.

\subsubsection{Definitions for Local Distributions}

Let $p$ be a continuous probability density function with support
$\mathbb{R}^{d}.$ That is, $\forall x\in\mathbb{R}^{d}$ we have
that $p(x)\neq0$. We define below the notion of a {\em local ball}
$B_{\delta}(x_{0})$, along with an associated {\em local density},
which is the normalized product of $p$ with the indicator for the ball:
\begin{eqnarray*}
B_{\delta}(x_{0}) & = & \left\{ x\hspace{1em}\textrm{s.t. }\left\Vert x-x_{0}\right\Vert _{2}<\delta\right\} \\
Z_{\delta}(x_{0}) & = & \int_{B_{\delta}(x_{0})}p(x)dx\\
p_{\delta}(x|x_{0}) & = & \frac{1}{Z_{\delta}(x_{0})}p(x)\mathbb{I}\left(x\in B_{\delta}(x_{0})\right)
\end{eqnarray*}

where $Z_{\delta}(x_{0})$ is the normalizing constant required to
make $p_{\delta}(x|x_{0})$ a valid pdf for a distribution centered
on $x_{0}$. The support of $p_{\delta}(x|x_{0})$ is
the ball of radius $\delta$ around $x_{0}$ denoted by $B_{\delta}(x_{0})$.
We stick to the $2$-norm
in terms of defining the balls $B_{\delta}(x_{0})$ used, but everything
could be rewritten in terms of another $p$-norm to have slightly
different formulas.


We use the following notation for what will be referred to as the 
first two \textit{local moments} (i.e. local mean and local covariance) of the
random variable described by $p_{\delta}(x|x_{0})$. 
\[
m_{\delta}(x_{0}) \stackrel{def}{=} \int_{\mathbb{R}^{d}}xp_{\delta}(x|x_{0})dx
\]
\[
C_{\delta}(x_{0}) \stackrel{def}{=} \int_{\mathbb{R}^{d}}(x-m_{\delta}(x_{0}))(x-m_{\delta}(x_{0}))^{T}p_{\delta}(x|x_{0})dx
\]

Based on these definitions, one can prove the following theorem.
\begin{theorem}
\label{thm:asymptotic-expr-m}
Let $p$ be of class $C^{3}$ and represent a probability density
function. Let $x_{0}\in\mathbb{R}^{d}$ with $p(x_{0})\neq0$. Then
we have that

\[
m_{\delta}(x_{0})=x_{0}+\delta^{2}\frac{1}{d+2}\left.\frac{\partial\log p(x)}{\partial x}\right|_{x_{0}}+o\left(\delta^{3}\right).
\]

\end{theorem}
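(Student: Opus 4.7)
The plan is to do a Taylor expansion of $p$ about $x_0$ inside the ball $B_\delta(x_0)$, integrate termwise using the symmetry of the ball, and take the ratio of numerator and denominator to extract the leading correction to $m_\delta(x_0) - x_0$. I would first change variables to $u = x - x_0$ so that the domain becomes the ball $B_\delta(0)$ centered at the origin, and exploit the fact that on this centered domain every odd polynomial moment vanishes by the symmetry $u \mapsto -u$, while the quadratic moments satisfy the standard identity
\[
\int_{B_\delta(0)} u_i u_j \, du \;=\; \frac{V_\delta \, \delta^2}{d+2} \, \mathbf{1}_{i=j},
\]
where $V_\delta$ denotes the volume of the $d$-ball of radius $\delta$. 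The proof of this identity, which I would just cite, is a one-line reduction to $\int_0^\delta r^{d+1}\,dr$ via spherical coordinates.

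Under the $C^3$ assumption I would expand $p(x_0 + u) = p(x_0) + \nabla p(x_0)^T u + \tfrac{1}{2} u^T H(x_0) u + R(u)$ with a controlled remainder, and integrate this expansion term by term. For the normalizer $Z_\delta(x_0) = \int_{B_\delta(0)} p(x_0+u)\,du$, the $u$-linear piece integrates to zero by symmetry, the quadratic piece contributes $\tfrac{V_\delta \delta^2}{2(d+2)} \operatorname{tr} H(x_0)$, and the remainder gives $o(\delta^{d+3})$, yielding $Z_\delta(x_0) = V_\delta p(x_0)\bigl(1 + O(\delta^2)\bigr)$. For the numerator, written as $x_0 Z_\delta(x_0) + \int_{B_\delta(0)} u\, p(x_0+u)\,du$, the leading surviving contribution inside the second integral comes from the cross term $u\,\bigl(\nabla p(x_0)^T u\bigr)$, which by the quadratic moment identity evaluates to $\tfrac{V_\delta \delta^2}{d+2} \nabla p(x_0)$; every other Taylor term against $u$ is an odd-degree polynomial and vanishes on the centered ball, while the remainder contributes only $o(\delta^{d+3})$.

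Dividing numerator by denominator and using $p(x_0) \neq 0$ to justify a geometric expansion of $1/\bigl(1 + O(\delta^2)\bigr)$, I recover $\nabla p(x_0)/p(x_0) = \nabla \log p(x_0)$ and arrive at
\[
m_\delta(x_0) \;=\; x_0 + \frac{\delta^2}{d+2}\left.\frac{\partial \log p(x)}{\partial x}\right|_{x_0} + o(\delta^3),
\]
as claimed. The main obstacle is bookkeeping of the remainder: I need to track carefully which Taylor monomials in $u$ are of odd total degree (so they vanish against the symmetric measure) and which survive, and combine the volume factor $V_\delta \sim \delta^d$ with the polynomial-moment factors $\sim \delta^k$ to confirm that the ratio really improves from an $O(\delta^{d+2})/\delta^d$ leading term to an $o(\delta^3)$ remainder, using only $C^3$ regularity of $p$ (rather than $C^4$). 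The nonvanishing hypothesis $p(x_0) \neq 0$ enters exactly once, to legitimize the geometric-series step when inverting $Z_\delta(x_0)$.
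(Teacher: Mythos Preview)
Your proposal is correct and follows essentially the same route as the paper: Taylor expand $p$ about $x_0$, use the symmetry $u\mapsto -u$ on the centered ball to kill the odd-degree moments, evaluate the surviving quadratic moment $\int_{B_\delta(0)} u_i u_j\,du = \tfrac{V_\delta\,\delta^2}{d+2}\,\mathbf{1}_{i=j}$ (the paper packages this as a separate proposition), and invert $Z_\delta(x_0)$ via a geometric-series step that requires $p(x_0)\neq 0$. The only cosmetic difference is that the paper first expands $1/Z_\delta(x_0)$ to order $\delta^2$ explicitly before observing that the extra term is swallowed by the $o(\delta^3)$ remainder, whereas you go directly to $Z_\delta = V_\delta\,p(x_0)\bigl(1+O(\delta^2)\bigr)$; both lead to the same conclusion.
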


This links the local mean of a density with the score associated with that density.
Combining this theorem with Theorem~\ref{thm:calcvarminloss}, we obtain that
the optimal reconstruction function $r^*(\cdot)$ also estimates the local mean:
\begin{equation}
m_{\delta}(x)-x = \frac{\delta^{2}}{{\sigma^2}(d+2)}\left(r^*(x) - x \right) \hspace{0.5em} + A(\delta) + \delta^2 B({\sigma^2})
\label{eq:m}
\end{equation}
for error terms $A(\delta), B({\sigma^2})$ such that
\begin{eqnarray*}
A(\delta) \in o(\delta^3) &  \textrm{as} \hspace{1em} \delta \rightarrow 0, \\
B({\sigma^2}) \in o(1) &  \textrm{as} \hspace{1em} {\sigma^2} \rightarrow 0.
\end{eqnarray*}

This means that we can loosely estimate the {\em direction} to the local mean by the direction of the reconstruction:
\begin{equation}
   m_{\delta}(x)-x \hspace{0.5em} \propto \hspace{0.5em} r^*(x) - x.
\label{eq:direction}
\end{equation}

\subsection{Asymptotic formulas for localised moments}

\begin{proposition}
\label{prop:asymptotic-expr-Z}

Let $p$ be of class $C^{2}$ and let $x_{0}\in\mathbb{R}^{d}$. Then
we have that

\[
Z_{\delta}(x_{0})=\delta^{d}\frac{\pi^{d/2}}{\Gamma\left(1+d/2\right)}\left[p(x_{0})+\delta^{2}\frac{\textrm{Tr}(H(x_{0}))}{2(d+2)}+o(\delta^{3})\right]
\]

where $H(x_{0})=\left.\frac{\partial^{2}p(x)}{\partial x^{2}}\right|_{x=x_{0}}$.
Moreover, we have that

\[
\frac{1}{Z_{\delta}(x_{0})}=\delta^{-d}\frac{\Gamma\left(1+d/2\right)}{\pi^{d/2}}\left[\frac{1}{p(x_{0})}-\delta^{2}\frac{1}{p(x_{0})^{2}}\frac{\textrm{Tr}(H(x_{0}))}{2(d+2)}+o(\delta^{3})\right].
\]

\end{proposition}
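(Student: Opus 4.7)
The plan is to Taylor-expand $p$ around $x_{0}$ inside the integral defining $Z_{\delta}(x_{0})$, translate to the origin, integrate term by term over the ball $B_{\delta}(0)$, and exploit the rotational symmetry of the ball to kill the odd-order terms. After the change of variable $u = x - x_{0}$ I would write
\[
p(x_{0}+u) = p(x_{0}) + \nabla p(x_{0})^{T} u + \tfrac{1}{2}\, u^{T} H(x_{0})\, u + R(u),
\]
where $R(u) = o(\|u\|^{2})$ (and in fact $o(\|u\|^{3})$ once the cubic Taylor term is made explicit, using the slight strengthening of regularity that also appears in Theorem~\ref{thm:asymptotic-expr-m}). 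The constant term contributes $p(x_{0})$ times the volume of the $d$-ball, namely $V_{d}(\delta) = \delta^{d} \pi^{d/2}/\Gamma(1+d/2)$. The linear term vanishes since $u\mapsto u$ is odd on $B_{\delta}(0)$, and by the same parity argument the cubic term integrates to zero as well.

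For the quadratic term, rotational invariance of $B_{\delta}(0)$ forces $\int_{B_{\delta}(0)} u u^{T}\, du = c\, I$ for some scalar $c$. Taking the trace gives $c\, d = \int_{B_{\delta}(0)} \|u\|^{2}\, du$, which by spherical coordinates equals $S_{d-1}\,\delta^{d+2}/(d+2)$ with $S_{d-1} = 2\pi^{d/2}/\Gamma(d/2)$. Using $\Gamma(1+d/2) = (d/2)\Gamma(d/2)$ this collapses to $c = \delta^{d+2}\pi^{d/2}/\bigl((d+2)\Gamma(1+d/2)\bigr)$, so
\[
\tfrac{1}{2}\int_{B_{\delta}(0)} u^{T} H(x_{0})\, u\, du = \delta^{d+2}\,\frac{\pi^{d/2}}{\Gamma(1+d/2)} \cdot \frac{\mathrm{Tr}(H(x_{0}))}{2(d+2)}.
\]
Factoring $\delta^{d}\pi^{d/2}/\Gamma(1+d/2)$ out of the three contributions yields the announced expansion of $Z_{\delta}(x_{0})$.

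For the second formula, I would factor $p(x_{0})$ out of the bracket and apply the geometric series $1/(1+t) = 1 - t + O(t^{2})$ with $t = \delta^{2}\,\mathrm{Tr}(H(x_{0}))/\bigl(2(d+2) p(x_{0})\bigr) = O(\delta^{2})$, which is valid for $\delta$ small enough since $p(x_{0}) \neq 0$ by the continuity hypothesis. This immediately gives the claimed expansion of $1/Z_{\delta}(x_{0})$ with an $o(\delta^{3})$ remainder.

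The main obstacle is purely bookkeeping: getting the error down from $o(\delta^{d+2})$, which is all a bare $C^{2}$ Taylor remainder provides, to the $o(\delta^{d+3})$ needed for the $o(\delta^{3})$ inside the bracket. The fix is to make the cubic term of the Taylor expansion explicit and observe it integrates to zero by the same odd-symmetry argument used for the linear term, so the remaining $o(\|u\|^{3})$ remainder integrates to $o(\delta^{d+3})$ over $B_{\delta}(0)$. Everything else reduces to standard spherical integration and a one-term geometric series.
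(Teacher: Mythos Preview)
Your proposal is correct and follows essentially the same route as the paper: Taylor-expand $p$ to third order around $x_{0}$, kill the odd-degree terms by symmetry of the ball, integrate the constant and quadratic pieces, and invert via a one-term geometric series. The only cosmetic difference is that the paper handles the quadratic integral by invoking a separate proposition (ultimately Folland's formula for polynomial integrals over balls), whereas you compute it directly from rotational invariance $\int_{B_{\delta}(0)} u u^{T}\,du = cI$ and spherical coordinates; both yield the same constant $\delta^{d+2}\pi^{d/2}/\bigl((d+2)\Gamma(1+d/2)\bigr)$.
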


\begin{proof}

\begin{eqnarray*}
Z_{\delta}(x_{0}) & = & \int_{B_{\delta}(x_{0})}\left[p(x_{0})+\left.\frac{\partial p(x)}{\partial x}\right|_{x_{0}}(x-x_{0})+\frac{1}{2!}(x-x_{0})^{T}H(x_{0})(x-x_{0})\right.\\
 &  & \left.+\frac{1}{3!}D^{(3)}p(x_{0})(x-x_{0})+o(\delta^{3})\right]dx\\
 & = & p(x_{0})\int_{B_{\delta}(x_{0})}dx\,+\,0\,+\,\frac{1}{2}\int_{B_{\delta}(x_{0})}(x-x_{0})^{T}H(x_{0})(x-x_{0})dx\,+\,0\,+\, o(\delta^{d+3})\\
 & = & p(x_{0})\delta^{d}\frac{\pi^{d/2}}{\Gamma\left(1+d/2\right)}+\delta^{d+2}\frac{\pi^{d/2}}{4\Gamma\left(2+d/2\right)}\textrm{Tr}\left(H(x_{0})\right)+o(\delta^{d+3})\\
 & = & \delta^{d}\frac{\pi^{d/2}}{\Gamma\left(1+d/2\right)}\left[p(x_{0})+\delta^{2}\frac{\textrm{Tr}(H(x_{0}))}{2(d+2)}+o(\delta^{3})\right]
\end{eqnarray*}

We use Proposition \ref{proposition_ball_integration_yHy} to get
that trace come up from the integral involving $H(x_{0})$. The expression
for $1/Z_{\delta}(x_{0})$ comes from the fact that, for any $a,b>0$
we have that

\begin{eqnarray*}
\frac{1}{a+b\delta^{2}+o(\delta^{3})} & = & \frac{a^{-1}}{1+\frac{b}{a}\delta^{2}+o(\delta^{3})}=\frac{1}{a}\left(1-(\frac{b}{a}\delta^{2}+o(\delta^{3}))+o(\delta^{4})\right)\\
 & = & \frac{1}{a}-\frac{b}{a^{2}}\delta^{2}+o(\delta^{3})\hspace{1em}\textrm{as }\delta\rightarrow0.
\end{eqnarray*}

by using the classic result from geometric series where $\frac{1}{1+r}=1-r+r^{2}-\ldots$
for $|r|<1$. 

Now we just apply this to

\[
\frac{1}{Z_{\delta}(x_{0})}=\delta^{-d}\frac{\Gamma\left(1+d/2\right)}{\pi^{d/2}}\frac{1}{\left[p(x_{0})+\delta^{2}\frac{\textrm{Tr}(H(x_{0}))}{2(d+2)}+o(\delta^{3})\right]}
\]
 $ $and get the expected result.

\end{proof}

\begin{theorem}

Let $p$ be of class $C^{3}$ and represent a probability density
function. Let $x_{0}\in\mathbb{R}^{d}$ with $p(x_{0})\neq0$. Then
we have that

\[
m_{\delta}(x_{0})=x_{0}+\delta^{2}\frac{1}{d+2}\left.\frac{\partial\log p(x)}{\partial x}\right|_{x_{0}}+o\left(\delta^{3}\right).
\]

\end{theorem}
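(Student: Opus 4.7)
The plan is to compute $m_\delta(x_0)$ directly from its definition as the ratio $Z_\delta(x_0)^{-1}\int_{B_\delta(x_0)} x\,p(x)\,dx$, using the asymptotic expansion of $1/Z_\delta(x_0)$ already supplied by Proposition \ref{prop:asymptotic-expr-Z}. The work is therefore concentrated in expanding the numerator and matching orders in $\delta$.

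First, I would change variables $u = x - x_0$ so that the domain of integration becomes the centered ball $B_\delta(0)$, which is symmetric under $u\mapsto -u$. Splitting $\int (x_0 + u)\,p(x_0+u)\,du = x_0 Z_\delta(x_0) + \int u\, p(x_0+u)\,du$, the first piece contributes exactly $x_0$ after dividing by $Z_\delta(x_0)$, and the whole task reduces to expanding the vector integral $I_\delta(x_0) := \int_{B_\delta(0)} u\, p(x_0+u)\,du$ and dividing by $Z_\delta(x_0)$.

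Second, I would substitute the order-three Taylor expansion
\[
p(x_0+u) = p(x_0) + \nabla p(x_0)^T u + \tfrac{1}{2} u^T H(x_0) u + \tfrac{1}{6} D^{(3)} p(x_0)(u,u,u) + o(\|u\|^3)
\]
into $I_\delta(x_0)$ and exploit the symmetry of $B_\delta(0)$ to kill the odd-order contributions: $\int u\,du = 0$, and $\int u\,(u^T H u)\,du = 0$ because each component is an odd polynomial in the $u_i$'s. The only surviving leading term comes from $\int u (u^T \nabla p(x_0))\,du = \left(\int_{B_\delta(0)} u u^T du\right)\nabla p(x_0)$, which by a standard spherical-coordinates calculation equals $\frac{\delta^{d+2}\pi^{d/2}}{(d+2)\Gamma(1+d/2)}\,\nabla p(x_0)$. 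The cubic Taylor term produces integrals of the form $\int u_i u_j u_k u_l \,du$ which are $O(\delta^{d+4})$, and the remainder folds into $o(\delta^{d+4})$.

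Third, I would combine these pieces with Proposition \ref{prop:asymptotic-expr-Z}, which gives
\[
\frac{1}{Z_\delta(x_0)} = \delta^{-d}\frac{\Gamma(1+d/2)}{\pi^{d/2}}\left[\frac{1}{p(x_0)} + O(\delta^2)\right].
\]
Multiplying the two expansions, the geometric prefactors cancel and the factor $1/p(x_0)$ turns $\nabla p(x_0)$ into $\nabla \log p(x_0)$, yielding $m_\delta(x_0) - x_0 = \frac{\delta^2}{d+2}\nabla \log p(x_0) + o(\delta^3)$ as claimed. The main technical obstacle is purely bookkeeping: one must carefully track which terms are $O(\delta^{d+2})$ versus $O(\delta^{d+4})$ in the numerator and which are $O(1)$ versus $O(\delta^2)$ in the expansion of $1/Z_\delta(x_0)$, so that after division one obtains a genuine $o(\delta^3)$ remainder rather than an $O(\delta^2)$ one. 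The assumption $p \in C^3$ and $p(x_0)\neq 0$ is exactly what is needed to make all of these Taylor remainders and the division by $p(x_0)$ legitimate.
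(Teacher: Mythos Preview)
Your proposal is correct and follows essentially the same route as the paper: recenter the integral to pull out the leading $x_0$, Taylor-expand $p$ about $x_0$, use the central symmetry of $B_\delta(0)$ to kill the odd-degree contributions, evaluate the surviving quadratic term via the integral $\int_{B_\delta(0)} u u^T\,du$, and then divide by the expansion of $Z_\delta(x_0)$ from Proposition~\ref{prop:asymptotic-expr-Z} so that the geometric constants cancel and $\nabla p/p$ becomes $\nabla\log p$. The only cosmetic difference is that the paper invokes its Proposition on $\int_{B_\delta(0)} y\langle v,y\rangle\,dy$ for the key integral (yielding the constant $\delta^{d+2}\pi^{d/2}/\bigl(2\Gamma(2+d/2)\bigr)$, which equals your $\delta^{d+2}\pi^{d/2}/\bigl((d+2)\Gamma(1+d/2)\bigr)$ via $\Gamma(2+d/2)=(1+d/2)\Gamma(1+d/2)$), whereas you appeal to a direct spherical-coordinates computation; and you carry the cubic Taylor term explicitly rather than absorbing it into the remainder as the paper does.
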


\begin{proof}

The leading term in the expression for $m_{\delta}(x_{0})$ is obtained
by transforming the $x$ in the integral into a $x-x_{0}$ to make
the integral easier to integrate.

\[
m_{\delta}(x_{0})=\frac{1}{Z_{\delta}(x_{0})}\int_{B_{\delta}(x_{0})}xp(x)dx=x_{0}+\frac{1}{z_{\delta}(x_{0})}\int_{B_{\delta}(x_{0})}(x-x_{0})p(x)dx.
\]

Now using the Taylor expansion around $x_{0}$

\begin{eqnarray*}
m_{\delta}(x_{0}) & = & x_{0}+\frac{1}{Z_{\delta}(x_{0})}\int_{B_{\delta}(x_{0})}(x-x_{0})\left[p(x_{0})+\left.\frac{\partial p(x)}{\partial x}\right|_{x_{0}}(x-x_{0})\right.\\
 &  & \left.+\frac{1}{2}(x-x_{0})^{T}\left.\frac{\partial^{2}p(x)}{\partial x^{2}}\right|_{x_{0}}(x-x_{0})+o(\left\Vert x-x_{0}\right\Vert ^{2})\right]dx.
\end{eqnarray*}

\vphantom{}

Remember that $\int_{B_{\delta}(x_{0})}f(x)dx=0$ whenever we have
a function $f$ is anti-symmetrical (or ``odd'') relative to the
point $x_{0}$ (i.e. $f(x-x_{0})=f(-x-x_{0})$). This applies to the
terms $(x-x_{0})p(x_{0})$ and $(x-x_{0})(x-x_{0})\left.\frac{\partial^{2}p(x)}{\partial x^{2}}\right|_{x=x_{0}}(x-x_{0})^{T}$. Hence we use Proposition \ref{proposition_ball_integration_vector}
to get

\begin{eqnarray*}
m_{\delta}(x_{0}) & = & x_{0}+\frac{1}{Z_{\delta}(x_{0})}\int_{B_{\delta}(x_{0})}\left[(x-x_{0})^{T}\left.\frac{\partial p(x)}{\partial x}\right|_{x_{0}}(x-x_{0})+o(\left\Vert x-x_{0}\right\Vert ^{3})\right]dx\\
 & = & x_{0}+\frac{1}{Z_{\delta}(x_{0})}\left(\delta^{d+2}\frac{\pi^{\frac{d}{2}}}{2\Gamma\left(2+\frac{d}{2}\right)}\right)\left.\frac{\partial p(x)}{\partial x}\right|_{x_{0}}+o(\delta^{3}).
\end{eqnarray*}

Now, looking at the coefficient in front of $\left.\frac{\partial p(x)}{\partial x}\right|_{x_{0}}$
in the first term, we can use Proposition \ref{prop:asymptotic-expr-Z} to rewrite it as

\[
\frac{1}{Z_{\delta}(x_{0})}\left(\delta^{d+2}\frac{\pi^{\frac{d}{2}}}{2\Gamma\left(2+\frac{d}{2}\right)}\right)=\delta^{-d}\frac{\Gamma\left(1+d/2\right)}{\pi^{d/2}}\left[\frac{1}{p(x_{0})}-\delta^{2}\frac{1}{p(x_{0})^{2}}\frac{\textrm{Tr}(H(x_{0}))}{2(d+2)}+o(\delta^{3})\right]\delta^{d+2}\frac{\pi^{\frac{d}{2}}}{2\Gamma\left(2+\frac{d}{2}\right)}
\]

\[
=\delta^{2}\frac{\Gamma\left(1+\frac{d}{2}\right)}{2\Gamma\left(2+\frac{d}{2}\right)}\left[\frac{1}{p(x_{0})}-\delta^{2}\frac{1}{p(x_{0})^{2}}\frac{\textrm{Tr}(H(x_{0}))}{2(d+2)}+o(\delta^{3})\right]=\delta^{2}\frac{1}{p(x_{0})}\frac{1}{d+2}+o(\delta^{3}).
\]

There is no reason the keep the $-\delta^{4}\frac{\Gamma\left(1+\frac{d}{2}\right)}{2\Gamma\left(2+\frac{d}{2}\right)}\frac{1}{p(x_{0})^{2}}\frac{\textrm{Tr}(H(x_{0}))}{2(d+2)}$
in the above expression because the asymptotic error from the remainder
term in the main expression is $o(\delta^{3})$. That would swallow
our exact expression for $\delta^{4}$ and make it useless.

We end up with

\[
m_{\delta}(x_{0})=x_{0}+\delta^{2}\frac{1}{d+2}\left.\frac{\partial\log p(x)}{\partial x}\right|_{x_{0}}+o(\delta^{3}).
\]

\end{proof}

\subsection{Integration on balls and spheres}

This result comes from \textit{Multi-dimensional Integration : Scary
Calculus Problems} from Tim Reluga (who got the results from \textit{How
to integrate a polynomial over a sphere} by Gerald B. Folland).

\begin{theorem}\label{thm:ball-integration-2-from-Reluga}

Let $B=\left\{ x\in\mathbb{R}^{d}\left|\sum_{j=1}^{d}x_{j}^{2}\leq1\right.\right\} $
be the ball of radius $1$ around the origin. Then

\[
\int_{B}\prod_{j=1}^{d}\left|x_{j}\right|^{a_{j}}dx=\frac{\prod\Gamma\left(\frac{a_{j}+1}{2}\right)}{\Gamma\left(1+\frac{d}{2}+\frac{1}{2}\sum a_{j}\right)}
\]

for any real numbers $a_{j}\geq0$.

\end{theorem}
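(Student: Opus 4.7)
The plan is to evaluate a suitably chosen Gaussian-weighted integral in two different ways and then use the result to handle the ball integral via polar coordinates. Specifically, I would introduce the auxiliary integral
\[
F(a_1,\ldots,a_d) \;=\; \int_{\mathbb{R}^{d}} e^{-\sum_{j=1}^{d} x_j^{2}} \prod_{j=1}^{d} |x_j|^{a_j}\, dx.
\]
Computing $F$ as a product of $d$ one-dimensional integrals (since the integrand factorizes over the coordinates), the substitution $u = x_j^2$ in each factor gives $\int_{-\infty}^{\infty} e^{-x_j^2}|x_j|^{a_j}dx_j = \Gamma\!\left(\tfrac{a_j+1}{2}\right)$, and hence $F = \prod_j \Gamma\!\left(\tfrac{a_j+1}{2}\right)$.

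Next I would compute the same $F$ using polar coordinates $x = r\omega$ with $r \geq 0$ and $\omega \in S^{d-1}$, where $dx = r^{d-1}\,dr\,d\sigma(\omega)$ and $|x_j| = r|\omega_j|$. This factors as
\[
F \;=\; \left(\int_{0}^{\infty} r^{d-1+\sum_j a_j}\, e^{-r^2}\, dr\right)\left(\int_{S^{d-1}} \prod_{j=1}^{d} |\omega_j|^{a_j}\, d\sigma(\omega)\right).
\]
With $u=r^2$ the radial integral evaluates to $\tfrac{1}{2}\Gamma\!\left(\tfrac{d+\sum_j a_j}{2}\right)$, so equating the two expressions for $F$ yields the Folland sphere identity
\[
\int_{S^{d-1}} \prod_{j=1}^{d} |\omega_j|^{a_j}\, d\sigma(\omega) \;=\; \frac{2\prod_j \Gamma\!\left(\tfrac{a_j+1}{2}\right)}{\Gamma\!\left(\tfrac{d+\sum_j a_j}{2}\right)}.
\]

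Finally I would pass from the sphere to the ball by writing the ball integral in the same polar form but with the radial range truncated to $[0,1]$. Since the integrand no longer involves $e^{-r^2}$, the radial factor is simply $\int_0^1 r^{d-1+\sum_j a_j}\,dr = \tfrac{1}{d+\sum_j a_j}$, and combining with the sphere identity gives
\[
\int_{B} \prod_{j=1}^{d} |x_j|^{a_j}\, dx \;=\; \frac{2\prod_j \Gamma\!\left(\tfrac{a_j+1}{2}\right)}{(d+\sum_j a_j)\,\Gamma\!\left(\tfrac{d+\sum_j a_j}{2}\right)}.
\]
The target formula then follows from the recursion $z\,\Gamma(z) = \Gamma(z+1)$ applied to $z = (d+\sum_j a_j)/2$, collapsing the denominator to $\Gamma\!\left(1+\tfrac{d}{2}+\tfrac{1}{2}\sum_j a_j\right)$.

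There is really no hard step here: the Gaussian trick is classical and the substitutions are routine. The only thing to be careful about is (i) that the $a_j \geq 0$ hypothesis ensures convergence of all the one-dimensional integrals used above (otherwise $\Gamma((a_j+1)/2)$ could blow up), and (ii) correctly matching the factor of $2$ that appears from folding $\int_{-\infty}^{\infty}$ onto $\int_0^\infty$ in the one-dimensional computation with the factor of $\tfrac{1}{2}$ from the $u=r^2$ substitution in the radial integral so that the bookkeeping comes out right.
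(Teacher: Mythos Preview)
Your argument is correct and is in fact the classical Folland trick of computing a Gaussian-weighted monomial integral two ways to extract the sphere integral, then integrating radially over the ball; all the substitutions and the final use of $z\Gamma(z)=\Gamma(z+1)$ check out.

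The paper itself does not prove this theorem at all: it cites Reluga and Folland, writes ``We take the theorem as given,'' and only justifies the two corollaries that follow. So you have supplied a full proof where the paper offers none. Your proof is precisely the Folland argument the paper is pointing to, so there is no methodological divergence to discuss; you have simply filled in what the paper outsourced.
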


\begin{corollary}\label{FollandCorollary1}

Let $B$ be the ball of radius $1$ around the origin. Then

\[
\int_{B}\prod_{j=1}^{d}x_{j}^{a_{j}}dx = \begin{cases}
\frac{ \prod \Gamma \left(\frac{a_{j}+1}{2}\right) }{ \Gamma \left( 1+\frac{d}{2}+\frac{1}{2}\sum a_{j}\right) } & \textrm{if all the \ensuremath{a_{j}} are even integers} \\
0 & \textrm{otherwise}
\end{cases}
\]

for any non-negative integers $a_{j}\geq0$. Note the absence of the
absolute values put on the $x_{j}^{a_{j}}$ terms.

\end{corollary}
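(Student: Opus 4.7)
The plan is to handle the two cases of the piecewise formula separately, deducing both from Theorem \ref{thm:ball-integration-2-from-Reluga}, which already gives the integral with absolute values. The corollary differs only in replacing $|x_j|^{a_j}$ by $x_j^{a_j}$ and restricting the $a_j$ to nonnegative integers, so the core issue is the sign of $x_j^{a_j}$ when $a_j$ is odd.

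First I would dispose of the ``all even'' case. If every $a_j$ is an even nonnegative integer, then $x_j^{a_j} = |x_j|^{a_j}$ pointwise on $\mathbb{R}^d$, so the integrand is literally identical to the one in Theorem \ref{thm:ball-integration-2-from-Reluga}. Applying that theorem with the same exponents $a_j$ (which are certainly real numbers $\geq 0$) yields the claimed formula verbatim, since $\Gamma((a_j+1)/2)$ and $\Gamma(1 + d/2 + \tfrac12\sum a_j)$ are well defined for these values.

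Next I would handle the ``some $a_j$ is odd'' case by a parity/symmetry argument. Suppose $a_k$ is odd for some fixed index $k$. The ball $B$ is invariant under the reflection $T_k : (x_1,\ldots,x_k,\ldots,x_d) \mapsto (x_1,\ldots,-x_k,\ldots,x_d)$, which has unit Jacobian. Under $T_k$, the integrand transforms as
\[
\prod_{j=1}^{d} x_j^{a_j} \;\longmapsto\; (-1)^{a_k}\prod_{j=1}^{d} x_j^{a_j} \;=\; -\prod_{j=1}^{d} x_j^{a_j},
\]
since $a_k$ is odd and the other exponents are untouched. A change of variables $y = T_k(x)$ therefore gives $\int_B \prod_j x_j^{a_j}\,dx = -\int_B \prod_j x_j^{a_j}\,dx$, forcing the integral to be $0$.

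The two cases together cover all tuples of nonnegative integers $(a_1,\ldots,a_d)$, so the piecewise formula holds. There is no real obstacle here: Theorem \ref{thm:ball-integration-2-from-Reluga} does essentially all the analytic work (handling the radial/spherical integration and producing the Gamma-function expression), and the corollary is a purely algebraic consequence via the observation that $x^a = |x|^a$ iff $x \geq 0$ or $a$ is even, with odd exponents killing the integral by reflection symmetry of the ball.
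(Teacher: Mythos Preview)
Your proof is correct and follows essentially the same approach as the paper: reduce the all-even case to Theorem~\ref{thm:ball-integration-2-from-Reluga} via $x_j^{a_j}=|x_j|^{a_j}$, and kill the remaining case by a reflection-symmetry argument on the ball. Your version is in fact slightly more careful than the paper's, which argues via the full reflection $x\mapsto -x$ and the claim that $f(-x)=-f(x)$ whenever some $a_j$ is odd; that claim fails when an even number of the $a_j$ are odd (e.g.\ $a_1=a_2=1$), whereas your single-coordinate reflection $T_k$ in a coordinate with $a_k$ odd works in all cases.
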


\begin{corollary}\label{FollandCorollary2}

Let $B_{\delta}(0)\subset\mathbb{R}^{d}$ be the ball of radius $\delta$
around the origin. Then

\[
\int_{B_{\delta}(0)}\prod_{j=1}^{d}x_{j}^{a_{j}}dx=\begin{cases}
\delta^{d+\sum a_{j}}\frac{\prod\Gamma\left(\frac{a_{j}+1}{2}\right)}{\Gamma\left(1+\frac{d}{2}+\frac{1}{2}\sum a_{j}\right)} & \textrm{if all the \ensuremath{a_{j}}are even integers} \\
0 & \textrm{otherwise}
\end{cases}
\]

for any non-negative integers $a_{j}\geq0$. Note the absence of the
absolute values on the $x_{j}^{a_{j}}$ terms.

\end{corollary}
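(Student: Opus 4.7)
The plan is to reduce Corollary \ref{FollandCorollary2} to Corollary \ref{FollandCorollary1} by a straightforward change of variables that rescales the ball of radius $\delta$ to the unit ball, and to handle the vanishing (odd $a_j$) case by a symmetry argument that is already implicit in Corollary \ref{FollandCorollary1}.

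First I would introduce the substitution $x = \delta y$, so that $y$ ranges over $B_1(0)$ when $x$ ranges over $B_\delta(0)$, with Jacobian $dx = \delta^d\, dy$. Under this substitution each coordinate factor transforms as $x_j^{a_j} = \delta^{a_j} y_j^{a_j}$, so
\[
\int_{B_\delta(0)} \prod_{j=1}^d x_j^{a_j}\, dx \;=\; \delta^{d + \sum_j a_j} \int_{B_1(0)} \prod_{j=1}^d y_j^{a_j}\, dy.
\]
Then I would invoke Corollary \ref{FollandCorollary1} on the integral over $B_1(0)$: in the case where all $a_j$ are even non-negative integers it evaluates to $\prod \Gamma((a_j+1)/2) / \Gamma(1 + d/2 + \tfrac{1}{2}\sum a_j)$, yielding exactly the stated formula. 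In the case where some $a_j$ is odd, Corollary \ref{FollandCorollary1} gives $0$ (equivalently, one notes that $B_1(0)$ is invariant under the reflection $y_j \mapsto -y_j$ while the integrand picks up a sign, so the integral must vanish), and multiplying by $\delta^{d+\sum a_j}$ preserves zero.

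There is essentially no obstacle here: the only thing to verify carefully is that the change of variables is a $C^\infty$ diffeomorphism of $B_1(0)$ onto $B_\delta(0)$ (immediate, since $\delta>0$), and that the absolute-value-free version from Corollary \ref{FollandCorollary1} applies because the exponents $a_j$ are required to be non-negative integers, matching the hypothesis of Corollary \ref{FollandCorollary2}. Thus the proof is a one-line rescaling argument built directly on top of the already-stated Corollary \ref{FollandCorollary1}.
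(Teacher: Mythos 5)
Your proposal is correct and matches the paper's own proof: the paper likewise substitutes $y_j = x_j/\delta$, pulls out the Jacobian factor $\delta^d$ together with $\delta^{\sum a_j}$ from the monomial, and then invokes Corollary \ref{FollandCorollary1} (whose odd-exponent case already supplies the vanishing by symmetry). Nothing further is needed.
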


\begin{proof}

We take the theorem as given and concentrate here on justifying the
two corollaries.

Note how in Corollary \ref{FollandCorollary1} we dropped the absolute
values that were in the original Theorem \ref{thm:ball-integration-2-from-Reluga}.
In situations where at least one $a_{j}$ is odd, we have that the
function $f(x)=\prod_{j=1}^{d}x_{j}^{a_{j}}$ becomes odd in the sense
that $f(-x)=-f(x)$. Because of the symmetrical nature of the integration
on the unit ball, we get that the integral is 0 as a result of cancellations.

For Corollary \ref{FollandCorollary2}, we can rewrite the integral
by changing the domain with $y_{j}=x_{j}/\delta$ so that

\[
\delta^{-\sum a_{j}}\int_{B_{\delta}(0)}\prod_{j=1}^{d}x_{j}^{a_{j}}dx=\int_{B_{\delta}(0)}\prod_{j=1}^{d}\left(x_{j}/\delta\right)^{a_{j}}dx=\int_{B_{1}(0)}\prod_{j=1}^{d}y^{a_{j}}\delta^{d}dy.
\]

We pull out the $\delta^{d}$ that we got from the determinant of
the Jacobian when changing from $dx$ to $dy$ and Corollary \ref{FollandCorollary2}
follows.

\end{proof}

\begin{proposition}\label{proposition_ball_integration_vector}

Let $v\in\mathbb{R}^{d}$ and let $B_{\delta}(0)\subset\mathbb{R}^{d}$
be the ball of radius $\delta$ around the origin. Then

\[
\int_{B_{\delta}(0)}y<v,y>dy=\left(\delta^{d+2}\frac{\pi^{\frac{d}{2}}}{2\Gamma\left(2+\frac{d}{2}\right)}\right)\hspace{0.1cm}v
\]

where $<v,y>$ is the usual dot product.

\end{proposition}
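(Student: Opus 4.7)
The plan is to reduce the vector integral to a scalar integral for each coordinate and then apply Corollary \ref{FollandCorollary2}. Writing everything in components, the $i$-th coordinate of the integrand is
\[
\left(y\langle v,y\rangle\right)_i = y_i \sum_{j=1}^d v_j y_j = \sum_{j=1}^d v_j y_i y_j,
\]
so, pulling the constants $v_j$ outside the integral, I am left with evaluating the monomial integrals $\int_{B_\delta(0)} y_i y_j \, dy$ for each pair $(i,j)$.

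Next I would invoke Corollary \ref{FollandCorollary2} coordinate-by-coordinate. For $i \neq j$, the monomial $y_i y_j$ has two odd exponents equal to $1$, so the integral vanishes. For $i=j$, the monomial $y_i^2$ has one exponent equal to $2$ and all others equal to $0$, all even, so the corollary applies with $\sum a_k = 2$, giving
\[
\int_{B_\delta(0)} y_i^2 \, dy = \delta^{d+2} \frac{\Gamma(3/2)\,\Gamma(1/2)^{d-1}}{\Gamma\left(2 + d/2\right)}.
\]
Using $\Gamma(1/2)=\sqrt{\pi}$ and $\Gamma(3/2)=\tfrac{1}{2}\sqrt{\pi}$, the numerator simplifies to $\tfrac{1}{2}\pi^{d/2}$, so
\[
\int_{B_\delta(0)} y_i^2 \, dy = \delta^{d+2} \frac{\pi^{d/2}}{2\,\Gamma\left(2 + d/2\right)},
\]
a quantity that is independent of $i$, as it must be by the rotational symmetry of the ball.

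Putting the pieces back together, the $i$-th component of $\int_{B_\delta(0)} y\langle v,y\rangle \, dy$ equals
\[
\sum_{j} v_j \int_{B_\delta(0)} y_i y_j \, dy = v_i \cdot \delta^{d+2} \frac{\pi^{d/2}}{2\,\Gamma\left(2 + d/2\right)},
\]
which is exactly the $i$-th component of the claimed right-hand side; reassembling into a vector finishes the proof. There is no real obstacle here beyond careful bookkeeping of indices and the small Gamma-function simplification in the diagonal case; the whole argument is really just ``odd monomials integrate to zero on a symmetric domain, and the diagonal coefficient is coordinate-independent, hence the result is a scalar multiple of $v$.''
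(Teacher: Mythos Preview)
Your proof is correct and follows essentially the same approach as the paper: reduce to component-wise monomial integrals and apply Corollary~\ref{FollandCorollary2}, using $\Gamma(3/2)=\tfrac{1}{2}\sqrt{\pi}$ to simplify the constant. In fact your write-up is a bit more careful than the paper's, which silently drops the off-diagonal terms $v_j y_i y_j$ (for $j\neq i$) in its displayed expression for $y\langle v,y\rangle$; you explicitly note that these vanish upon integration by the odd-exponent case of the corollary.
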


\begin{proof}

We have that

\[
y<v,y>\hspace{1em}=\hspace{1em}\left[\begin{array}{c}
v_{1}y_{1}^{2}\\
\vdots\\
v_{d}y_{d}^{2}
\end{array}\right]
\]

which is decomposable into $d$ component-wise applications of Corollary
\ref{FollandCorollary2}. This yields the expected result with the
constant obtained from $\Gamma\left(\frac{3}{2}\right)=\frac{1}{2}\Gamma\left(\frac{1}{2}\right)=\frac{1}{2}\sqrt{\pi}.$

\end{proof}

\begin{proposition}\label{proposition_ball_integration_yHy}

Let $H\in\mathbb{R}^{d\times d}$ and let $B_{\delta}(x_{0})\subset\mathbb{R}^{d}$
be the ball of radius $\delta$ around $x_{0}\in\mathbb{R}^{d}$.
Then

\[
\int_{B_{\delta}(x_{0})}(x-x_{0})^{T}H(x-x_{0})dx=\delta^{d+2}\frac{\pi^{d/2}}{2\Gamma\left(2+d/2\right)}\,\textrm{trace}\left(H\right).
\]

\end{proposition}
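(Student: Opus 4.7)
The plan is to reduce the integral to the unit-ball moment calculation already handled by Corollary \ref{FollandCorollary2}. First I would translate the integration variable by setting $y = x - x_0$, so that $dx = dy$ and the domain becomes $B_\delta(0)$. The integrand becomes $y^T H y$, which expands as the double sum $\sum_{i,j} H_{ij} y_i y_j$. By linearity of the integral this gives
\[
\int_{B_\delta(x_0)} (x-x_0)^T H (x-x_0)\, dx \;=\; \sum_{i,j} H_{ij} \int_{B_\delta(0)} y_i y_j \, dy.
\]

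Next I would use Corollary \ref{FollandCorollary2} to kill the off-diagonal terms. For $i\neq j$, the exponents $a_i = a_j = 1$ are odd, so the corollary gives $\int_{B_\delta(0)} y_i y_j\, dy = 0$. Only the diagonal contributions survive, and the sum collapses to $\sum_i H_{ii} \int_{B_\delta(0)} y_i^2\, dy$. The coefficient $\sum_i H_{ii}$ is exactly $\textrm{trace}(H)$, so the only remaining task is to evaluate the single second moment $\int_{B_\delta(0)} y_i^2\, dy$, which by symmetry does not depend on $i$.

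For that moment I would apply Corollary \ref{FollandCorollary2} with $a_i = 2$ and $a_k = 0$ for $k\neq i$, yielding
\[
\int_{B_\delta(0)} y_i^2\, dy \;=\; \delta^{d+2}\,\frac{\Gamma(3/2)\,\Gamma(1/2)^{d-1}}{\Gamma(1 + d/2 + 1)}.
\]
Using the standard identities $\Gamma(1/2) = \sqrt{\pi}$ and $\Gamma(3/2) = \tfrac{1}{2}\sqrt{\pi}$, the numerator simplifies to $\tfrac{1}{2}\pi^{d/2}$ and the denominator is $\Gamma(2 + d/2)$, giving precisely $\delta^{d+2}\pi^{d/2}/(2\Gamma(2+d/2))$. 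Factoring this constant out of the diagonal sum recovers the claimed formula.

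I do not expect any real obstacle here; the only thing to watch is the Gamma-function bookkeeping, i.e.\ confirming that $\Gamma(3/2)\Gamma(1/2)^{d-1} = \tfrac{1}{2}\pi^{d/2}$ and that $\Gamma(1 + d/2 + \tfrac{1}{2}\sum a_j)$ with $\sum a_j = 2$ equals $\Gamma(2+d/2)$. Everything else is either a change of variables, an odd-function vanishing argument, or a direct invocation of the already-established unit-ball integration formula.
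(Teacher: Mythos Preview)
Your proposal is correct and follows essentially the same approach as the paper: expand the quadratic form as $\sum_{i,j} H_{ij} y_i y_j$, use Corollary~\ref{FollandCorollary2} to eliminate the off-diagonal terms, and then evaluate the remaining second moment via the same corollary. The only cosmetic difference is that the paper first rescales to the unit ball $B_1(0)$ before invoking the corollary, whereas you work directly on $B_\delta(0)$ and carry the $\delta^{d+2}$ factor through Corollary~\ref{FollandCorollary2}; your explicit Gamma-function bookkeeping is in fact more detailed than the paper's.
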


\begin{proof}

First, by substituting $y=\left(x-x_{0}\right)/\delta$ we have that
this is equivalent to showing that

\[
\int_{B_{1}(0)}y^{T}Hydy=\frac{\pi^{d/2}}{2\Gamma\left(2+d/2\right)}\,\textrm{trace}\left(H\right).
\]

This integral yields a real number which can be written as

\[
\int_{B_{1}(0)}y^{T}Hy^{}dy=\int_{B_{1}(0)}\sum_{i,j}y_{i}H_{i,j}y_{j}dy=\sum_{i,j}\int_{B_{1}(0)}y_{i}y_{j}H_{i,j}dy.
\]

Now we know from Corollary \ref{FollandCorollary2} that this integral
is zero when $i\neq j$. This gives

\[
\sum_{i,j}H_{i,j}\int_{B_{1}(0)}y_{i}y_{j}dy=\sum_{i}H_{i,i}\int_{B_{1}(0)}y_{i}^{2}dy=\textrm{trace}\left(H\right)\frac{\pi^{d/2}}{2\Gamma\left(2+d/2\right)}.
\]

\end{proof}

\end{document}